\documentclass[12pt]{article}
\usepackage[left=2.5cm,
right=2.5cm,
top=2.3cm,
bottom=2.3cm,
headheight=20pt,
headsep=10pt,
footskip=25pt,
letterpaper]{geometry}
\usepackage[utf8]{inputenc}
\usepackage[T1]{fontenc}
\usepackage[english]{babel}
\usepackage{url}
\usepackage{amsfonts}
\usepackage{nicefrac}
\usepackage{microtype}
\usepackage{amsmath}
\usepackage{amssymb}
\usepackage{mathtools}
\usepackage{subcaption}
\usepackage{caption}
\usepackage{ragged2e}
\usepackage{stackengine}
\usepackage{etoolbox}
\usepackage{xspace}
\usepackage{xpatch}
\usepackage{cuted}
\usepackage{enumerate}
\usepackage{xstring}
\usepackage{natbib}
\usepackage{setspace}
\usepackage{makecell}
\usepackage{graphicx}
\usepackage{changepage}
\usepackage{enumitem}
\usepackage{eqparbox}
\usepackage{wrapfig}
\usepackage[hang,flushmargin,bottom,symbol]{footmisc}
\usepackage[useregional=numeric]{datetime2}
\usepackage{pifont}
\usepackage{listings}
\usepackage{environ}
\usepackage{tabularray}
\usepackage{titlesec}
\usepackage{titletoc}
\usepackage{afterpage}
\usepackage{fancyhdr}
\usepackage{trimclip}
\usepackage[colorlinks]{hyperref}
\usepackage{xurl}
\usepackage[capitalise,noabbrev,nameinlink]{cleveref}
\usepackage{lineno}
\usepackage{float}
\usepackage{booktabs}
\usepackage{multirow}
\usepackage{algorithm}
\usepackage{algorithmic}
\usepackage[table,xcdraw]{xcolor}
\usepackage{rotating}

\usepackage{amsthm}
\newtheorem{theorem}{Theorem}
\newtheorem{assumption}{Assumption}
\newtheorem{lemma}{Lemma}

\newtheorem{definition}{Definition}
\newtheorem{remark}{Remark}

\fancypagestyle{first}{\fancyfoot[R]{\small\thepage}}

\setlist[itemize]{leftmargin=1em,itemsep=0ex,topsep=0ex}
\titlespacing*{\paragraph}{0pt}{0ex plus .1ex}{1ex}
\titlespacing*{\section}{0ex}{2.3ex plus .3ex minus .0ex}{.6ex plus .3ex minus .2ex}
\titlespacing*{\subsection}{0ex}{1.5ex plus .3ex minus .5ex}{.4ex plus .2ex minus .1ex}
\titlespacing*{\subsubsection}{0ex}{1.2ex plus .3ex minus .3ex}{.3ex plus .2ex minus .2ex}

\xapptocmd\normalsize{%
\abovedisplayskip=.8em plus .2em minus .2em
\belowdisplayskip=.6em plus .1em minus .1em
\abovedisplayshortskip=.8em plus .2em minus .2em
\belowdisplayshortskip=.6em plus .1em minus .1em
}{}{}

\setcitestyle{numbers}
\renewcommand{\cite}[1]{\citep{#1}}

\creflabelformat{equation}{#2\textup{#1}#3}
\crefname{algocf}{Algorithm}{Algorithms}
\Crefname{algocf}{Algorithm}{Algorithms}

\usepackage{bibunits}
\defaultbibliographystyle{unsrtnat}
\defaultbibliography{references}

\definecolor{mydarkblue}{rgb}{0.0,0.15,0.7}
\hypersetup{%
colorlinks=true,
linkcolor=mydarkblue,
citecolor=mydarkblue,
filecolor=mydarkblue,
urlcolor=mydarkblue}

\usepackage{bm}
\newcommand{\bSigma}{\mathbf{\Sigma}}
\newcommand{\bLambda}{\mathbf{\Lambda}}
\newcommand{\btheta}{\bm{\theta}}
\newcommand{\bomega}{\bm{\omega}}

\newcommand{\bphi}{\bm{\phi}}
\newcommand{\bx}{\mathbf{x}}
\newcommand{\bb}{\mathbf{b}}
\newcommand{\bg}{\mathbf{g}}
\newcommand{\bh}{\mathbf{h}}

\newcommand{\bs}{\mathbf{s}}

\newcommand{\bz}{\mathbf{z}}
\newcommand{\bo}{\mathbf{0}}
\newcommand{\bA}{\mathbf{A}}

\newcommand{\bW}{\mathbf{W}}
\newcommand{\bH}{\mathbf{H}}

\newcommand{\bI}{\mathbf{I}}

\usepackage{tcolorbox}
\usepackage{fvextra}
\usepackage{xcolor}
\definecolor{darkerlogocolor}{RGB}{151, 133, 245}  
\newtcolorbox{apxtcolorbox}[1][]{colframe=darkerlogocolor, colback=darkerlogocolor!4!white, title=#1}

\usepackage{etoc}

\title{\vspace*{-2.5ex}\bfseries CASCADE: Case-Based Continual Adaptation for Large Language Models During Deployment}

\date{}

\author{
Siyuan Guo\textsuperscript{a,b,c,\#}\rlap{,}\enskip
Yali Du\textsuperscript{d,e,$\dagger$}\rlap{,}\enskip
Hechang Chen\textsuperscript{a,b}\rlap{,}\enskip
Yi Chang\textsuperscript{a,b,c,$\dagger$}\rlap{,}\enskip
Jun Wang\textsuperscript{f,$\dagger$}}

\begin{document}
\pagestyle{fancy}

\etocdepthtag.toc{main}

\vspace{-4ex}
\maketitle
\thispagestyle{first}

\enlargethispage{1.5\baselineskip}
{\renewcommand\thefootnote{}\footnote{
\textsuperscript{\#}This work is done during Siyuan Guo's visit to Yali  Du and Jun Wang at King's College London and UCL. \\
\textsuperscript{$\dagger$}Corresponding Authors. \\
\textsuperscript{a}School of Artificial Intelligence, Jilin University.\enskip
\textsuperscript{b}Engineering Research Center of Knowledge-Driven Human-Machine Intelligence, Jilin University.\enskip
\textsuperscript{c}International Center of Future
Science, Jilin University.\enskip
\textsuperscript{d}Department of Informatics, King's College London.\enskip
\textsuperscript{e}The Alan Turing Institute.\enskip
\textsuperscript{f}AI Centre, Department of Computer Science, UCL. \\
Email: \texttt{guosyjlu@gmail.com}; \texttt{yali.du@kcl.ac.uk}; \texttt{chenhc@jlu.edu.cn}; \texttt{yichang@jlu.edu.cn}; \texttt{jun.wang@cs.ucl.ac.uk}
}}
\addtocounter{footnote}{-1}

%\linenumbers

\vspace{-6ex}
\begin{center}\bfseries
\vspace{-.5ex}
Abstract
\vspace{-.2ex}
\end{center}
\begin{adjustwidth}{0.95cm}{0.95cm}
\begin{hyphenrules}{nohyphenation}
\ignorespaces
Large language models (LLMs) have become a central foundation of modern artificial intelligence, yet their lifecycle remains constrained by a rigid separation between training and deployment, after which learning effectively ceases. This limitation contrasts with natural intelligence, which continually adapts through interaction with its environment. In this paper, we formalise deployment-time learning (DTL) as the third stage in the LLM lifecycle that enables LLM agents to improve from experience during deployment without modifying model parameters. We present CASCADE (CASe-based Continual Adaptation during DEployment), a general and principled framework that equips LLM agents with an explicit, evolving episodic memory. CASCADE formulates experience reuse as a contextual bandit problem, enabling principled exploration-exploitation trade-offs and establishing no-regret guarantees over long-term interactions. This design allows agents to accumulate, select, and refine task-relevant cases, transforming past experience into actionable knowledge. Across 16 diverse tasks spanning medical diagnosis, legal analysis, code generation, web search, tool use, and embodied interaction, CASCADE improves macro-averaged success rate by 20.9\% over zero-shot prompting while consistently outperforming gradient-based and memory-based baselines. By reframing deployment as an adaptive learning process, this work establishes a foundation for continually improving AI systems.

\end{hyphenrules}
\end{adjustwidth}
\vspace{2ex}

\begin{bibunit}

\begin{figure}[th]
    \centering
    \includegraphics{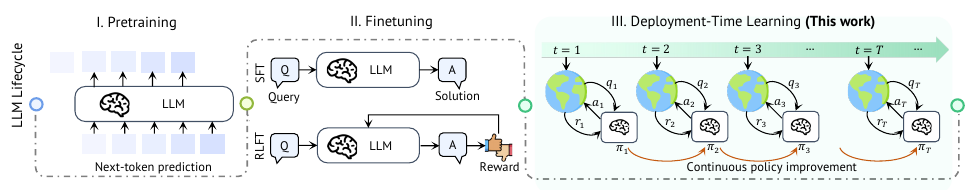}
    \caption{\textbf{The LLM Lifecycle.} In the first stage, LLMs are pre-trained with next-token prediction tasks on a large scale of corpus. Then, LLMs are further finetuned using supervised finetuning (SFT) and reinforcement learning finetuning (RLFT) for alignment and enhancing reasoning capabilities. We consider deployment-time learning as the third stage, where LLMs learn from experience during deployment, enabling continuous policy improvement over online interactions without updating the underlying LLM parameters.
    }
    \label{fig:motivation}
\end{figure}

\section{Introduction}
Large language models (LLMs) mark a transformation in artificial intelligence (AI), shifting the field from training task-specific models toward building more general-purpose AI systems. They demonstrate remarkable versatility, from accelerating scientific and algorithmic discoveries \cite{alpha-evolve} to achieving human-level data science performance in Kaggle competitions \cite{agentk}. The prevailing learning paradigm for LLMs follows a two-stage pipeline: large-scale pretraining on static corpora, followed by a finetuning phase aimed at enhancing alignment and reasoning capabilities \cite{deepseek-r1}. Despite its proven effectiveness, this paradigm suffers from a fundamental limitation: once deployed, learning essentially stops. This sharp separation between training and deployment stands in contrast to natural intelligence, where adaptation is continuous, grounded in interaction, and driven by the accumulation and selective reuse of experience \cite{natural-intelligence-1, natural-intelligence-2}. As LLMs are increasingly deployed as autonomous agents \cite{llm-agent-survey} that interact with dynamic environments and make decisions, the inability to learn from deployment-time experience emerges as a critical bottleneck, limiting adaptability, robustness, and long-term performance. Although gradient-based techniques such as reinforcement learning (RL) \cite{rl} provide principled frameworks for experiential learning \cite{era}, they require backpropagation across model parameters, incurring prohibitive cost at LLM scale. More fundamentally, many deployed LLMs are accessed as black-box application-programming-interface (API) services, making gradient-based adaptation even methodologically infeasible.

Motivated by this gap, we consider deployment-time learning (DTL) as a third, complementary stage in the LLM lifecycle (Fig.~\ref{fig:motivation}). Unlike pretraining and finetuning, DTL breaks the long-standing separation between training and testing, and enables learning during deployment by allowing LLMs to adapt from experience as they interact with the environment. Crucially, DTL shifts the locus of learning away from the foundation model itself and toward the agentic components that surround it, such as prompts, memory, tools, and decision-making mechanisms. We further formalise DTL as agentic online learning, where LLM agents observe a stream of tasks, generate solutions, receive scalar feedback indicating success or failure, and adapt their behaviour over time. This perspective shifts the objective from reducing individual errors to optimising long-term performance. By reframing deployment as an ongoing learning process, DTL transforms LLMs from static artifacts into continually improving systems.

Here, we present CASCADE (CASe-based Continual Adaptation during DEployment), a general algorithm that enables LLM agents to achieve continuous online improvement from deployment-time experience without finetuning the underlying LLM (Fig.~\ref{fig:method}a). CASCADE builds on the classic paradigm of case-based reasoning (CBR) \cite{cbr-1,cbr-2,cbr-3}, where new problems are solved by retrieving and reusing past successful solutions, allowing experience to accumulate explicitly as an episodic memory. With the LLM fixed and its response behaviour effectively stationary, adaptation during deployment hinges entirely on which past cases to retrieve. This naturally gives rise to an exploration–exploitation trade-off: agents must leverage high-utility cases while selectively exploring uncertain ones to improve future performance. CASCADE overcomes this challenge through a contextual bandit formulation \cite{bandit-book}, thereby establishing, to our knowledge, the first principled DTL algorithm for LLM agents with provable no-regret guarantees (Fig.~\ref{fig:method}b).

Through extensive experiments, we empirically demonstrate that deployment-time learning enables LLM agents to achieve continuous performance improvement from interaction experience, even when the underlying models remain fixed and are accessed as black-box APIs. Within this paradigm, we demonstrate the power of CASCADE across a diverse set of single-turn and multi-turn tasks, spanning medical diagnosis, legal analysis, operational reasoning, code generation, embodied interaction, web-based information seeking, and complex tabular reasoning on electronic health records. These improvements are observed across a wide range of model scales, from 4B models suitable for edge deployment to 32B models used in industrial applications. Together, these results establish deployment-time learning as a viable and general framework for adaptive AI systems, and position CASCADE as a principled and scalable instantiation of this framework.

\begin{figure}[t]
    \centering \includegraphics[width=\linewidth]{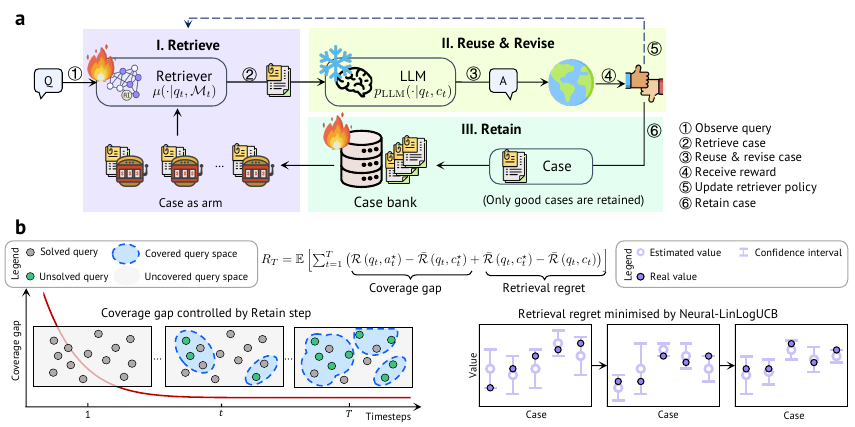}
    \caption{\textbf{Overview of CASCADE.} \textbf{a}, Given a query, CASCADE retrieves the case via the contextual bandit algorithm, reuses and revises it to generate the solution, and receives the reward. The retriever policy is updated accordingly, and successful cases are retained in the case bank. \textbf{b}, CASCADE exhibits the no-regret learning property: the coverage gap is controlled by the Retain step, while the retrieval regret is minimised by the proposed contextual bandit algorithm.
    }
    \label{fig:method}
\end{figure}

\section{Case-Based Deployment-Time Learning}
Deployment-time learning is defined by a set of constraints that fundamentally reshape how adaptation can occur. First, queries are presented as a stream, and the agent must act online without access to future tasks or outcomes. Rather than solving each query independently, the agent must extract reusable knowledge from prior interactions and apply it to new, unseen queries. Second, learning is driven by experience rather than supervision. The agent interacts with the environment, accumulates experience in textual form, and receives only scalar feedback indicating success or failure. In this work, we focus on a particularly general and practically relevant setting in which feedback is binary, reflecting the minimal signals available in many deployed systems. Third, the foundation model is fixed: once deployed, the parameters of the LLM remain unchanged. This distinguishes deployment-time learning from classical online and continual learning, particularly reinforcement learning \cite{agentic-rl}, where adaptation is typically achieved through gradient-based updates to model parameters. For LLMs, however, such updates are often impractical at deployment and impossible in black-box API settings. As a result, the locus of adaptation shifts from model parameters to agentic components operating around the fixed model.

DTL is related to, but clearly distinct from, existing test-time adaptation methods. One line of work focuses on improving performance for a single query through iterative search, reflection, or textual feedback during inference, as exemplified by Reflexion \cite{reflexion} and TextGrad \cite{textgrad}. However, they neither accumulate experience nor generalise improvements across tasks. Another line follows a conventional training–testing paradigm, optimising agentic components on a fixed training set and then deploying a static system without further adaptation, as in DSPy \cite{dspy} and GEPA \cite{gepa}. In contrast, DTL explicitly targets long-term policy improvement across a stream of tasks by learning from interaction feedback during deployment.

Under these constraints, case-based reasoning (CBR) \cite{cbr-1,cbr-2,cbr-3} provides a natural framework for DTL, where new problems are solved by retrieving relevant past cases, reusing and revising their solutions, and retaining successful new cases to the case bank. Rather than encoding knowledge implicitly in model parameters, CBR externalises experience as an explicit episodic memory that grows over time, enabling adaptation without updating the underlying model. Because adaptation is realised through memory and retrieval, this memory-centric learning mechanism empowers the agent with interpretability, flexibility, and computational efficiency, properties that are particularly well aligned with the constraints of deployment-time learning.

Within this framework, CASCADE realises deployment-time learning as a case-based continual adaptation process. For each incoming query, CASCADE first retrieves a past case from an evolving case bank based on the contextual bandit algorithm, and conditions the frozen LLM on both the current query and the retrieved case to generate a solution. The observed reward is then used to update the retrieval policy, while successful interactions are retained as new cases. In this way, the memory progressively expands to cover a broader portion of the query space, and the retrieval policy becomes increasingly effective at selecting useful prior experience. As such, adaptation arises from the cumulative growth of episodic memory and the refinement of experience selection, rather than from updates of LLM parameters.

\begin{figure}[t]
    \centering
    \includegraphics[width=\linewidth]{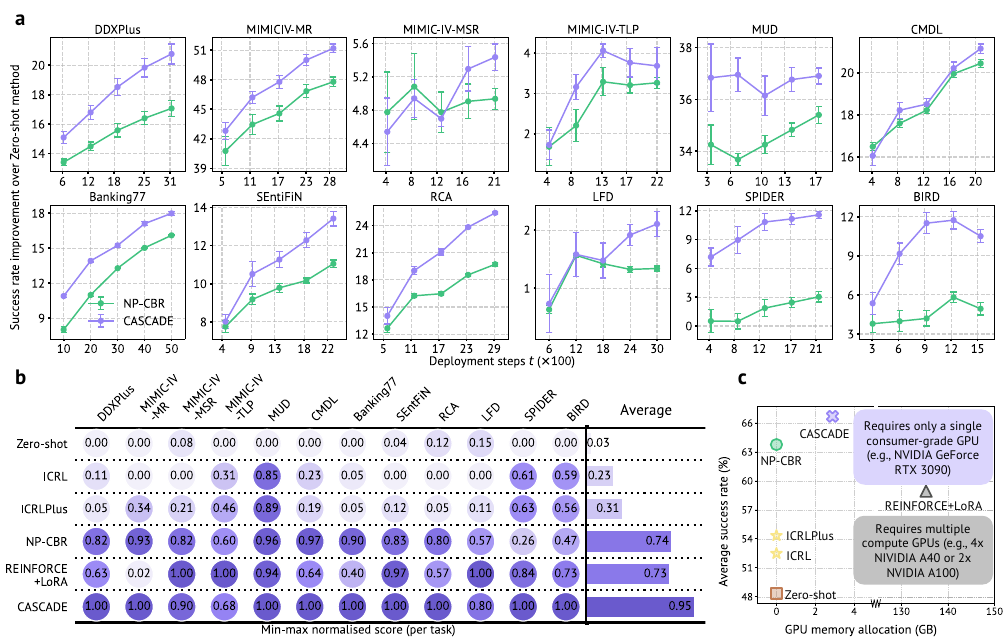}
    \caption{\textbf{Main results on 12 single-turn tasks.} All results are obtained using \texttt{Qwen3-32B} and are reported based on five different random seeds. \textbf{a}, Success rate improvement over Zero-shot method during the deployment steps across different tasks. Solid lines represent mean values and the error bars are standard deviations. \textbf{b}, Table displaying the normalised scores (0-1 range) of all the methods across different tasks, with score of 1 indicating the maximum performance per task. \textbf{c}, Performance-resource trade-off across different methods.
    Compared with the gradient-based learning method, deployment-time learning aims to achieve a better balance between resource efficiency and performance. The proposed CASCADE method outperforms REINFORCE+LoRA on 12 single-turn tasks. Moreover, CASCADE can be deployed on a single consumer-grade GPU, whereas REINFORCE+LoRA requires multiple high-end GPUs.
    }
    \label{fig:main}
\end{figure}

\section{Results}
In this section, we present the empirical results for deployment-time learning in LLM agents, where agents must improve over binary feedback from the online sequence of tasks. We mainly compare CASCADE against three learning mechanisms: (i) non-learning methods, exemplified by Zero-shot prompting method; (ii) memory-based learning methods, including ICRL\cite{icrl}, ICRLPlus\cite{icrl}, and NP-CBR, an ablation variant of CASCADE without adaptive retrieval; (iii) gradient-based learning methods, REINFORCE+LoRA \cite{cursor-online-RL,lora,lora-no-regret}, which combines on-policy RL with parameter-efficient finetuning. To evaluate the long-term performance, we utilise success rate over deployment steps as the evaluation metric, which directly reflects average regret in the online learning from binary feedback setting. Across a series of single-turn tasks, multi-turn tasks and two complex real-world tasks, CASCADE demonstrates consistent online improvement over deployment steps without updating the parameters of the underlying LLMs.

\subsection{Results on Single-Turn Tasks}
To evaluate the effectiveness of deployment-time learning in single-turn settings, we consider 12 challenging tasks spanning three representative categories: (i) decision support, including medical diagnosis, medication recommendation, legal charge prediction, penalty legal prediction, and financial query routing; (ii) operational reasoning in artificial intelligence for information technology operations (AIOps); and (iii) code generation for text-to-SQL generation. We provide detailed descriptions of each task in Supplementary Notes.

\noindent\textbf{Learning process.} We first analyse the online policy improvement during deployment by examining how success rates evolve compared to the non-learning baseline Zero-shot. Fig.~\ref{fig:main}a reports the improvement in success rate over Zero-shot for CASCADE and NP-CBR, the strongest DTL baseline. The results show that NP-CBR consistently improves the performance of \texttt{Qwen3-32B} across all benchmarks, which demonstrates the effectiveness of CBR framework for deployment-time learning. Building on this, CASCADE further improves upon NP-CBR, increasing the average success rate from 63.76\% to 66.68\% across all benchmarks. This gain highlights the importance of learning the adaptive retriever policy from the task feedback to achieve an effective trade-off between exploration and exploitation during case retrieval.

We further summarise the normalised success rates of all the baselines in Fig.~\ref{fig:main}b. Among all memory-based learning methods, CASCADE consistently achieves the best performance across all benchmarks. Notably, CASCADE outperforms the gradient-based learning method REINFORCE+LoRA on 9 out of 12 tasks and achieves comparable performance on the remaining ones. These results validate the feasibility of achieving continuous policy improvement during deployment without updating the parameters of the underlying LLM. Moreover, CASCADE can be naturally extended to retrieve and reuse multiple cases by adopting the combinatorial neural contextual bandit framework \cite{cnb}, which selects the top-$k$ cases based on upper confidence bound scores. As shown in Extended Data Fig.~\ref{fig:multicase}a, increasing the number of retrieved cases to four enables CASCADE to surpass REINFORCE+LoRA on the remaining three tasks as well. This finding underscores the potential of memory-based learning mechanisms to outperform gradient-based ones through appropriate context engineering.

In terms of resource efficiency, Fig.~\ref{fig:main}c shows that CASCADE achieves the highest average success rate while requiring less than 4~GB of GPU memory, corresponding to a single consumer-grade GPU. No existing method achieves comparable performance under an equal or smaller memory budget, placing CASCADE on the Pareto frontier of success rate and resource efficiency. In contrast, REINFORCE+LoRA requires multiple high-end GPUs during learning process, highlighting the necessity of shifting the learning locus from model parameters to agentic components.

\begin{figure}[thbp]
    \centering
    \includegraphics{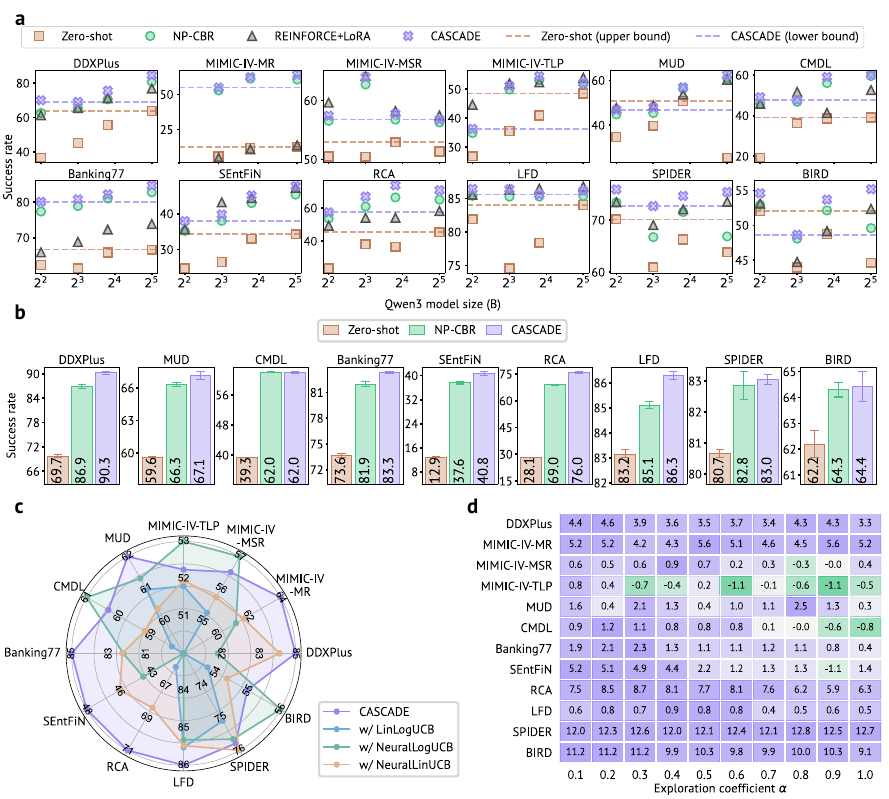}
    \caption{\textbf{In-depth analyses on 12 single-turn tasks.} All results are reported based on five different random seeds. \textbf{a}, Performance comparison with Qwen3 models of varying sizes as the underlying LLM. Dotted lines are used to highlight the upper bound of Zero-shot and the lower bound of CASCADE. \textbf{b}, Performance of three non-parametric methods with the black-box LLM \texttt{gemini-2.0-flash}. \textbf{c,d}, Radar plot of overall performance comparing CASCADE with three ablation variants (\textbf{c}); heatmap of CASCADE’s relative improvement over the best non-parametric baseline NP-CBR across varying values of the exploration coefficient $\alpha$, where higher $\alpha$ indicates stronger exploration (\textbf{d}). The results are obtained using \texttt{Qwen3-32B}.
    }
    \label{fig:ablation}
\end{figure}

\noindent\textbf{Generality across different size of LLMs.} To examine the generality of CASCADE across backbone LLMs of varying scales, we evaluate all methods on multiple model sizes from the Qwen3 series \cite{qwen3}. Specifically, we conduct experiments on the 4B, 8B, 14B, and 32B variants, where the 4B model is suited for edge-device deployment and the 32B model targets industrial scenarios. We present the results of Zero-shot, NP-CBR, REINFORCE+LoRA, and CASCADE in Fig.~\ref{fig:ablation}b. Overall, CASCADE consistently achieves the best performance in most settings, demonstrating strong generality and robustness for deployment-time learning. A notable exception occurs in the challenging medication recommendation task (MIMIC-IV-MR) with \texttt{Qwen3-4B}, where all methods fail. This observation suggests that effective deployment-time learning relies on a minimum level of foundational capability in the backbone LLM. When a zero-shot prompted LLM fails to obtain any successful interactions with the environment, online policy improvement becomes difficult to guarantee. Importantly, the lower-bound performance of CASCADE surpasses the upper-bound performance of zero-shot baselines in 9 out of 12 tasks. This result indicates that CASCADE equipped with small-scale LLMs can outperform larger-scale models, underscoring the importance of introducing deployment-time learning as a third stage in the LLM lifecycle.

\noindent\textbf{Applicability to black-box LLMs.} Beyond open-sourced LLMs, the memory-based learning mechanism also enables CASCADE to extend to LLMs accessed solely through black-box APIs. To validate this applicability, we utilise the commercial black-box LLM \texttt{gemini-2.0-flash} and compare CASCADE with both Zero-shot and the strongest DTL baseline NP-CBR across nine tasks, as shown in Fig.~\ref{fig:main}c. We exclude MIMIC-IV-MR, MIMIC-IV-MSR, and MIMIC-IV-TLP from this evaluation due to dataset licensing restrictions. Experimental results demonstrate that both NP-CBR and CASCADE consistently yield online policy improvements over the Zero-shot baseline across all evaluated datasets. Moreover, CASCADE further benefits from its adaptive retriever policy, achieving an average relative improvement of 3\% over NP-CBR. In contrast, gradient-based learning methods such as REINFORCE+LoRA are not applicable in the black-box setting, as they require gradient backpropagation through model parameters.

\noindent\textbf{Ablation study and hyper-parameter analysis.} To evaluate the effectiveness of the proposed neural contextual bandit algorithm, we conduct ablation studies and replace it with several state-of-the-art bandit baselines, including LinLogUCB \cite{linlogucb}, NeuralLogUCB \cite{neural-logucb}, and NeuralLinUCB \cite{neural-linucb}. The success rates of all ablation variants are summarised in Fig.~\ref{fig:ablation}c. The results show that the performance of different contextual bandit algorithms varies substantially across tasks, suggesting that different tasks may favour different assumptions about the underlying reward model. In contrast, the proposed Neural-LinLogUCB consistently achieves the highest or at least comparable success rates across all tasks. This demonstrates that modelling CBR as a contextual bandit problem with binary feedback, while decoupling representation learning and uncertainty estimation, provides an effective solution to regret minimisation in case retrieval.

We further analyse the impact of the exploration coefficient $\alpha$ on CASCADE’s performance. This coefficient controls the exploration strength, with larger values encouraging CASCADE to retrieve and reuse more novel cases. Fig.~\ref{fig:ablation}d reports the relative performance gains of CASCADE over the strongest non-parametric baseline, NP-CBR, as $\alpha$ varies from 0.1 to 1.0. Across this range, CASCADE consistently achieves positive improvements over NP-CBR in most settings, indicating strong robustness to the choice of $\alpha$. Notably, different tasks exhibit distinct optimal values of $\alpha$. Consequently, we recommend performing lightweight hyper-parameter tuning before deployment; alternatively, setting $\alpha$ to a small default value (e.g., 0.1) provides a reliable and effective choice.

\begin{figure}[tbhp]
    \centering
    \includegraphics[width=\linewidth]{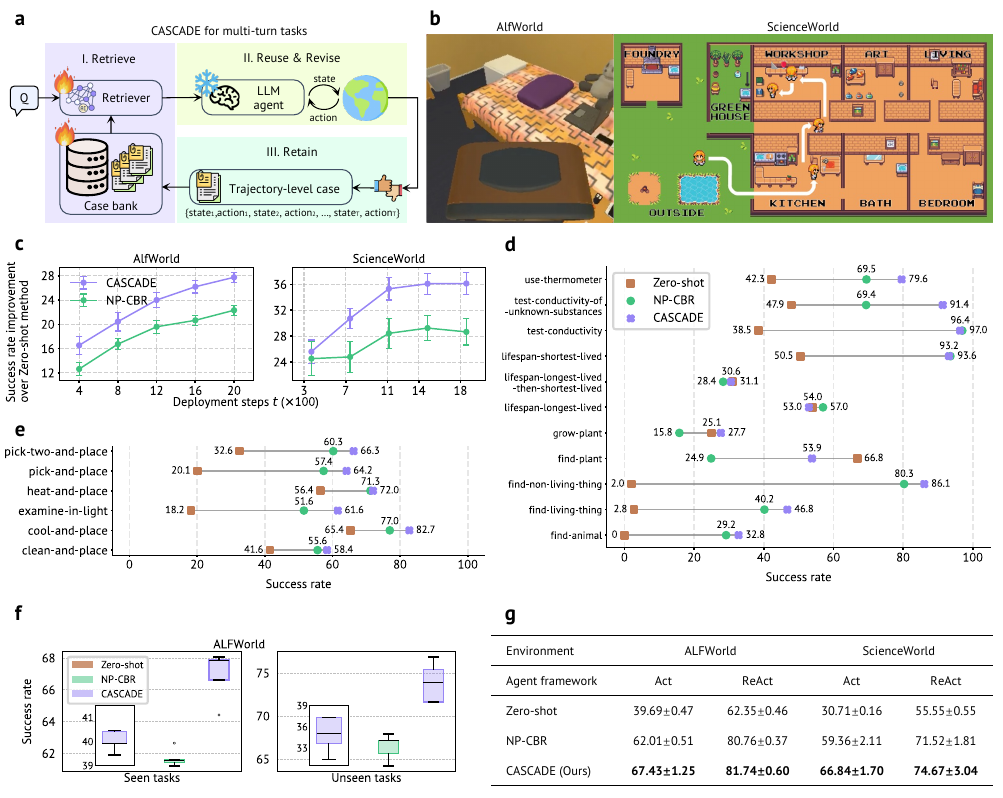}
    \caption{\textbf{Results on embodied sequential decision-making tasks.} All the results are obtained using \texttt{Qwen3-32B} and reported based on five different random seeds. \textbf{a}, CASCADE for multi-turn tasks, which retrieves, reuses, revises, and retains trajectory-level cases during learning. \textbf{b}, The visualisation of two embodied sequential decision-making tasks, ALFWorld and ScienceWorld. Both are taken from existing works \cite{alfworld, scienceworld}. \textbf{c}, Success rate improvement over Zero-shot method during the deployment steps. Solid lines represent mean values and the error bars are standard deviations. \textbf{d,e}, Topic-wise performance by task type is presented for ScienceWorld (\textbf{d}) and ALFWorld (\textbf{e}). \textbf{f}, Success rates across different task distributions in ALFWorld. The data are represented as median values (the central line of each box), 25th and 75th percentiles (the bottom and top edges of each box), minima and maxima (the whiskers attached to each box), and outliers (outside the box and whiskers). \textbf{g}, Performance comparison among three methods under two different agent frameworks.}
    \label{fig:sequential}
\end{figure}

\subsection{Results on Multi-Turn Tasks}
Beyond single-turn tasks, CASCADE naturally extends to multi-turn settings through trajectory-level case-based reasoning (Fig.~\ref{fig:sequential}a). In this subsection, we first evaluate CASCADE on two challenging embodied sequential decision-making benchmarks, ALFWorld \cite{alfworld} and ScienceWorld \cite{scienceworld}. We then present detailed case studies demonstrating the effectiveness of CASCADE in two complex real-world application scenarios: web-based deep search and tabular reasoning on electronic health records (EHR). We primarily compare CASCADE against two baselines, Zero-shot and NP-CBR, and exclude REINFORCE+LoRA from our evaluation due to its prohibitively high computational cost in multi-turn settings. Unless otherwise specified, all results are obtained using \texttt{Qwen3-32B}. 

\noindent\textbf{Embodied sequential decision-making.} To evaluate the effectiveness of CASCADE in multi-turn tasks, we conduct experiments on two challenging simulated sequential decision-making environments, ALFWorld and ScienceWorld (Fig.~\ref{fig:sequential}b). ALFWorld \cite{alfworld} is a popular decision-making benchmark where agents must navigate environments and interact with objects using natural language instructions to complete household tasks. In contrast, ScienceWorld \cite{scienceworld} is a more challenging text-based embodied benchmark, featuring a larger action space tailored to conducting elementary-level scientific experiments.

Fig.~\ref{fig:sequential}c illustrates the success rate improvement over the Zero-shot method for both CASCADE and NP-CBR across deployment steps in the two environments. The results demonstrate that both methods consistently improve performance over the Zero-shot method during deployment. Notably, CASCADE further enhances the performance of NP-CBR, increasing success rates in ALFWorld from 62.01\% to 67.43\% and in ScienceWorld from 59.36\% to 66.84\%. We further analyse topic-wise performance by task type in ALFWorld (Fig.~\ref{fig:sequential}e) and ScienceWorld (Fig.~\ref{fig:sequential}d). In ALFWorld, CASCADE consistently achieves the best results across all task categories, delivering improvements ranging from 0.7\% to 10.0\% over NP-CBR. In ScienceWorld, tasks such as \textit{find-animal}, \textit{find-living-thing}, and \textit{find-non-living-thing} are particularly challenging for backbone LLMs with Zero-shot prompting method, which achieve near-zero success rates. In contrast, both NP-CBR and CASCADE substantially improve performance, with gains exceeding 20\%, 40\%, and 80\%, respectively. These results highlight the importance of deployment-time learning, enabling LLM agents to achieve continuous policy improvement during deployment.

Since ALFWorld additionally provides 134 unseen tasks that differ from the training set, we append these tasks to the end of the online task sequence to further investigate the impact of task distribution shift on CASCADE. As shown in Fig.~\ref{fig:sequential}f, CASCADE attains success rates nearly twice those of the Zero-shot method on unseen tasks, demonstrating its ability to rapidly adapt to novel tasks via the memory-based learning mechanism. Moreover, CASCADE consistently outperforms all other methods across both seen and unseen task distributions, highlighting its strong generalisation capability in both in-distribution and out-of-distribution settings.

In practical settings, more advanced agent frameworks can be leveraged to empower LLMs with complex planning and action capabilities. We demonstrate that CASCADE can function as a plug-in component for such frameworks, owing to its simple yet effective memory-based learning mechanism. Specifically, we integrate CASCADE into the ReAct framework \cite{react}, which autonomously performs reasoning before taking actions, and evaluate its effectiveness. The overall success rates are summarised in Fig.~\ref{fig:sequential}g. ReAct equipped with CASCADE achieves success rates of 81.74\% and 74.67\% on the two tasks, outperforming the strongest baseline by 0.98\% and 3.15\%, respectively. In contrast, ReAct with Zero-shot performs substantially worse, exhibiting an approximately 20\% performance gap compared to CASCADE. These results highlight the strong potential of CASCADE to further enhance existing LLM agent frameworks.

\begin{figure}[tbhp]
    \centering
    \includegraphics[width=\linewidth]{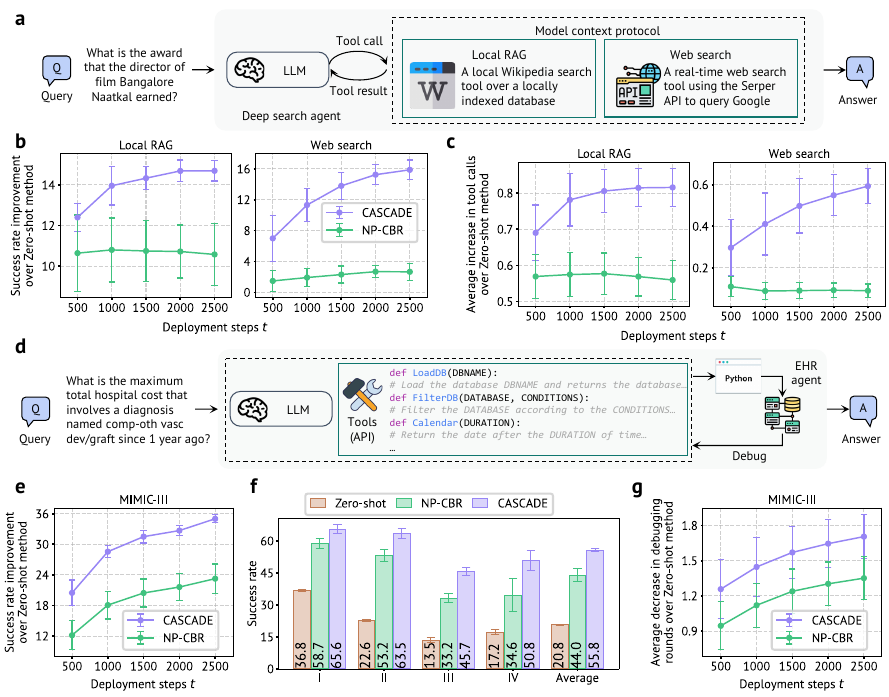}
    \caption{\textbf{Results on two real-world tasks: web-based deep search and complex tabular reasoning over electronic health records (EHR).} All results are reported based on five different random seeds, and all error bars represent standard deviations. \textbf{a}, Web-based deep search tasks that require multiple rounds of tool calling. We employ the model context protocol to provide two types of tools: local retrieval-augmented generation (RAG) and real-time web search. \textbf{b}, Success rate improvement over Zero-shot method during the deployment steps on 2Wiki under both tool types. \textbf{c}, Average increase in tool calls over Zero-shot method duering the deployment steps on 2Wiki under both tool types. \textbf{d}, Complex tabular reasoning tasks that require using appropriate APIs to query the EHR database. Real-time execution results are returned to the LLM when debugging is required. \textbf{e}, Success rate improvement over Zero-shot method during the deployment steps on MIMIC-III. \textbf{f}, Success rate across different difficulties on MIMIC-III. \textbf{g}, Average decrease in debugging rounds during the deployment step on MIMIC-III. 
    }
    \label{fig:real-world}
\end{figure}

\noindent\textbf{Case study of web-based deep search.} Deep search \cite{search-r1, deepresearcher} is a critical capability for modern web-based question answering because it empowers LLMs to retrieve relevant information from external sources, such as knowledge bases and the open web, apply multi-step reasoning, and synthesise accurate, well-grounded answers to complex user queries. Despite recent progress, LLMs still face significant challenges in this multi-turn tool-calling setting, where they must decide when to retrieve information, how to formulate retrieval queries, and how to reason over the retrieved evidence to derive reliable answers. In this subsection, we present a case study demonstrating how LLM agents can be integrated with CASCADE to enable deployment-time learning for web-based deep search tasks (Fig.~\ref{fig:real-world}a). Specifically, we conduct the experiments with two types of tools: (i) Local retrieval-augmented generation (RAG), where we deploy a search engine with 2018 Wikipedia dump \cite{wiki-2018} as the knowledge base and \texttt{E5} \cite{e5} as the retriever model; (ii) Web search, where we leverage the Serper API to perform real-time queries on Google. Both tools are implemented based on the model context protocol (MCP). We use the 2Wiki dataset \cite{2wiki} to benchmark web-based deep search, which requires multi-hop knowledge-intensive reasoning over the queries. We use \texttt{Qwen3-30B-A3B-Instruct-2507} as the backbone LLM due to its advanced tool-calling capabilities.

In 2Wiki, directly answering questions without tool calls achieves a success rate of 20.29\%. Enabling tool calls in a zero-shot setting improves performance to 35.36\% with local RAG and further to 50.63\% with web search. As shown in Fig.~\ref{fig:real-world}b, deployment-time learning consistently derives policy improvement over the Zero-shot method, while CASCADE further improves NP-CBR from 45.94\% to 50.06\% in local RAG, and from 53.29\% to 66.51\% in web search. Moreover, Fig.~\ref{fig:real-world}c shows that deployment-time learning increases the number of tool calls per query, resulting in the retrieval of more evidence relevant to the target question, which may contribute to improved performance.

\noindent\textbf{Case study of complex tabular reasoning on EHR.} Electronic health record (EHR) is a digital version of a patient’s comprehensive medical history, created and maintained by healthcare providers to document and monitor the patient’s health status over time \cite{ehr, ehragent, ehr-r1}. In routine clinical practice, clinicians typically rely on rule-based systems or support from data engineers to access and retrieve patient data from EHRs. This workflow often leads to inefficiencies and delays, which can negatively impact the quality and timeliness of patient care. In this case study, we investigate LLM agents for complex tabular reasoning on EHRs and their integration with CASCADE to enable deployment-time learning. Following EHRAgent \cite{ehragent}, we integrate multiple external tools and expose them to LLMs through API functions. Then, LLM agents generate code scripts to interact with EHR databases and iteratively refine their solutions based on real-time execution feedback, framing the task as a multi-turn code generation problem (Fig.~\ref{fig:real-world}d). We use the MIMIC-III dataset \cite{mimic-iii, ehrsql} to benchmark complex tabular reasoning on EHR. The dataset comprises real-world clinical data collected from patients at Beth Israel Deaconess Medical Center between 2001 and 2012.

In MIMIC-III, LLM agents that interact with the EHR database in a zero-shot setting achieve only a 20.75\% success rate, mainly due to difficulties in accurately invoking API functions within the allowed number of debugging iterations. In contrast, NP-CBR and CASCADE substantially improve success rates through deployment-time learning, reaching 44.02\% and 55.76\%, respectively (Fig.~\ref{fig:real-world}e). Notably, NP-CBR represents the state-of-the-art solution for EHR agents \cite{ehragent}. Fig.~\ref{fig:real-world}f summarises success rates across varying task difficulties, where CASCADE consistently outperforms all baselines. Furthermore, Fig.~\ref{fig:real-world}g shows that deployment-time learning reduces the number of debugging rounds per query, indicating that LLM agents progressively improve their ability to correctly invoke API functions during deployment.
\section{Discussion}
This work challenges the conventional view that deployment marks the end of learning in the lifecycle of LLMs. We instead argue that deployment should be regarded as a distinct and essential learning stage, where LLM agents can continue to improve through interaction with the environment and feedback. This perspective is motivated by a growing gap between how LLMs are trained, offline on static corpora, and how they are increasingly used, as autonomous agents operating in dynamic open-ended settings. Within this reframing, we introduce deployment-time learning as a general paradigm for enabling adaptation after deployment, and demonstrate its feasibility through CASCADE, a principled framework that achieves no-regret online improvement without updating model parameters.

A central insight of this work is that effective learning in deployed LLM systems does not require modifying the foundation model itself. Instead, learning can be realised by shifting the locus of adaptation to agentic components that shape behaviour, such as prompt, memory, and retrieval. This design choice is not merely a practical compromise, but a principled response to the constraints of real-world deployment, where gradient access is often unavailable, computational budgets are limited, and models are accessed as black-box APIs. Our results across single-turn tasks, multi-turn decision-making environments, and real-world case studies demonstrate that such parameter-free learning is sufficient to drive continuous policy improvement.

There are several limitations that motivate future work to further realise the potential of case-based deployment-time learning. First, while CASCADE performs robustly across a wide range of tasks, its behaviour over substantially longer horizons remains to be explored, which may involve memory growth, case management, and forgetting mechanism. Second, the current framework primarily learns from successful experiences, discarding failed attempts that may contain informative signals. Given the self-reflective capabilities of modern LLMs \cite{reflexion, memento2}, developing principled methods for learning from failure is a promising direction for enhancing deployment-time learning. Finally, as with other experiential learning methods, deployment-time learning requires a minimum level of foundational capability in the underlying LLM; when its zero-shot performance is near zero, learning from experience becomes substantially more challenging.

More broadly, as AI systems transition from static predictors to autonomous agents operating in open-ended environments, experience becomes a central driver of continual improvement. Realising experiential learning at scale requires methods that are computationally efficient and compatible with the practical constraints of deployment. By modelling case-based reasoning as a contextual bandit problem, CASCADE illustrates how these requirements can be met without updating model parameters. We believe deployment-time learning represents a complementary axis for progress in artificial intelligence, orthogonal to scale, data, and architecture, by transforming deployment from an endpoint into a continual source of learning.

\section{Methods}
\subsection{Problem Formulation}
We formalise deployment-time learning as an online learning problem with bandit feedback, where LLM agents interact with the environment over a deployment horizon of $T$ timesteps. In each timestep $t\in[T]$, the agent observes a query $q_t \in \mathcal{Q}$, and then generates a solution $a_t \in \mathcal{A}$. Here, both the query and the solution are in the natural language form, represented as a sequence of tokens from a pre-defined vocabulary $\mathcal{V}$. Finally, it receives a binary reward $r_t=\mathcal{R}(q_t,a_t)$ from the environment, indicating whether the generated solution is successful. The objective of the agent is to maximise its expected accumulated rewards over these $T$ timesteps, which is equivalent to the minimisation of the \textit{pseudo regret} (or \textit{regret} for short) as:
\begin{equation}
\label{eq:regret}
    R_T=\mathbb{E}\left[\sum_{t=1}^T \left[\mathcal{R}\left(q_t,a_t^\star\right) - \mathcal{R}\left(q_t,a_t\right)\right]\right],
\end{equation}
where $a_t^\star=\arg\max_{a\in\mathcal{A}}\mathcal{R}(q_t,a)$ refers to the optimal solution in the solution space. A good deployment-time learning agent should have sub-linear regret, and can thus achieve \textit{no-regret learning}, i.e., $\lim_{T\rightarrow\infty}R_T/T=0$. This implies that the agent’s average performance asymptotically approaches that of an oracle that always generates the optimal solution.

\subsection{CASCADE: Case-Based Deployment-Time Learning}
To instantiate deployment-time learning in LLM agents, we seek a learning mechanism that enables continual adaptation from experience without modifying the underlying LLM. A key inspiration comes from human cognition: rather than relying solely on synaptic plasticity, the brain exhibits powerful learning through episodic memory, where past problem–solution experiences are explicitly stored and reused to guide future decisions. This conceptually aligns with the principles of case-based reasoning (CBR) \cite{cbr-1,cbr-2,cbr-3}, a classic problem-solving paradigm that solves new problems by retrieving and reusing past successful solutions. Built on top of CBR, at the core of CASCADE is an episodic memory (or case bank) $\mathcal{M}_t$, where each case $c$ is defined as a tuple $(q,a,r)$. In each timestep, given a new query $q_t$, CASCADE first retrieves a case $c_t$ from the case bank via a retriever model $\mu$, i.e., $c_t \sim \mu(\cdot|q_t,\mathcal{M}_t)$, where $\mu$ can be either stochastic or deterministic. Then, it reuses and revises the retrieved case to derive the solution for the current query with the LLM, i.e., $a_t \sim p_{\text{LLM}}(\cdot|q_t,c_t)$. The environment returns a binary reward $r_t$. If the generated solution is effective (i.e., $r_t=1$), CASCADE retains the new case $(q_t,a_t,r_t)$ in the case bank, i.e., $\mathcal{M}_{t+1}=\mathcal{M}_t \cup \{(q_t,a_t,r_t)\}$; otherwise, the case bank remains unchanged, i.e., $\mathcal{M}_{t+1}=\mathcal{M}_t$. This Retrieve-Reuse-Revise-Retain process follows the principle of the classic 4R cycle \cite{cbr-3,re4,4R} in CBR. 

As such, CASCADE can be formulated as a composite policy:
\begin{equation}
    \pi_{\text{CASCADE}}(a_t|q_t,\mathcal{M}_t)=\sum_{c_t \in \mathcal{M}_t} \mu(c_t|q_t,\mathcal{M}_t) p_{\text{LLM}}(a_t|q_t,c_t).
\end{equation}
This formulation enables us to decouple the decision-making process between case retrieval and LLM generation. By fixing the parameters of the LLM, we treat its stationary response behaviour as part of the environment dynamics. Therefore, CASCADE can be optimised by merely learning the lightweight retriever policy without finetuning the LLM, and the optimisation objective is to maximise the \textit{expected utility} of the retrieved case $c$ for a given problem $q$, i.e.,
\begin{equation}
    \bar{\mathcal{R}}(q,c)=\mathbb{E}_{a \sim p_{\text{LLM}}(\cdot|q,c)} [\mathcal{R}(q,a)].
\end{equation}
Then, we can decompose the regret of CASCADE, based on the definition in Eq.~(\ref{eq:regret}), into two components:
\begin{equation}
\label{eq:regret-decomposition}
    R_T = \mathbb{E} \left[
    \sum_{t=1}^T
    \left(
    \underbrace{\mathcal{R}\left(q_t,a_t^\star\right) - \bar{\mathcal{R}}\left(q_t,c_t^\star\right)}_{\text{coverage gap }\Delta_t} +
    \underbrace{\bar{\mathcal{R}}\left(q_t,c_t^\star\right) - \bar{\mathcal{R}}\left(q_t,c_t\right)}_{\text{retrieval regret }\rho_t}
    \right)
    \right],
\end{equation}
where $c_t^\star=\arg\max_{c\in\mathcal{M}_t}\bar{\mathcal{R}}(q_t,c)$ denotes the optimal case that achieves the highest expected utility at timestep $t$. The \textit{coverage gap} $\Delta_t$ reflects the inherent limitation of the case bank, capturing the loss due to the absence of a perfectly matching case. In contrast, the \textit{retrieval regret} $\rho_t$ quantifies the error introduced by the retriever $\mu$, arising when it fails to select the most useful case available in $\mathcal{M}_t$. While the continuously growing case bank of CASCADE can eventually close the coverage gap, it still suffers from retrieval regret, as it fails to fully leverage feedback to adapt and improve the retriever policy.

As the case bank grows over time, efficient and accurate retrieval becomes critical. To address this, we adopt a two-stage retrieval pipeline widely used in large-scale information retrieval systems. In the first stage, a pretrained embedding model is utilised to recall a small set of relevant candidate cases from the full case bank. Then, a cross-encoder based reranker scores these candidates to select the best case for reuse. In our design, the embedding model is kept fixed to ensure the semantic similarity during recall, while the reranker is trained online to maximise the expected utility for the final retrieval decision.

To enable adaptive retriever learning from binary feedback, we further model the retrieval process as a contextual bandit problem, where retrieving a case is analogous to pulling an arm. At each timestep $t \in [T]$, the retriever observes a context (i.e., the query) $q_t \in \mathcal{Q}$, and recalls $K$ cases from the case bank to form the candidate pool $\mathcal{C}_t \subseteq \mathcal{M}_t$. For each candidate case $c\in\mathcal{C}_t$, we construct the contextual information as the concatenation of the query and the case, denoted by $x_{t,c}=[q_t;c]$. Then, the retriever policy adaptively selects one case $c_t \in \mathcal{C}_t$ based on this contextual information. After selection, the agent receives a stochastic binary feedback $r_t \in \{0,1\}$, which is exactly the reward of the sampled solution by prompting the LLM with the selected case. We assume that the binary feedback follows a Bernoulli distribution, where the probability of $r_t=1$ for contextual information $x_{t,c_t}$ is modelled as:
\begin{equation}
    \mathbb{P}\{r_t=1\mid x_{t,c_t}\}=\bar{\mathcal{R}}(q_t,c_t)=\sigma\left( h\left(x_{t,c_t}\right) \right),
\end{equation}
where $h:\mathcal{V}^\star\rightarrow\mathbb{R}$ is an unknown latent reward function, and $\sigma:\mathbb{R}\rightarrow [0,1]$ is the sigmoid function, i.e., $\sigma(x)=1/(1+e^{-x})$. The goal of the retriever policy is thus to maximise the expected utility of the selected case, or equivalently, minimise the retrieval regret.

While standard logistic contextual bandit algorithms \cite{linlogucb, neural-logucb, nlb} can be applied here, they rely on a fixed feature extraction model, which typically struggles to capture the complex interaction between query pairs. To this end, we follow previous works \cite{neural-lin1, neural-lin2, neural-linucb} to decouple representation learning and uncertainty estimation, and propose the Neural Linear Logistic Upper Confidence Bound (Neural-LinLogUCB) algorithm. In particular, we model the latent reward function with a deep Transformer-based encoder model \cite{transformer, bert} $f(\cdot;\bomega): \mathcal{V}^\star \rightarrow \mathbb{R}^d$ and a shallow linear head $\btheta \in \mathbb{R}^d$, i.e.,
\begin{equation}
    h(x_{t,c}) = \langle \btheta^\star, f(x_{t,c};\bomega^\star) \rangle,
\end{equation}
where $\btheta^\star$ and $\bomega^\star$ denote the unknown parameters for linear head and encoder model, respectively.

Now, we describe the learning algorithm of Neural-LinLogUCB. In timestep $t$, the retriever policy observes a set of contextual information $\mathcal{X}_t=\{x_{t,c_1},...,x_{t,c_K}\}$. Then, it selects the case that maximises the following upper confidence bound:
\begin{equation}
\label{eq:ucb}
    c_t=\arg\max_{c\in \mathcal{C}_t} \left\{
    \langle \btheta_{t-1}, f(x_{t,c};\bomega_{t-1}) \rangle
    +
    \alpha \lVert f(x_{t,c};\bomega_{t-1}) \rVert_{\bA_{t-1}^{-1}}
    \right\},
\end{equation}
where $\alpha$ is a hyper-parameter that controls the exploration, and the matrix $\bA_t$ is defined as
\begin{equation}
    \bA_t = \lambda \bI + \sum_{s=1}^t f(x_{s,c_{s}};\bomega_{t})f(x_{s,c_s};\bomega_{t})^\top,
\end{equation}
where $\lambda$ denotes the hyper-parameter of the regularisation strength in linear head estimation. After selecting the case, the retriever policy will receive a stochastic binary reward $r_t$, which is then utilised to update the parameters for deep encoder model and the linear head.

At timestep $t$, a natural way for the estimation of linear head based on history data $\{(x_{s,c_s}, r_s)\}_{s=1}^{t}$ derives from the maximum-likelihood principle. This is equivalent to finding the optimal parameter $\btheta_{t+1}$ by solving the following logistic regression problem with L2 regularisation:
\begin{equation}
    \min_{\btheta \in \mathbb{R}^d}\sum_{s=1}^{t} \left[
    -r_s \log\sigma\left(\btheta^\top f(x_{s,c_s};\bomega_{t})\right) - (1-r_s) \log\left(1-\sigma\left(\btheta^\top f(x_{s,c_s};\bomega_{t})\right)\right)
    \right] + \frac{\lambda}{2} \lVert \btheta \rVert^2_2.
\end{equation}
Considering the linearly increasing number of samples in each timestep, we perform single-step gradient descent in practice to estimate the linear head with the following update rule:
\begin{equation}
\label{eq:linear-grad}
    \btheta_{t+1} \leftarrow \btheta_{t} - \eta \left[
    \left(\sigma\left(\btheta_t^\top f(x_{t,c_{t}};\bomega_{t})\right)-r_{t}\right)f(x_{t,c_{t}};\bomega_{t}) + \lambda \btheta_{t}
    \right],
\end{equation}
where $\eta$ denotes the learning rate.

For the deep encoder network, we update its parameters once every $H$ timesteps to reduce computational overhead. At epoch $m$ (corresponding to timestep $t=mH$), we utilise the collected data during the current epoch, i.e., $\{(x_{s,c_s}, r_s)\}_{s=(m-1)H+1}^{mH}$, to perform batch gradient descent. The update is initiated from the parameters at the previous epoch, $(\bomega_{m-1}, \btheta_{m-1})$, by minimising the following loss function:
\begin{equation}
\label{eq:loss}
    \mathcal{L}_m(\bomega) = \frac{1}{H} \sum_{s=(m-1)H+1}^{mH} \left[
    -r_s \log\sigma\left(\btheta_{m-1}^\top f(x_{s,c_s};\bomega)\right) - (1-r_s) \log\left(1-\sigma\left(\btheta_{m-1}^\top f(x_{s,c_s};\bomega)\right)\right)
    \right].
\end{equation}
As such, we can achieve principled deployment-time learning without performing back-propagation on the large amount of parameters of LLMs. Instead, we only need to learn a lightweight deep encoder model and a linear head to fully exploit the binary feedback for the minimisation of the retrieval regret $\rho$, thereby achieving no-regret learning. We summarise the pseudo-code of CASCADE in Algorithm~\ref{alg:CASCADE}.

\begin{algorithm*}[tbhp]
\caption{Case-based deployment-time learning framework.}
\begin{algorithmic}[1]
\label{alg:CASCADE}
\STATE \textbf{Input:} Number of rounds $T$, regularisation parameter $\lambda$, exploration coefficient $\alpha$, update interval of deep encoder $H$, learning rate $\eta$, pretrained embedding model $\mathbf{E}$, LLM $p_{\text{LLM}}$.
\STATE \textbf{Initialisation:} Initialise the parameters of encoder network $\bomega_0$ and the linear head $\btheta_0$ from the pretrained reranker model; initialise the case bank $\mathcal{M}_1=\emptyset$.
\FOR{$t = 1,\dots,T$}
\STATE Observe the problem $q_t$ from the environment
\STATE Recall top-$K$ cases from the case bank to form the candidate pool $\mathcal{C}_t=\arg\text{top}K_{c\in\mathcal{M}_t}\langle \mathbf{E}(q_t), \mathbf{E}(c)\rangle$

    \FOR{each case $c \in \mathcal{C}_t$}
    \STATE Compute the exploitation term: $\hat{r}(q_t,c) = \langle \btheta_{t-1}, f([q_t,c];\bomega_{t-1})\rangle$
    \STATE Compute the exploration term: $\mathcal{U}(q_t,c)=\lVert f([q_t,c];\bomega_{t-1})\rVert_{\bA_{t-1}^{-1}}$
    \STATE Compute the UCB score:
        $\text{UCB}(q_t,c) = \hat{r}(q_t,c) + \alpha\mathcal{U}(q_t,c)$
    \ENDFOR

    \STATE Select the retrieved case $c_t =\arg\max_{c \in \mathcal{C}_t}\text{UCB}(q_t,c)$.
    \STATE Sample the solution $a_t\sim p_{\text{LLM}}(\cdot|q_t,c_t)$ 
    \STATE Observe the reward $r_t=\mathcal{R}(q_t,a_t)$

    \IF{$t \% H = 0$}
    \STATE Update the deep encoder following Eq.~(\ref{eq:loss}): $\bomega_t\leftarrow\bomega_{t-1} - \eta \nabla_{\bomega} \mathcal{L}_m(\bomega)$
    \STATE Load the linear head $\btheta_t$ from the trained encoder model
    \ELSE 
    \STATE Fix the deep encoder: $\bomega_t\leftarrow\bomega_{t-1}$
    \STATE Update the linear head $\btheta_t$ via Eq.~(\ref{eq:linear-grad})
    \ENDIF

    \IF{$r_{t}=1$}
    \STATE Retain the new case into the case bank, i.e., $\mathcal{M}_{t+1}=\mathcal{M}_t \cup \{(q_t,a_t,r_t)\}$
    \ELSE
    \STATE Keep the case bank unchanged, i.e., $\mathcal{M}_{t+1}=\mathcal{M}_t$
    \ENDIF
    \ENDFOR
\end{algorithmic}
\end{algorithm*}

\subsection{No-Regret Learning}
This section analyses the regret of the proposed CASCADE algorithm. Overall, we follow Eq.~(\ref{eq:regret-decomposition}) to decompose the overall regret into coverage gap and retrieval regret and bound them individually: (i) a non-parametric coverage gap that vanishes as the case bank grows, and (ii) a bandit-style retrieval regret controlled by Neural-LinLogUCB. This yields an overall sub-linear regret guarantee, establishing no-regret deployment-time learning without updating the parameters of the underlying LLM. Since the complete analysis is technically involved, we defer the full technical analysis to the Supplementary Notes for interested readers and present only the final result as below.

We first present the regret analysis for the coverage gap $R_T^\Delta=\mathbb{E}[\sum_{t=1}^T\Delta_t]$, which captures the limitation of the case bank: even with an optimal retriever, performance is constrained if no relevant past case exists. Intuitively, this gap decreases as the agent accumulates successful experiences during deployment, enabling the case bank to increasingly cover the query space. Under mild regularity conditions, namely that (i) similar queries share similar solutions, (ii) the backbone LLM maintains a non-zero probability of success, and (iii) similar queries recur with sufficient frequency, the case bank progressively covers the query space. As a result, the accumulated coverage gap grows sub-linearly with deployment time. The full derivation is presented in Supplementary Note D.

\begin{theorem}[Informal]
Under the above assumptions, with high probability, the accumulated coverage gap satisfies
\[
R_T^\Delta =
\begin{cases}
\tilde{\mathcal{O}}\!\left(T^{\frac{d_0-1}{d_0}}\right), & d_0 > 1, \\
\tilde{\mathcal{O}}(1), & 0 < d_0 \le 1,
\end{cases}
\]
where $d_0$ denotes the intrinsic dimension of the query distribution.
\end{theorem}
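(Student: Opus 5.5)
I would make the three informal assumptions concrete and then reduce the coverage gap to a covering-number argument on the query space. Concretely: (i) there is a metric $d$ on $\mathcal{Q}$ and a constant $L$ with $\mathcal{R}(q,a_q^\star) - \bar{\mathcal{R}}(q,c) \le L\, d(q,q_c)$ whenever $c=(q_c,a_c,1)$ is a successful case, where $q_c$ is its query; (ii) for every $q$ and every case bank, the probability that the LLM produces a successful answer is at least $p_0>0$; (iii) queries are i.i.d.\ from a distribution $P$ whose support has intrinsic dimension $d_0$, in the sense that $P(B(q,\epsilon)) \ge \kappa\,\epsilon^{d_0}$ for $q$ in the support. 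Assumption (i) gives $\Delta_t \le \min\{1, L\,\mathrm{dist}(q_t,\mathcal{M}_t)\}$, where $\mathrm{dist}(q_t,\mathcal{M}_t)$ is the distance from $q_t$ to the nearest stored query. The whole task is therefore to control $\sum_t \mathrm{dist}(q_t,\mathcal{M}_t)$.

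For a scale $\epsilon$, I cover the support by $N(\epsilon)\lesssim \epsilon^{-d_0}$ balls of radius $\epsilon$. Call a cell \emph{filled} at time $t$ once $\mathcal{M}_t$ contains a successful case whose query lies in it. Once the cell containing $q_t$ is filled, $\Delta_t \le 2L\epsilon$. Each arrival in an unfilled cell triggers a retention with probability at least $p_0$, by (ii). So the number of rounds in which $q_t$ falls in an unfilled cell is stochastically dominated by a sum of $N(\epsilon)$ geometric variables with parameter $p_0$. A Bernstein/Freedman bound then gives, with probability $1-\delta$, at most $\tilde{\mathcal{O}}(N(\epsilon)/p_0)$ such rounds. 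Note that this count does not depend on how often queries land in each cell; assumption (iii) only guarantees that the support is genuinely covered. This yields
\[
R_T^\Delta \le \tilde{\mathcal{O}}\!\left(\frac{\epsilon^{-d_0}}{p_0}\right) + 2L\epsilon T .
\]

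A single scale is too crude, so I will use a peeling (multi-scale) argument. Take dyadic scales $\epsilon_k=2^{-k}$ and charge each round to the finest scale $k$ at which its cell is already filled. The number of rounds with $\Delta_t$ between $2L\epsilon_{k+1}$ and $2L\epsilon_k$ is at most the number of arrivals into unfilled cells at scale $k+1$, which is $\tilde{\mathcal{O}}(2^{(k+1)d_0})$, and it is also trivially at most $T$. Summing,
\[
R_T^\Delta \lesssim \sum_k \epsilon_k \min\{T,\, 2^{k d_0}\}.
\]
For $d_0>1$ the terms grow geometrically up to the crossover $2^{k d_0}\approx T$, giving $\tilde{\mathcal{O}}(T^{1-1/d_0})$, which matches the single-scale optimisation at $\epsilon=T^{-1/d_0}$. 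For $d_0<1$ the series $\sum_k 2^{k(d_0-1)}$ converges, giving $\tilde{\mathcal{O}}(1)$. For $d_0=1$ it is logarithmic, which is again $\tilde{\mathcal{O}}(1)$. The peeling step therefore resolves the two cases of the statement.

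I expect the main obstacle to be the high-probability counting step across all scales simultaneously. The retention events are not independent: the success probability depends on the current case bank and retrieved case, and hence on the whole past. I would handle this by writing the number of unfilled-cell arrivals minus $p_0^{-1}$ times the number of fills as a supermartingale. Freedman's inequality then applies at each scale, and a union bound over the $\mathcal{O}(\log T)$ scales costs only the logarithmic factors absorbed in $\tilde{\mathcal{O}}$.

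A secondary subtlety is that $c_t^\star$ is the best case in $\mathcal{M}_t$, not just the nearest one. This only helps, since $\bar{\mathcal{R}}(q_t,c_t^\star)$ is at least the value of the nearest case, so the Lipschitz bound still applies. Finally, the $\tilde{\mathcal{O}}$ hides $L$, $1/p_0$, $\kappa$ and $\log(1/\delta)$.
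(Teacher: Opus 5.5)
Your proof is correct, but it takes a different route from the paper. The paper never discretises the query space. It couples $r_t\ge z_t$ with $z_t$ i.i.d.\ Bernoulli$(p_{\min})$, and a Chernoff bound plus a union bound give $N_t=|\mathcal{M}_t|\ge p_{\min}(t-1)/2$ for all $t\ge t_0$. It then uses the small-ball condition at the fresh query $q_t$, $\mathbb{P}(d_t>\varepsilon)\le(1-C_0\varepsilon^{d_0})^{N_t}$, to show the nearest-neighbour radius satisfies $d_t\lesssim(\log(T/\delta)/(C_0p_{\min}t))^{1/d_0}$ uniformly in $t$. Summing $L_{\mathcal{Q}}\sum_t(t-1)^{-1/d_0}$ then yields all three regimes at once. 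There, the i.i.d.\ recurrence of queries does the work, and you get per-round control of $d_t$ at a single time-varying scale, with explicit constants such as $t_0$.

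You instead use the ball-mass condition only to bound covering numbers. You should state that step: a maximal $\epsilon$-packing has disjoint $\epsilon/2$-balls of mass at least $C_0(\epsilon/2)^{d_0}$, so $N(\epsilon)\le 2^{d_0}C_0^{-1}\epsilon^{-d_0}$ for $\epsilon\le 2D$. Your count of unfilled-cell arrivals does not depend on the order or law of the queries. So your bound holds for arbitrary, even adaptively generated, query streams in a space of metric dimension $d_0$, and it handles explicitly that retention depends on the query and the past. The paper's small-ball step passes over that dependence: writing $(1-\mathcal{P}(B(q_t,\varepsilon)))^{N_t}$ treats the stored queries as $N_t$ independent draws from $\mathcal{P}$. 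It can be repaired by conditioning round by round, since each past round deposits a case in $B(q_t,\varepsilon)$ with conditional probability at least $p_{\min}C_0\varepsilon^{d_0}$.

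Conversely, the paper's coupling would let you replace Freedman by a negative-binomial tail. Realise $r_t$ via a uniform $U_t$ and set $z_t=\mathbb{I}\{U_t\le p_{\min}\}\le r_t$. On the predictable set of rounds where $q_t$ lands in an unfilled cell, the $z_t$ are i.i.d.\ Bernoulli$(p_{\min})$, and each success fills a cell.

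The price of your approach is that a fixed scale only gives $T^{d_0/(d_0+1)}$, so the dyadic peeling is indispensable, and you obtain a cumulative rather than per-round guarantee. The rates coincide, even in the dependence on the success floor. Choosing the cutoff scale with $2^{Kd_0}\approx p_0T$ gives $p_0^{-1/d_0}T^{(d_0-1)/d_0}$ for $d_0>1$, matching the paper's $p_{\min}^{-1/d_0}$ factor.
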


For the retrieval regret bound $R_T^\rho=\mathbb{E}[\sum_{t=1}^T\rho_t]$, we adopt the standard assumptions used in prior neural contextual bandit analyses \cite{neural-ucb, neural-linucb, neural-logucb, nlb}. The full derivation is presented in Supplementary Note E.

\begin{theorem}[Informal]
Under standard assumptions for neural contextual bandits, for a sufficiently wide encoder network, with high probability, the retrieval regret of CASCADE satisfies
\[
R_T^\rho = \tilde{\mathcal{O}}(\frac{1}{\kappa_\sigma}B\sqrt{T}),
\]
where $B$ measures the approximation error between the true latent reward and the learned latent reward, and $\kappa_\sigma$ is the strong monotonicity constant of the sigmoid function.
\end{theorem}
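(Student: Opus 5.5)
We bound $R_T^\rho$ by the standard decouple-then-linearise template for neural-linear bandits with logistic link (as in NeuralLinUCB and NeuralLogUCB), working at the level of the latent reward $h$ and transferring to $\bar{\mathcal{R}}=\sigma\circ h$ at the end.

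\textbf{Step 1: per-round regret via the optimistic index.} Write $\phi_{t,c}=f(x_{t,c};\bomega_{t-1})$. Because $c_t^\star$ ranges over $\mathcal{M}_t$ while the algorithm only sees the recall pool $\mathcal{C}_t$, we assume (as is implicit in the decomposition of Eq.~(\ref{eq:regret-decomposition})) that the recall stage contains $c_t^\star$, or we fold this loss into the coverage gap. Since $\sigma$ is $\tfrac14$-Lipschitz, we have $\rho_t\le \tfrac14\,(h(x_{t,c_t^\star})-h(x_{t,c_t}))$. On a good event $\mathcal{E}$ we will show
\[
|h(x_{t,c})-\langle\btheta_{t-1},\phi_{t,c}\rangle|\le \alpha\lVert\phi_{t,c}\rVert_{\bA_{t-1}^{-1}}+\epsilon_{t}
\]
uniformly in $c$. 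Combining this with the UCB selection rule in Eq.~(\ref{eq:ucb}) gives $\rho_t\lesssim \alpha\lVert\phi_{t,c_t}\rVert_{\bA_{t-1}^{-1}}+\epsilon_t$, with $\alpha\asymp \kappa_\sigma^{-1}B\sqrt{d\log T}$.

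\textbf{Step 2: approximation and confidence set.} The neural-tangent/overparametrisation assumptions (sufficient width, well-conditioned NTK Gram matrix, bounded contexts) give the following. The trajectory of $\bomega$ under the epoch-wise updates of Eq.~(\ref{eq:loss}) stays in a small ball around initialisation. Hence $h$ is approximated by a linear function of the current features, $h(x)=\langle\btheta^\star,f(x;\bomega_{t-1})\rangle+e_t(x)$, where $|e_t|$ is controlled by $B$ plus a width-dependent term that vanishes as the width grows. For the head, we use a logistic-regression self-normalised bound: the $\lambda$-regularised MLE satisfies $\lVert\btheta_t-\btheta^\star\rVert_{\bA_t}\le \kappa_\sigma^{-1}\bigl(\sqrt{d\log(1+T/\lambda)+\log(1/\delta)}+\sqrt{\lambda}S\bigr)+\kappa_\sigma^{-1}B\sqrt{T}\cdot(\text{small})$. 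This follows from the mean-value theorem, $g_t(\btheta_t)-g_t(\btheta^\star)=G_t(\btheta_t-\btheta^\star)$ with $G_t\succeq\kappa_\sigma\bA_t$, together with the martingale concentration of Abbasi-Yadkori et al. This is where $1/\kappa_\sigma$ enters. The single-step gradient update of Eq.~(\ref{eq:linear-grad}) must be shown to track the MLE. We handle this with an online-Newton/online-GD tracking argument, charging the discrepancy to a term of the same order.

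\textbf{Step 3: summation.} On $\mathcal{E}$, Cauchy--Schwarz and the elliptical potential lemma give $\sum_t\min(1,\lVert\phi_{t,c_t}\rVert_{\bA_{t-1}^{-1}})\le\sqrt{2dT\log(1+T/(d\lambda))}$. Therefore
\[
R_T^\rho\lesssim \kappa_\sigma^{-1}B\sqrt{T}\cdot\mathrm{polylog}(T)+\sum_t\epsilon_t+\delta T,
\]
and choosing $\delta=1/T$ together with sufficient width makes $\sum_t\epsilon_t=\tilde{\mathcal{O}}(1)$.

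\textbf{Main obstacle.} The hardest step is that $\bA_t$ is defined with the \emph{current} features $f(\cdot;\bomega_t)$, which change every $H$ steps. As a result, the elliptical potential lemma and the self-normalised bound do not apply directly to a single fixed design. We would first try the NeuralLinUCB device. Feature drift within the overparametrised ball is bounded by $\lVert f(x;\bomega_t)-f(x;\bomega_0)\rVert\le \mathcal{O}(t^{c}m^{-1/6}\mathrm{polylog})$. Each $\bA_t$ is compared to the Gram matrix built from frozen features, and the perturbation is absorbed into $\epsilon_t$. If this fails, we would restart the confidence analysis each epoch and sum $T/H$ per-epoch bounds. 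For $H\gtrsim\sqrt{T}$, this still yields the $\tilde{\mathcal{O}}(\kappa_\sigma^{-1}B\sqrt{T})$ rate up to logarithmic factors.
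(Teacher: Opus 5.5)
Your skeleton matches the paper's: optimistic index, the mean-value theorem giving $\kappa_\sigma\bA_t\preceq\lambda\bI+\widehat{\bA}_t$, the self-normalised bound, the elliptical potential, and the transfer through $\sigma$. The genuine gap is in how you treat the representation bias. In Steps 2--3 you assume the per-arm misspecification $e_t(x)$ vanishes with width, so that $\sum_t\epsilon_t=\tilde{\mathcal{O}}(1)$ and the radius only gains $B\sqrt{T}\cdot(\text{small})$. Under the linearisation the paper actually has (Lemma~\ref{lemma:linearlisation}), $e_t(x)=\btheta_0^\top\bg(x;\bomega_0)(\bomega^\star-\bomega_0)$. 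The best available bound is $|e_t(x)|\le\lVert\btheta_0\rVert_2\,\lVert\bg(x;\bomega_0)\rVert_F\,\lVert\bomega^\star-\bomega_0\rVert_2=\mathcal{O}(\sqrt{Ld}\,B)$, because the $\sqrt{m}$ growth of the gradient cancels the $B/\sqrt{m}$ proximity of $\bomega^\star$. So nothing shrinks with $m$, and your Step 1 gives $\epsilon_t=\Theta(B)$ and $\sum_t\epsilon_t=\Theta(BT)$. Inflating the radius by this misspecification instead adds order $B\sqrt{t}/\kappa_\sigma$, which against the elliptical sum $\sqrt{dT\log T}$ is again linear in $T$.

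The missing idea is that only \emph{cross-arm differences} of the bias enter the regret. The paper keeps the bias out of the radius with a bias-corrected bound on $\lVert\btheta_t-\btheta^\star-\bb_t\rVert_{\bA_t}$, where $\bb_t=(\lambda\bI+\widehat{\bA}_t)^{-1}\sum_{s\le t}\delta_s\bz_s$; thus $\alpha_t$ involves $\nu$ and $M$ but not $B$. It bounds $\lVert\bb_{t-1}\rVert_2$ by a constant of order $L_\sigma dB\sqrt{L\log(1/\delta)}/\kappa_\sigma$ via Lemma~\ref{lemma:bounded-vector-inverse-A-bound}, and multiplies it by the feature gap $\lVert f(\bx_{t,c_t^\star};\bomega_{t-1})-f(\bx_{t,c_t};\bomega_{t-1})\rVert_2=\mathcal{O}(m^{-1/6}t^{2/3}\,\mathrm{polylog}+\ell_{\text{Lip}}\sqrt{t/(m\lambda)})$. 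It bounds $\btheta_0^\top[\bg(\bx_{t,c_t^\star};\bomega_0)-\bg(\bx_{t,c_t};\bomega_0)](\bomega^\star-\bomega_0)=\mathcal{O}(\ell_{\text{Lip}}B/\sqrt{m})$ using the gradient-Lipschitz Assumption~\ref{assumption:gradient-lipschitz}, which you never invoke. With $m\ge T^7$ these bias terms sum to $\tilde{\mathcal{O}}(B\sqrt{T}/\kappa_\sigma)$. That, not the confidence radius, is where $B\sqrt{T}/\kappa_\sigma$ comes from.

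Your ``main obstacle'' is real for the practical rules in the Methods, but the paper never analyses those. Its theorem is for the theoretical Algorithm~\ref{alg:bandit}, which differs in three ways. First, $\bA_t$ is built from $\bz_s=f(\bx_{s,c_s};\bomega_{s-1})$, frozen at selection time, so $\bz_s$ is predictable and the self-normalised and elliptical-potential lemmas apply verbatim. Second, $\btheta_t$ is the exact regularised MLE. Third, $\bomega$ is retrained each step with the proximal term $\frac{\lambda}{2}\lVert\bomega-\bomega_0\rVert_2^2$, giving $\lVert\bomega_t-\bomega_0\rVert_2\le\sqrt{t/(m\lambda)}$.

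Your workarounds would not close the gap as written:
\begin{itemize}
\item The drift $\lVert f(x;\bomega_t)-f(x;\bomega_0)\rVert_2$ is not $\mathcal{O}(t^{c}m^{-1/6})$. Since $f(\cdot;\bomega_0)=\bo$ and $f(x;\bomega_t)\approx\bg(x;\bomega_0)(\bomega_t-\bomega_0)$, features move by a width-independent amount (the paper only has $\lVert f(x;\bomega_t)\rVert_2\le C_3\sqrt{Ldt/\lambda}$); only the linearisation error is $\mathcal{O}(m^{-1/6})$.
\item Restarting per epoch gives $(T/H)\cdot\tilde{\mathcal{O}}(\sqrt{H})=\tilde{\mathcal{O}}(T/\sqrt{H})$, which is $\tilde{\mathcal{O}}(\sqrt{T})$ only for $H\asymp T$, not for $H\gtrsim\sqrt{T}$.
\item That single-step gradient descent tracks the MLE within the radius is asserted rather than argued.
\end{itemize}
The economical route is to state and prove the bound for the frozen-feature, exact-MLE version, as the paper does.
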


Combining the bounds on coverage gap and retrieval regret, we obtain a sub-linear bound on the total regret of CASCADE. Consequently, CASCADE achieves no-regret learning during deployment, despite the backbone LLM remaining frozen.

\begin{theorem}[Informal]
Under the above assumptions, with high probability, the overall regret of CASCADE satisfies
\[
R_T=
\begin{cases}
\tilde{\mathcal{O}}\left(\max\left\{T^{\frac{d_0-1}{d_0}},\dfrac{B}{\kappa_\sigma}\sqrt{T}
\right\}\right), & d_0 > 1, \\
\tilde{\mathcal{O}}\left(\dfrac{B}{\kappa_\sigma}\sqrt{T}\right), & 0 < d_0 \le 1.
\end{cases}
\]
\end{theorem}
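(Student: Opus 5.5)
This follows directly from the regret decomposition in Eq.~(\ref{eq:regret-decomposition}) together with the two preceding theorems. By linearity of expectation, $R_T = R_T^\Delta + R_T^\rho$, where $R_T^\Delta=\mathbb{E}[\sum_t \Delta_t]$ and $R_T^\rho=\mathbb{E}[\sum_t\rho_t]$.

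\textbf{Step 1: validity of the decomposition.} I would first confirm that both terms are well defined along the trajectory generated by CASCADE. At each $t$, $c_t^\star$ is taken over the current bank $\mathcal{M}_t$. The retrieval regret is measured against the best case available to the retriever, which lies in the recalled pool $\mathcal{C}_t$. If recall may discard $c_t^\star$, the loss from recall is absorbed into $\Delta_t$ by redefining $c_t^\star$ as the maximiser over $\mathcal{C}_t$. I would check that the coverage analysis (Supplementary Note D) tolerates this redefinition, for example because the nearest stored case is recalled by the embedding model. The two sums then add exactly, with no cross terms.

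\textbf{Step 2: combine the high-probability bounds.} The coverage theorem holds with probability at least $1-\delta_1$, and the retrieval theorem holds with probability at least $1-\delta_2$. Setting $\delta_1=\delta_2=\delta/2$ and taking a union bound gives both bounds simultaneously with probability at least $1-\delta$. The $\log(1/\delta)$ factors are hidden in $\tilde{\mathcal{O}}$.

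A subtle point is that the coverage bound is stated for the bank produced by the actual retrieval policy. Its proof only needs two facts: (a) any retrieved case yields success probability bounded below (assumption (ii)), so cases are retained at a nonzero rate; and (b) queries recur. Neither depends on the retriever being optimal, so the bound applies under Neural-LinLogUCB. Likewise, the retrieval bound must hold for a growing, adaptively chosen arm set $\mathcal{C}_t$. Neural contextual bandit analyses allow arbitrary, even adversarial, context sets at each round, so this is not an obstacle.

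\textbf{Step 3: add the rates.} Summing the two bounds gives $\tilde{\mathcal{O}}(a+b)=\tilde{\mathcal{O}}(\max\{a,b\})$.
\begin{itemize}
\item For $d_0>1$, this yields $\tilde{\mathcal{O}}(\max\{T^{(d_0-1)/d_0}, B\sqrt{T}/\kappa_\sigma\})$.
\item For $0<d_0\le 1$, the coverage term is $\tilde{\mathcal{O}}(1)$, which is dominated by $\sqrt{T}$. This leaves $\tilde{\mathcal{O}}(B\sqrt{T}/\kappa_\sigma)$.
\end{itemize}
Both rates are $o(T)$, which gives no-regret.

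\textbf{Main obstacle.} The hard part is Step 1's compatibility issue: the two component bounds are proved under coupled randomness, because the bank depends on past retrievals and the bandit's arm set depends on the bank. The union bound must apply to events defined on the same filtration. I would handle this by conditioning on the history $\mathcal{F}_{t-1}$ and verifying that each supplementary argument is a martingale-type bound valid for any adapted policy or arm-set process.
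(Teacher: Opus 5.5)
Your proposal is correct and follows essentially the paper's route: the paper adds the coverage-gap bound (Theorem~\ref{theorem:coverage-gap-bound}) and the retrieval-regret bound (Theorem~\ref{theorem:neural-log-ucb-bound}) through the decomposition in Eq.~(\ref{eq:regret-decomposition}) and reads off the two regimes in $d_0$, with no further justification. Your compatibility checks go beyond the paper, which ignores both the mismatch between the recall pool $\mathcal{C}_t$ and the bank $\mathcal{M}_t$ and the adaptivity of the arm sets; however, the NTK lemmas the paper imports are stated for a fixed set $\{\bx_i\}_{i=1}^{TK}$, on which $\bH$ and $\bh$ are defined, so they do not give your claim that such analyses tolerate arm sets generated by the algorithm's own past choices, and that point remains an extra assumption rather than a verified step.
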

\paragraph{Data availability.} We organise all the processed dataset in this paper as a new benchmark named DTLBench, and we open-source a portion of it under a mixed open-source license at \url{https://huggingface.co/datasets/guosy/DTLBench}. The remaining portion involving MIMIC-IV falls under the PhysioNet Credentialed Health Data License 1.5.0. In accordance with PhysioNet policy, dataset projects may only be considered after publication; therefore, this component will be open-sourced following publication.
\paragraph{Code availability.} Source code for CASCADE is available under an open-source license at \url{https://github.com/guosyjlu/CASCADE}.
\paragraph{Acknowledgement.} This work is supported by National Key R\&D Program of China under Grant No. 2023YFF0905400 (Yi Chang), National Natural Science Foundation of China under Grant No. 624B2059 (Siyuan Guo), U2341229 (Yi Chang), 62476110 (Hechang Chen), Key R\&D Project of Jilin Province under Grant No. 20240304200SF (Hechang Chen), China Scholarship Council under Grant No. 202406170133 (Siyuan Guo), the New Cornerstone Science Foundation through the XPLORER PRIZE (Yi Chang).

\clearpage
\begin{hyphenrules}{nohyphenation}
\setlength{\bibsep}{.5ex plus .8ex}
\putbib
\end{hyphenrules}

\end{bibunit}

\begin{bibunit}
\renewcommand{\refname}{References for Supplementary Notes}
\clearpage
\setcounter{figure}{0}
\setcounter{table}{0}
\renewcommand{\thefigure}{E\arabic{figure}}
\renewcommand{\thetable}{E\arabic{table}}

\begin{figure}[tbhp]
    \centering
    \includegraphics[width=\linewidth]{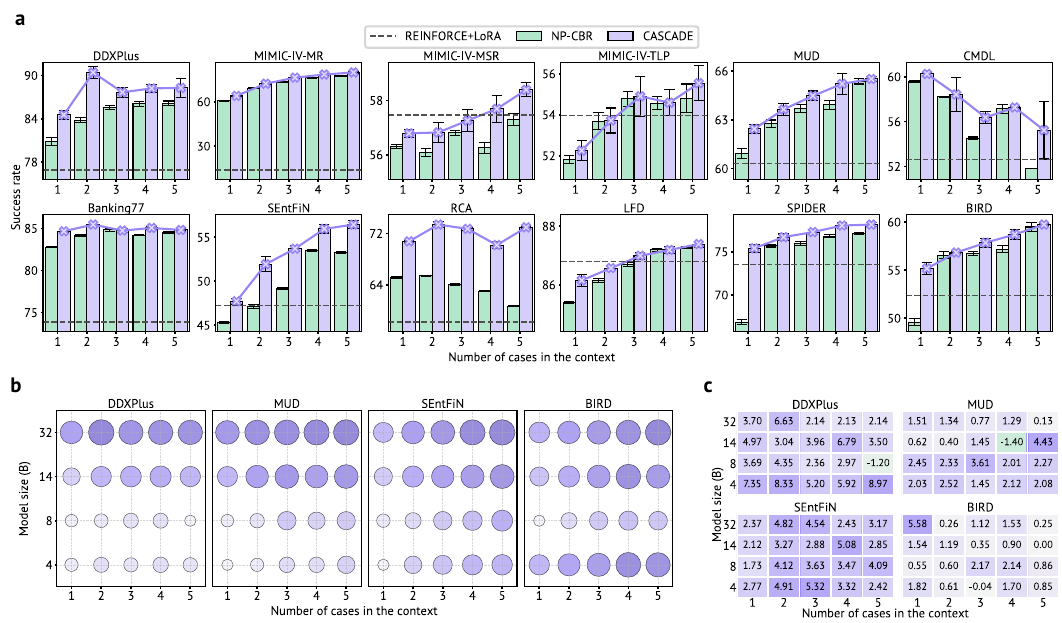}
    \caption{\textbf{Results on the extension of CASCADE to multiple cases.} All results are reported based on five different random seeds. \textbf{a}, Performance comparison under different number of cases with \texttt{Qwen3-32B} as the backbone LLM across 12 single-turn tasks. The centre represents the mean value and the error bar represents the standard deviation. In general, larger number of retrieved cases predictably increase the performance of CASCADE. \textbf{b}, CASCADE performance across model sizes and numbers of retrieved cases. Circle colour intensity and size reflect success rate, with darker and larger circles indicating better performance. \textbf{c}, Heatmap of CASCADE’s absolute improvement over the best non-parametric baseline NP-CBR, across different LLM model sizes and numbers of cases in the context. }
    \label{fig:multicase}
\end{figure}
\clearpage
\appendix

\etocdepthtag.toc{appendix}
\begingroup
  \etocsettagdepth{main}{none}
  \etocsettagdepth{appendix}{subsection}
  \renewcommand{\contentsname}{Contents of Supplementary Notes}
  \tableofcontents
\endgroup

% Reset figure, table, and equation numbering
\renewcommand{\thefigure}{S\arabic{figure}}
\renewcommand{\thetable}{S\arabic{table}}
\renewcommand{\theequation}{S\arabic{equation}}
\setcounter{figure}{0}
\setcounter{table}{0}
\setcounter{equation}{0}
\clearpage
\begin{table}[tbp]
\caption{\textbf{Summary statistics of the DTLBench.} The maximum steps refer to the maximum number of interaction steps that the environment allows per task.}
\resizebox{\linewidth}{!}{
\begin{tabular}{@{}clllcc@{}}
\toprule
\textbf{Property} & \textbf{Domain} & \textbf{Task} & \textbf{Dataset} & \textbf{\begin{tabular}[c]{@{}c@{}}Maximum\\ Steps\end{tabular}} & \textbf{\begin{tabular}[c]{@{}c@{}}Number of \\ Samples\end{tabular}} \\ \midrule
\multirow{12}{*}{Single-turn} & \multirow{4}{*}{Medical} & Medical Diagnosis & DDXPlus & 1 & 3136 \\
 &  & Medication Recommendation & MIMIC-IV-MR & 1 & 2881 \\
 &  & Medical Specialty Referral & MIMIC-IV-MSR & 1 & 2115 \\
 &  & Triage Level Prediction & MIMIC-IV-TLP & 1 & 2200 \\ \cmidrule{2-6}
 & \multirow{2}{*}{Legal} & Multi-Defendant Legal Charge Prediction & MUD & 1 & 1740 \\
 &  & Penalty Legal Prediction & CMDL & 1 & 2080 \\ \cmidrule{2-6}
 & \multirow{2}{*}{Financial} & Financial Costumer Intent Routing & Banking77 & 1 & 5000 \\
 &  & Entity-Aware Financial Sentiment Analysis & SEntFiN & 1 & 2299 \\ \cmidrule{2-6}
 & \multirow{2}{*}{AIOps} & AIOps Root Cause Analysis & RCA & 1 & 2925 \\
 &  & AIOps Log Fault Diagnosis & LFD & 1 & 3000 \\ \cmidrule{2-6}
 & \multirow{2}{*}{Coding} & Text-to-SQL & SPIDER & 1 & 2147 \\
 &  & Knowledge-Augmented Text-to-SQL & BIRD & 1 & 1534 \\ \midrule
\multirow{2}{*}{\begin{tabular}[c]{@{}c@{}}Multi-turn, \\ Simulated\end{tabular}} & \multirow{2}{*}{Embodied} & Houshold Embodied Decision Making & ALFWorld & 30 & 2000 \\
 &  & Scientific Embodied Decision Making & ScienceWorld & 10-30 & 1857 \\ \midrule
\multirow{2}{*}{\begin{tabular}[c]{@{}c@{}}Multi-turn,\\ Real-world\end{tabular}} & Information Seeking & Web-based Deep Search & 2Wiki & 5 & 2500 \\ \cmidrule{2-6}
 & Medical & Complex Tabular Reasoning on Electronic Health Records & MIMIC-III & 5 & 2500 \\ \bottomrule
\end{tabular}
}
\label{tab:benchmark-stats}
\end{table}
\section{Comparison with Existing Works}
\noindent\textbf{LLM Agents with Experiential Learning.} Recently, researchers have begun exploring experiential learning for LLM agents \cite{era}, emphasising policy improvement through experiences collected through interactions with the environment, instead of relying on human data. One line of this work is agentic reinforcement learning \cite{agentic-rl}, which leverages RL algorithms to finetune LLM agents based on a given offline task set. Among these methods, proximal policy optimisation (PPO) \cite{ppo}, group relative policy optimisation (GRPO) \cite{grpo}, and their variants have been widely adopted across diverse agentic applications \cite{gigpo, early-experience, gem}. However, these approaches rely on backpropagation across the LLM’s parameters, making them unsuitable for online settings where agents must continually adapt to a stream of tasks. Moreover, they assume that each task supports repeated rollouts for policy optimisation, whereas in our setting only a single reward observation is available per task.

Another line of work investigates experiential learning via context engineering \cite{context-engineering}, which can be broadly divided into prompt-based and memory-based approaches. For prompt-based methods, TextGrad \cite{textgrad} and Feedback Descent \cite{feedback-descent} iteratively update prompts using feedback obtained through repeated trials on a single query. In contrast, DSPy \cite{dspy} and GEPA \cite{gepa} focus on automatic prompt or system optimisation to learn from a fixed offline training set for policy learning. Closely related to our work, ICRL \cite{icrl} evolves prompts across a stream of tasks via in-context reinforcement learning with LLMs; however, its ability to support continual adaptation is constrained by its reliance on short-term memory mechanisms. For memory-based methods, such as Agent KB \cite{agentkb}, Memento \cite{memento}, and FLEX \cite{flex}, they typically maintain structured external memory to store and reuse experiences for policy improvement. However, they still follow the conventional training-testing paradigm and remain static during the deployment stage. Different from the aforementioned methods, deployment-time learning emphasises experiential learning in an online and gradient-free manner, where agents should continuously improve their policies over a stream of tasks without modifying the parameters of the underlying LLM. 

The most closely related work is the online setting introduced in DC \cite{dc} and ACE \cite{ace}, both of which develop agentic workflows to achieve incremental context adaptation for deployment-time learning. However, as noted in ACE \cite{ace}, these approaches depend on strong underlying LLMs to achieve stable improvements, potentially limiting their robustness across different model scales and deployment scenarios. Moreover, despite their early progress, they lack a principled learning mechanism, an important gap that this work aims to address. From this perspective, DC and ACE can be regarded as important precursors that highlight both the promise and the challenges of this setting, rather than as complete formulations of the broader deployment-time learning paradigm.

\noindent\textbf{Case-Based Reasoning with LLMs.} Case-based reasoning \cite{cbr-1, cbr-2, cbr-3} is a classic AI paradigm that emphasises the retrieval and adaptation of the past relevant cases while retaining new cases to the case bank. Recently, it has been integrated into LLM agents and has demonstrated strong performance across a variety of application domains \cite{cbr-llm-survey}. Conceptually, CBR is closely related to two widely adopted techniques in LLM systems: in-context learning (ICL) \cite{few-shot-learner, icl-survey} and retrieval-augmented generation (RAG) \cite{rag-1, rag-2}. ICL can be viewed as a static form of case reuse, where a small set of demonstrations is manually curated and fixed at inference time. In contrast, CBR maintains an explicit and evolving episodic memory that is continuously updated through interaction. This distinction allows CBR to accumulate experience over time, rather than relying on a fixed prompt design. Importantly, ICL establishes a natural lower bound for CBR: even when the retrieved cases are less relevant, CBR generally maintains performance similar to standard ICL and typically does not fall below zero-shot LLM performance. Compared with RAG, which conditions generation on unstructured text retrieved from static corpora, CBR retrieves, reuses and revises structured cases within a continuously evolving memory. 

Among prior works on CBR with LLMs, DS-Agent \cite{ds-agent} is a pioneering example, showcasing how empowering LLMs with CBR enables effective automated data science workflows. Re4 \cite{re4} introduces offline alignment to optimise the retrieval and reuse steps within a CBR-based automated software testing system. CBR-DDI \cite{ddi} explores the hybrid retrieval mechanism and knowledge-based prompting strategy for drug–drug interaction prediction with the CBR framework. Memento \cite{memento,memento2,memento-s} further applies CBR to deep research tasks, achieving new state-of-the-art performance on the GAIA leaderboard \cite{gaia}. By formalising case retrieval as a contextual bandit problem, CASCADE provides a principled mechanism for balancing exploration and exploitation, enabling no-regret learning and distinguishing it from prior empirical CBR works. Furthermore, different from these domain-specific methods, our work develops a general and scalable framework and demonstrates its effectiveness across diverse agentic tasks.

\noindent\textbf{Contextual Bandits for LLMs.}
Contextual bandit algorithms offer a principled framework for online decision-making by balancing exploration and exploitation conditioned on the observed context, and they have been successfully applied across a range of domains, particularly in information retrieval \cite{lin-ucb, scalable-bandit-recsys, neural-bandit-recsys}. Recently, several works have investigated the use of contextual bandits for LLMs, aiming to bridge the gap between practical applications and theoretically grounded online learning. INSTINCT \cite{instinct} formulates prompt optimisation for LLMs as a contextual bandit problem, and applies NeuralUCB \cite{neural-ucb} to find the optimal prompt with a given validation set. EASE \cite{ease} extends this line of work by leveraging contextual bandits to find an ordering-aware set of examples that improves the in-context learning performance of LLMs. PILOT \cite{llm-routing-bandit} utilises LinUCB \cite{lin-ucb} to model user preferences for LLM routing, while Poon et al. \cite{llm-selection-bandit} extend it to the multi-step LLM selection setting. In contrast to these works that rely on a fixed feature extractor, our method decouples representation learning from uncertainty estimation, enabling end-to-end optimisation of reward learning. Furthermore, we investigate a distinct setting, i.e., deployment-time learning under the CBR framework with LLMs.
\section{Benchmark Details}
\label{sup:benchmark}
In this work, we benchmark the deployment-time learning capabilities of LLM agents with 16 diverse tasks, including 12 single-turn tasks, 2 multi-turn simulated tasks and 2 multi-turn real-world tasks. For each task, we randomly shuffle all the samples into a fixed online query sequence to ensure fair and reproducible comparison, except for LFD where we order the samples based on the logging timestamp. We refer to this benchmark as DTLBench and summarise the statistics in Table \ref{tab:benchmark-stats}. We present the details of each task as below.

\subsection{Medical Diagnosis: DDXPlus}
Automated medical diagnosis systems have the potential to significantly improve healthcare accessibility, reduce diagnostic delays, and support clinicians with timely, evidence-based decision making. We utilise DDXPlus \cite{ddxplus}, a large-scale dataset for medical diagnosis, as the dataset, where the agent should predict the correct diagnosis from a set of 49 pathologies based on a given patient profile. Following StreamBench \cite{streambench}, we merge its preprocessed validation set and test set as the final dataset, consisting of 3136 samples in total. While DDXPlus is synthetic, it remains the largest public benchmark for medical diagnosis, and is widely used in research due to its high-quality data. In this work, we incorporate DDXPlus as a proof of concept to illustrate how LLM agents can benefit from deployment-time learning in medical diagnostic tasks.

\noindent\textbf{Reward Function.} If the prediction is correct, a reward of 1 is assigned; otherwise, 0.

\begin{apxtcolorbox}[Example Query for DDXPlus]
\footnotesize
Sex: Male, Age: 53

- I have recently had a viral infection.

- I have had a pericarditis.

- I have pain somewhere related to my reason for consulting.

- I am experiencing shortness of breath or difficulty breathing in a significant way.

- My symptoms are worse when lying down and alleviated while sitting up.

- On a scale of 0-10, the pain intensity is 7

- On a scale of 0-10, the pain's location precision is 4

- On a scale of 0-10, the pace at which the pain appear is 6

- The pain is:

* a knife stroke

* sharp

- The pain locations are:

* breast(R)

* breast(L)

- The pain radiates to these locations:

* thoracic spine

* posterior chest wall(L)
\end{apxtcolorbox}

\subsection{Medication Recommendation: MIMIC-IV-MR}
Medication recommendation systems can assist clinicians in selecting appropriate treatments, reducing prescription errors, and improving patient safety. MIMIC-IV \cite{mimiciv} is a large-scale real-world clinical database sourced from the electronic health record of the Beth Israel Deaconess Medical Center. We follow the script provided by LEADER \cite{leader} to extract the single-visit medication recommendation dataset, which consists of 2881 samples. For each sample, the agent should recommend a set of medications based on the provided diagnosis and procedure. There are 122 available mediations in total.

\noindent\textbf{Reward Function.} As this is a recommendation task, we compare the recommended mediations with the ground-truth prescription. We follow the original work \cite{leader} to utilise the Jaccard similarity score as the evaluation metric. Given the recommended mediation set $y$ and the ground-truth mediation set $\hat{y}$, the Jaccard similarity score is calculated as: 
\begin{equation}
\nonumber
    \text{Jaccard}(y,\hat{y})=\frac{|y \cap\hat{y}|}{|y \cup \hat{y}|}.
\end{equation}
If the Jaccard similarity score is greater than 0.2, a reward of 1 is assigned; otherwise, 0. Note that we reveal the ground-truth mediation set after the answer is submitted to simulate the real-world scenarios for this task.

Below is an example query from MIMIC-IV-MR. As the dataset is distributed under a restricted license, detailed patient information has been omitted.

\begin{apxtcolorbox}[Example Query for MIMIC-IV-MR]
\footnotesize
In this visit, he has diagnosis: <DIAGNOSIS>

procedures: <PROCEDURES>
\end{apxtcolorbox}

\subsection{Medical Specialty Referral: MIMIC-IV-MSR}
Medical specialty referral systems help guide patients to the most appropriate clinical department. When referrals are incorrect, they can cause delayed diagnoses, unnecessary transfers, and extended patient discomfort. We follow MedLLMBench \cite{medllmbench} to extract medical specialty referral tasks from the large-scale real-world clinical database MIMIC-IV \cite{mimiciv}. After preprocessing, we obtain a total of 2115 samples, where the agent is required to select the most appropriate medical specialty from 19 possible options based on the given patient profile.

\noindent\textbf{Reward Function.} If the referral prediction exactly matches the ground truth, a reward of 1 is assigned; otherwise, 0.

Below is an example query from MIMIC-IV-MSR. As the dataset is distributed under a restricted license, detailed patient information has been omitted.

\begin{apxtcolorbox}[Example Query for MIMIC-IV-MSR]
\footnotesize
Gender: <GENDER>, Race: <RACE>, Age: <AGE>. <SYMPTOM>
\end{apxtcolorbox}

\subsection{Triage Level Prediction: MIMIC-IV-TLP}
Accurate triage decisions are critical for prioritising limited emergency care resources and ensuring that patients in life-threatening conditions receive timely attention. We follow MedLLMBench \cite{medllmbench} to extract triage level prediction tasks from the large-scale real-world clinical database MIMIC-IV \cite{mimiciv}. After preprocessing, we obtain a total of 2200 samples, where the agent is required to predict the triage level defined by the Emergency Severity Index (ESI) \cite{esi} based on the given patient profile.

\noindent\textbf{Reward Function.} If the triage level prediction exactly matches the ground truth, a reward of 1 is assigned; otherwise, 0.

Below is an example query from MIMIC-IV-TLP. As the dataset is distributed under a restricted license, detailed patient information has been omitted.

\begin{apxtcolorbox}[Example Query for MIMIC-IV-TLP]
\footnotesize
Gender: <GENDER>, Race: <RACE>, Age: <AGE>. <SYMPTOM>
\end{apxtcolorbox}

\subsection{Multi-Defendant Legal Charge Prediction: MUD}
Legal charge prediction systems can improve judicial efficiency by assisting prosecutors and legal professionals in identifying appropriate charges based on case facts. MUD \cite{mud} is a real-world multi-defendant legal charge prediction dataset sourced from the Chinese government website China Judgment Online (CJO)\footnote{\label{fn:cjo}\url{https://wenshu.court.gov.cn/}}. We utilise the training split of the original dataset as the online dataset, consisting of 1740 samples and 22 charges. For each sample, the agent is provided with the fact description, and then required to recommend charges for each defendant. To make our dataset more accessible to non-Chinese speakers, we translate the whole dataset into English by prompting \texttt{gpt-4o}.

\noindent\textbf{Reward Function.} A reward of 1 is assigned if all defendants in the fact description are recommended with correct charges; otherwise, the reward is 0.

\begin{apxtcolorbox}[Example Query for MUD]
\footnotesize
Upon trial, it was found that from February to September 2017, the defendant <Defendant A>, at locations including No. 64 Qishan West Road, Huanggu District, Shenyang City, fabricated his ability to arrange jobs to victims Gao Mou1 and Gao Mou2. After gaining the trust of victims Gao Mou1 and Gao Mou2, <Defendant A> instructed the defendant <Defendant B> to impersonate Human Resources Section Chief Chi Mou twice within the Shen Mou Courtyard. <Defendant B> handed over job transfer procedures to the victims. The victims Gao Mou1 and Gao Mou2 paid a total of RMB 25,000 to <Defendant A> via bank transfer as service fees. The illicit money has been squandered.
\end{apxtcolorbox}

\subsection{Penalty Legal Prediction: CMDL}
Penalty recommendation systems can support judges and legal practitioners by providing consistent, data-driven insights during sentencing. CMDL \cite{cmdl} is a large-scale real-world multi-defendant legal judgement prediction dataset sourced from the Chinese government website China Judgment Online (CJO). We employ the test split of CMDL-big as the seed dataset and perform basic cleaning and filtering to construct the final dataset, which consists of 2080 samples. For each sample, the agent is provided with a fact description and tasked with recommending appropriate penalties for each defendant. The possible penalty types include Surveillance, Detention, Imprisonment, Death Penalty, Life Imprisonment, and Fine. Notably, there may be multiple penalties for each defendant, which increases the complexity of the task. To make our dataset more accessible to non-Chinese speakers, we translate the whole dataset into English by prompting \texttt{gpt-4o}.

\noindent\textbf{Reward Function.} A reward of 1 is assigned if all defendants in the fact description are recommended with correct penalties; otherwise, the reward is 0.

\begin{apxtcolorbox}[Example Query for CMDL]
\footnotesize
Upon trial, it was found:

1. On the night of July 8, 2013, the defendants Chen, Liu, and Chu conspired in advance to snatch a gold necklace in the room of the Renhe Hotel at Chicheng Street, Tiantai County. After coming to an agreement, the three took a moped driven by Chu to Shiliang Bay Park near the bridge on Jiuchang Road, Chicheng Street, Tiantai County. When the victim, Ding, got on an electric bike, Chen snatched a gold necklace from around Ding's neck. After succeeding, they fled the scene on Chu's moped. Subsequently, Chu and Liu sold the stolen gold necklace to Seagull Goldsmiths for 3,600 yuan. The appraisal determined that the stolen gold necklace was valued at 5,450 yuan. The gold from the necklace has been returned to the victim after being remelted.

2. In the afternoon of July 9, 2013, defendants Chen and Liu agreed to snatch another gold necklace. Liu drove the moped carrying Chen to a road outside Tiantai Vocational and Technical Secondary School on Tiantaishan Middle Road in Tiantai County, approached Yang, who was riding an electric bike, and Chen snatched the gold necklace from around Yang’s neck. The appraisal determined that the stolen gold necklace was valued at 4,860 yuan, and the necklace has been returned to the victim. After the incident, defendant Chu assisted the police in capturing one suspect.

In summary, defendants Chen and Liu committed two instances of snatching, gaining property valued at a total of 10,310 yuan. Defendant Chu participated in one instance, gaining property valued at 5,450 yuan. The aforementioned facts were not disputed by defendants Chen, Liu, and Chu during the court hearing. Their confessions are consistent with the statements of the victims, Ding and Yang, and the testimony of witness Ying. The evidence includes the Appraisal Conclusion Document, Identification Record, Scene Investigation Record, Appraisal Opinion, Criminal Judgment, Video Material, Seizure and Return Lists, Incident-related Mobile Inventory, Proof, Tiantai Seagull Goldsmiths Register, Household Registration Certificate, Account of Arrest, and Description of Previous Offenses, all of which are sufficient to establish the facts.
\end{apxtcolorbox}

\subsection{Financial Costumer Intent Routing: Banking77}
Accurate intent routing plays a crucial role in modern financial customer service systems, where users expect fast and precise support for a wide range of banking needs. Banking77 \cite{banking77} is a financial user intent routing dataset with 77 fine-grained intents. Given the large size of the original dataset, full-scale evaluation is computationally expensive. Therefore, we randomly downsample 5000 samples to construct our final dataset. For each sample, the agent is provided with a customer service query and tasked with routing it to the corresponding intent category.

\noindent\textbf{Reward Function.} If the prediction is correct, a reward of 1 is assigned; otherwise, 0.

\begin{apxtcolorbox}[Example Query for Banking77]
\footnotesize
I have checked the account information several times to be sure that it is correct, but the in country transfer I did a few days ago still has not appeared! What is the hold up?
\end{apxtcolorbox}

\subsection{Entity-Aware Financial Sentiment Analysis: SEntFiN}
Financial news often conveys subtle sentiment targeted at specific entities such as companies, commodities, or markets. Accurately identifying sentiment at an entity level can support analysts, traders, and automated financial systems in understanding market signals more precisely. SEntFiN \cite{sentifin} is an entity-aware sentiment analysis dataset for financial news. We utilise the training split of the original dataset to construct the final dataset. To increase task difficulty, we retain only samples containing multiple entities, resulting in 2299 instances. For each sample, the agent is presented with a news headline and required to identify all mentioned entities along with their associated sentiment (positive or negative).

\noindent\textbf{Reward Function.} A reward of 1 is assigned if all entities in a headline are correctly extracted and their corresponding sentiments are accurately predicted; otherwise, the reward is 0.

\begin{apxtcolorbox}[Example Query for SEntFiN]
\footnotesize
ITC, Godrej Consumer, United Spirits \& Jubilant Foods top bets in consumer sector for 2015: CLSA
\end{apxtcolorbox}

\subsection{AIOps Root Cause Analysis: RCA}
Efficient root cause analysis is essential in large-scale information technology operations, where system reliability directly influences business continuity and user experience. RCA \cite{logreasoner} is a real-world root cause analysis dataset for AIOps collected from Huawei's public forums. We merge the router and switch subset from the original paper to construct our final dataset, resulting in 2925 samples in total. For each task, the agent is provided a segment of the log, and required to indicate the type of the alert.

\noindent\textbf{Reward Function.} If the prediction is correct, a reward of 1 is assigned; otherwise, 0.

\begin{apxtcolorbox}[Example Query for RCA]
\footnotesize
WLAN/4/hwApVapStaFullTrap:OID [oid] VAP has the max number of stations notify.(APMAC=[OPAQUE], APName=[STRING], RADIOID=[INTEGER], WLANID=[INTEGER], FailCause=[INTEGER], PermitNum=[INTEGER], APID=[INTEGER])
\end{apxtcolorbox}

\subsection{AIOps Log Fault Diagnosis: LFD}
Fault diagnosis in cloud environments is critical for ensuring service availability, performance stability, and operational efficiency. We collect LFD dataset sourced from a past public AIOps competition\footnote{\url{https://tianchi.aliyun.com/competition/entrance/531947}}, which is publicly available now \footnote{\url{https://tianchi.aliyun.com/dataset/121954}}. The competition dataset is derived from real-world log data collected on Alibaba Cloud. We follow an open-sourced notebook\footnote{\url{https://tianchi.aliyun.com/notebook/357437}} to preprocess the dataset and select the first 3000 samples based on chronological order to construct the final dataset. Each sample consists of a segment of logs, and the agent is tasked with diagnosing the corresponding fault type-central processing unit (CPU), memory, or hardware.

\noindent\textbf{Reward Function.} If the prediction is correct, a reward of 1 is assigned; otherwise, 0.

\begin{apxtcolorbox}[Example Query for LFD]
\footnotesize
memory \#0xe2 | correctable ecc | asserted. memory \#0xe2 | correctable ecc | asserted
\end{apxtcolorbox}

\subsection{Text-to-SQL: SPIDER}
Text-to-SQL systems can significantly lower the barrier for accessing structured data, enabling users without database expertise to query complex information through natural language. SPIDER \cite{spider} is a large and high-quality text-to-SQL dataset. We utilise the test split of the original dataset as the final dataset, consisting of 2147 samples. For each sample, the agent is provided the database schema and the query problem, and is required to generated SQL code script to solve the problem.

\noindent\textbf{Reward Function.} We utilise the execution accuracy as the evaluation metric. If the generated SQL script can be executed successfully and return the ground-truth answer of the problem, a reward of 1 is assigned; otherwise, 0.

\begin{apxtcolorbox}[Example Query for SPIDER]
\footnotesize
Here is the database schema:

<DATABASE\_SCHEMA>

Question: How many authors do we have?
\end{apxtcolorbox}

\subsection{Knowledge-Augmented Text-to-SQL: BIRD}
Different from SPIDER, BIRD \cite{bird} is another text-to-SQL dataset that requires domain-specific knowledge to solve the given query problem. We utilise the development split of the original dataset as the final dataset, consisting of 1534 samples. For each sample, the agent is provided the database schema, the query problem, and the related domain-specific knowledge, and then is required to generated SQL code script to solve the problem.

\noindent\textbf{Reward Function.} Similar to SPIDER, we utilise the execution accuracy as the evaluation metric. If the generated SQL script can be executed successfully and return the ground-truth answer of the problem, a reward of 1 is assigned; otherwise, 0.

\begin{apxtcolorbox}[Example Query for BIRD]
\footnotesize
Here is the database schema:

<DATABASE\_SCHEMA>

Question: How many elders obtained the ``Supporter'' badge? 

Tips: ``Supporter'' is the Name of badge;  elders refers to Age > 65
\end{apxtcolorbox}

\subsection{Household Embodied Decision Making: ALFWorld}
Household embodied decision-making tasks represent a crucial step toward deploying intelligent agents in real physical environments. ALFWorld \cite{alfworld} is a household embodied interactive environment with six different task types. We utilise the train and unseen split of the original paper as the final dataset. Specifically, we randomly downsample 1866 tasks from the train split, and combine them with all 134 tasks from the unseen split to form the final online dataset. In each task, the agent receives a natural-language task description and then sequentially interacts with the environment to achieve the goal. The maximum number of interaction steps is set to 30.

\noindent\textbf{Reward Function.} We adopt the original ALFWorld reward function: the agent receives a reward of 1 upon successful task completion within 30 steps, and 0 otherwise.

\begin{apxtcolorbox}[Example Initial Query for ALFWorld]
\footnotesize
You are in the middle of a room. Looking quickly around you, you see a cabinet 13, a cabinet 12, a cabinet 11, a cabinet 10, a cabinet 9, a cabinet 8, a cabinet 7, a cabinet 6, a cabinet 5, a cabinet 4, a cabinet 3, a cabinet 2, a cabinet 1, a coffeemachine 1, a countertop 1, a drawer 5, a drawer 4, a drawer 3, a drawer 2, a drawer 1, a fridge 1, a garbagecan 1, a microwave 1, a sinkbasin 1, a stoveburner 4, a stoveburner 3, a stoveburner 2, a stoveburner 1, and a toaster 1.

Your task is to: heat some cup and put it in cabinet.
\end{apxtcolorbox}

\subsection{Scientific Embodied Decision Making: ScienceWorld}
 Scientific embodied reasoning tasks help develop agents that can perform experiments, test hypotheses, and reason about cause–effect relationships in dynamic environments. ScienceWorld \cite{scienceworld} is a scientific embodied interactive environment at the level of a standard elementary school science curriculum. We select 11 different types of tasks, including \textit{use-thermometer}, \textit{test-conductivity}, \textit{test-conductivity-of-unknown-substances}, \textit{find-animal}, \textit{find-living-thing}, \textit{find-non-living-thing}, \textit{find-plant}, \textit{grow-plant}, \textit{lifespan-longest-lived}, \textit{lifespan-longest-lived-then-shortest-lived}, and \textit{lifespan-shortest-lived}. For each task, we randomly sample approximately 200 variations to form the final dataset, with 1857 tasks in total. In each task, the agent receives a natural-language task description and then sequentially interacts with the environment to achieve the goal. The maximum number of interaction steps is set to the recommended number, ranging from 10 to 30.

\noindent\textbf{Reward Function.} We extend the original ScienceWorld reward function to the sparse reward setting: the agent receives a reward of 1 if the environment score achieves 100, and 0 otherwise.

\begin{apxtcolorbox}[Example Initial Query for ScienceWorld]
\footnotesize
Your task is to find a(n) plant. First, focus on the thing. Then, move it to the orange box in the bathroom.

This room is called the living room. In it, you see: the agent, a substance called air, a chair (On the chair is: nothing.), a couch (On the couch is: a white pillow.), a finger painting, a table (On the table is: nothing.).

You also see: A door to the hallway (that is open).

In your inventory, you see: an orange.
\end{apxtcolorbox}

\subsection{Web-based Deep Search: 2Wiki}
Deep search is a critical capability for modern web-based question answering. 2Wiki \cite{2wiki} is a multi-hop question-answering dataset. We adopt the development split as the seed dataset for preprocessing. Specifically, we remove relatively simple instances (e.g., yes–no and multiple-choice questions) and then randomly sample 2500 tasks to construct the final dataset. For each task, the agent is given a user query and then iteratively invokes appropriate tools to acquire evidence and progress toward an answer. The interaction horizon is limited to a maximum of five steps.

\noindent\textbf{Reward Function.} We follow previous works \cite{memento, deepresearcher} to utilise LLM-as-judge to judge the correctness of the answer with \texttt{gpt-4o-mini}. The agent receives a reward of 1 if the LLM judges that the answer matches the ground truth, and 0 otherwise.

\begin{apxtcolorbox}[Example Query for 2Wiki]
\footnotesize
Which country Zubaidah Bint Ja'Far's husband is from?
\end{apxtcolorbox}

\subsection{Complex Tabular Reasoning on Electronic Health Records: MIMIC-III}
Accessing and retrieving patient data from electronic health records is significant for routine clinical practice. We utilise MIMIC-III \cite{mimic-iii}, a real-world clinical dataset to benchmark the complex tabular reasoning capabilities of LLM agents. Specifically, we utilise the processed dataset from EHRSQL \cite{ehrsql} and follow EHRAgent \cite{ehragent} to filter unverifiable queries. After that, we randomly sample 2500 tasks to construct the final dataset. For each task, the agent is given a user query and then iteratively interacts with the workspace via Python script to derive the final answer. The interaction horizon is limited to a maximum of five steps.

\noindent\textbf{Reward Function.} We utilise the execution accuracy as the evaluation metric. If the generated Python
script can be executed successfully and return the ground-truth answer of the problem, a reward of 1
is assigned; otherwise, 0.

\begin{apxtcolorbox}[Example Query for MIMIC-III]
\footnotesize
count the number of times that patient 73713 had received a vancomycin laboratory test during this hospital visit.
\end{apxtcolorbox}
\section{Supplementary Results}
In this section, we describe the experimental setup and provide additional empirical results.

\subsection{Implementation Details}

\noindent\textbf{Model Configuration.} We serve the Qwen3-series LLMs \cite{qwen3} via the vLLM framework \cite{vllm} with bfloat16 precision for efficient inference. Across all settings, we set the sampling temperature to 0.1, the top-p value to 0.8, the top-k value to 20, and the presence penalty to 1.5. For faster responses, we use the non-thinking mode during inference. The specific model versions for each model size are listed below.
\begin{itemize}
    \item \textbf{Qwen3-32B}: \texttt{Qwen3-32B} (\url{https://huggingface.co/Qwen/Qwen3-32B}).
    \item \textbf{Qwen3-14B}: \texttt{Qwen3-14B} (\url{https://huggingface.co/Qwen/Qwen3-14B}).
    \item \textbf{Qwen3-8B}: \texttt{Qwen3-8B} (\url{https://huggingface.co/Qwen/Qwen3-8B}).
    \item \textbf{Qwen3-4B}: \texttt{Qwen3-4B-Instruct-2507} (\url{https://huggingface.co/Qwen/Qwen3-4B-Instruct-2507}).
    \item \textbf{Qwen3-30B-A3B-Instruct-2507}: \texttt{Qwen3-30B-A3B-Instruct-2507} (\url{https://huggingface.co/Qwen/Qwen3-30B-A3B-Instruct-2507}).
\end{itemize}
We exclude the original \texttt{Qwen3-4B} model because its mixed-thinking-mode training pipeline results in inferior performance. In its place, we employ the newly open-sourced instruct variant. We only utilise \texttt{Qwen3-30B-A3B-Instruct-2507} for deep search tasks due to its advanced tool-calling capabilities.

For the close-sourced LLM, we utilise \texttt{gemini-2.0-flash} via the API, and set the sampling temperature to 0.05, and the top-p value to 0.8. Note that we do not use the most advanced Gemini-2.5-series LLMs \cite{gemini-2.5} because its overly restrictive safety policy results in frequent response refusals on many DTLBench tasks.

For retrieval models, we utilise \texttt{gte-modernbert-base} as the pretrained embedding model for first-stage retrieval, and \texttt{gte-reranker-modernbert-base} as the base reranker model for reranking \cite{modern-bert, modern-bert-2}. Both are built on top of ModernBERT \cite{ModernBERT}, a modernised bidirectional encoder-only Transformer model with the context length of 8,192. The specific model versions are listed as below.
\begin{itemize}
    \item \textbf{Embedding model}: \texttt{gte-modernbert-base}  (\url{https://huggingface.co/Alibaba-NLP/gte-modernbert-base}).
    \item \textbf{Reranking model}: \texttt{gte-reranker-modernbert-base} (\url{https://huggingface.co/Alibaba-NLP/gte-reranker-modernbert-base}).
\end{itemize}

\begin{table}[tbp]
\caption{\textbf{Hyper-parameters for REINFORCE+LoRA.}}
\centering
\begin{tabular}{@{}lc@{}}
\toprule
\textbf{Name}                            & \textbf{Value}                  \\ \midrule
Matrix rank in LoRA                      & $8$                             \\
Scaling factor in LoRA                   & $16$                            \\
Target modules in LoRA                   & Q projections and V projections \\
Dropout in LoRA                          & $0.0$                           \\
Bias learning in LoRA                    & No                              \\
Batch size                               & $32$                            \\
KL coefficient                           & $0.1$                           \\
Optimiser                                & AdamW                           \\
Learning rate                            & $\{3e-6, 1e-5, 3e-5, 1e-4\}$    \\
Weight decay                             & $1e-5$                          \\ \bottomrule
\end{tabular}
\label{tab:hyper-parameter-reinforce-lora}
\end{table}

\noindent\textbf{Baselines.} As there are few related works that investigate the deployment-time learning setting, we include the following methods as our baselines. Note that two relevant baselines, DC \cite{dc} and ACE \cite{ace}, are proposed under the online adaptation setting in their respective papers. However, both methods rely on significantly stronger backbone LLMs than \texttt{Qwen3-32B} to effectively support the reasoning required by their context engineering strategies. Results reported in ACE \cite{ace} further show that these approaches can even lead to performance degradation when applied to weaker open-source backbone LLMs. Therefore, we do not include them as baselines.
\begin{itemize}
    \item \textbf{Zero-shot}. This baseline relies solely on the foundational capabilities of the underlying LLM and does not utilise any reward feedback from the environment, thus serving as a reference point for performance without deployment-time learning.
    \item \textbf{ICRL} \cite{icrl}. This baseline utilises the in-context learning and reflection capabilities of LLMs to implement in-context reinforcement learning. Specifically, it provides the most recent $K$ trials directly within the prompt, with each trial consisting of the query, the generated solution, as well as the reward feedback.
    \item \textbf{ICRLPlus} \cite{icrl}. This baseline extends ICRL by including only the most recent $K$ correct trials in the prompt, serving as a reference point for evaluating pure in-context learning performance.
    \item \textbf{NP-CBR}. This baseline represents the non-parametric variant of CBR, which has been widely adopted by previous works \cite{ds-agent, memento}. It relies solely on the pretrained reranker model’s capabilities, without any online adaptation based on reward feedback. Thus, it also serves as an ablation of our CASCADE framework, isolating the impact of the proposed bandit-based retrieval policy.
    \item \textbf{REINFORCE+LoRA} \cite{cursor-online-RL, lora-no-regret}. This baseline represents the most advanced parametric baseline that integrates the industrial success of on-policy RL with LoRA finetuning techniques. We summarise the hyper-parameter setting of REINFORCE+LoRA in Table~\ref{tab:hyper-parameter-reinforce-lora}, where we perform the hyper-parameter tuning on learning rate for each task.
\end{itemize}

\begin{table}[tbp]
\caption{\textbf{Hyper-parameters for CASCADE.}}
\centering
\begin{tabular}{@{}lcc@{}}
\toprule
\textbf{Name}                            & \textbf{Symbol} & \textbf{Value}               \\ \midrule
Number of cases for reranking            & $K$             & $32$                         \\
Exploration coefficient                  & $\alpha$        & $\{0.1,0.2,\ldots,1.0\}$          \\
Regularisation strength                  & $\lambda$        & $0.1$          \\ 
Batch size                               &                 & $32$                         \\
Training interval                        & $H$             & $32$                         \\
Optimiser                                &                 & AdamW                        \\
Learning rate                            & $\eta$          & $\{1e-6, 3e-6, 1e-5, 2e-5\}$ \\
Weight decay                             &                 & $1e-5$                       \\ \bottomrule
\end{tabular}

\label{tab:hyper-parameter-CASCADE}
\end{table}
\noindent\textbf{Implementation details of CASCADE.} We summarise the hyper-parameter setting of CASCADE in Table~\ref{tab:hyper-parameter-CASCADE}. We first tune the learning rate for each task while fixing the exploration coefficient to 0. Then, using the best learning rate, we further tune the exploration coefficient. As with REINFORCE+LoRA, the learning rate is the most critical hyper-parameter for CASCADE. In contrast, we have shown in the main paper that CASCADE is robust to the exploration coefficient. Therefore, the overall tuning effort is comparable to that of the REINFORCE+LoRA baseline. In practice, we suggest the users tuning hyper-parameters with a small-scale online experiments before deployment, which is also the best practice for online learning algorithms \cite{neural-ucb, cf-bandit, graph-bandit, instinct, ease}.

\subsection{Detailed Overall Results}
For clarity and ease of future reference, we summarise all main results in Table~\ref{tab:overall}, including the mean values and standard deviations. For \texttt{Qwen3-4B}, we do not report results on MIMIC-IV-MR due to its near-zero performance on this task. For \texttt{gemini-2.0-flash}, we do not report results on MIMIC-IV-MR, MIMIC-IV-MSR, MIMIC-IV-TLP due to the restricted dataset license.

\begin{sidewaystable}[htbp]
\caption{\textbf{Detailed overall comparison across 12 single-turn tasks in DTLBench.} We report the mean and standard deviation over five random seeds. For each task and model, the best performance is highlighted in bold. `N/A' indicates missing results: \texttt{Qwen3-4B} fails to run in the MIMIC-IV-MR dataset, and \texttt{gemini-2.0-flash} is not permitted due to dataset licensing restrictions.}
\resizebox{\linewidth}{!}{
\begin{tabular}{@{}lccccccccccccc@{}}
\toprule
\multicolumn{1}{c}{\textbf{}} &
  \textbf{DDXPlus} &
  \textbf{\begin{tabular}[c]{@{}c@{}}MIMIC-IV\\ -MR\end{tabular}} &
  \textbf{\begin{tabular}[c]{@{}c@{}}MIMIC-IV\\ -MSR\end{tabular}} &
  \textbf{\begin{tabular}[c]{@{}c@{}}MIMIC-IV\\ -TLP\end{tabular}} &
  \textbf{MUD} &
  \textbf{CMDL} &
  \textbf{Banking77} &
  \textbf{SEntFiN} &
  \textbf{RCA} &
  \textbf{LFD} &
  \textbf{SPIDER} &
  \textbf{BIRD} &
  \textbf{Average} \\ \midrule
\multicolumn{14}{l}{\cellcolor[HTML]{EFEFEF}\texttt{Qwen3-4B}} \\ \midrule
\textbf{Zero-shot} &
  36.58{\scriptsize$\pm$0.16} &
  N/A &
  50.53{\scriptsize$\pm$0.12} &
  26.85{\scriptsize$\pm$0.61} &
  34.72{\scriptsize$\pm$0.04} &
  19.24{\scriptsize$\pm$0.12} &
  62.33{\scriptsize$\pm$0.04} &
  24.72{\scriptsize$\pm$0.05} &
  23.52{\scriptsize$\pm$0.06} &
  81.92{\scriptsize$\pm$0.05} &
  70.15{\scriptsize$\pm$0.19} &
  52.07{\scriptsize$\pm$0.16} &
  43.88 \\
\textbf{ICRL} &
  44.52{\scriptsize$\pm$0.08} &
  N/A &
  51.01{\scriptsize$\pm$0.09} &
  28.43{\scriptsize$\pm$0.15} &
  35.90{\scriptsize$\pm$0.07} &
  21.60{\scriptsize$\pm$0.20} &
  62.92{\scriptsize$\pm$0.06} &
  27.15{\scriptsize$\pm$0.08} &
  21.72{\scriptsize$\pm$0.09} &
  83.31{\scriptsize$\pm$0.05} &
  71.69{\scriptsize$\pm$0.17} &
  51.71{\scriptsize$\pm$0.22} &
  45.45 \\
\textbf{ICRLPlus} &
  44.11{\scriptsize$\pm$0.06} &
  N/A &
  50.10{\scriptsize$\pm$0.16} &
  31.69{\scriptsize$\pm$0.19} &
  38.89{\scriptsize$\pm$0.09} &
  27.88{\scriptsize$\pm$0.27} &
  63.04{\scriptsize$\pm$0.03} &
  27.99{\scriptsize$\pm$0.09} &
  26.05{\scriptsize$\pm$0.04} &
  83.37{\scriptsize$\pm$0.04} &
  71.67{\scriptsize$\pm$0.15} &
  52.86{\scriptsize$\pm$0.25} &
  47.06 \\
\textbf{NP-CBR} &
  62.69{\scriptsize$\pm$0.23} &
  N/A &
  56.56{\scriptsize$\pm$0.05} &
  34.93{\scriptsize$\pm$0.37} &
  44.84{\scriptsize$\pm$0.06} &
  45.92{\scriptsize$\pm$0.24} &
  77.36{\scriptsize$\pm$0.06} &
  35.19{\scriptsize$\pm$0.27} &
  53.70{\scriptsize$\pm$0.19} &
  85.42{\scriptsize$\pm$0.05} &
  73.43{\scriptsize$\pm$0.18} &
  52.93{\scriptsize$\pm$0.11} &
  56.63 \\
\textbf{REINFORCE+LoRA} &
  61.39{\scriptsize$\pm$0.53} &
  N/A &
  \textbf{59.65{\scriptsize$\pm$0.64}} &
  \textbf{44.70{\scriptsize$\pm$1.60}} &
  \textbf{47.69{\scriptsize$\pm$0.82}} &
  45.89{\scriptsize$\pm$2.12} &
  65.89{\scriptsize$\pm$0.48} &
  35.66{\scriptsize$\pm$1.64} &
  49.31{\scriptsize$\pm$0.94} &
  85.57{\scriptsize$\pm$0.44} &
  73.42{\scriptsize$\pm$0.55} &
  53.12{\scriptsize$\pm$0.30} &
  56.57 \\
\textbf{CASCADE (Ours)} &
  \textbf{70.04{\scriptsize$\pm$1.07}} &
  N/A &
  57.43{\scriptsize$\pm$0.33} &
  36.33{\scriptsize$\pm$0.60} &
  46.87{\scriptsize$\pm$0.36} &
  \textbf{49.12{\scriptsize$\pm$0.80}} &
  \textbf{80.10{\scriptsize$\pm$0.30}} &
  \textbf{37.96{\scriptsize$\pm$0.25}} &
  \textbf{57.45{\scriptsize$\pm$0.82}} &
  \textbf{86.45{\scriptsize$\pm$0.18}} &
  \textbf{75.90{\scriptsize$\pm$0.88}} &
  \textbf{54.59{\scriptsize$\pm$0.50}} &
  \textbf{59.29} \\ \midrule
\multicolumn{14}{l}{\cellcolor[HTML]{EFEFEF}\texttt{Qwen3-8B}} \\ \midrule
\textbf{Zero-shot} &
  45.20{\scriptsize$\pm$0.11} &
  5.94{\scriptsize$\pm$0.25} &
  50.49{\scriptsize$\pm$0.24} &
  35.60{\scriptsize$\pm$0.49} &
  39.76{\scriptsize$\pm$0.06} &
  36.22{\scriptsize$\pm$0.15} &
  61.37{\scriptsize$\pm$0.04} &
  26.49{\scriptsize$\pm$0.12} &
  38.08{\scriptsize$\pm$0.03} &
  74.61{\scriptsize$\pm$0.07} &
  60.96{\scriptsize$\pm$0.28} &
  43.96{\scriptsize$\pm$0.15} &
  43.22 \\
\textbf{ICRL} &
  49.50{\scriptsize$\pm$0.09} &
  3.19{\scriptsize$\pm$0.21} &
  53.40{\scriptsize$\pm$0.18} &
  46.76{\scriptsize$\pm$0.13} &
  40.68{\scriptsize$\pm$0.13} &
  29.40{\scriptsize$\pm$0.24} &
  60.43{\scriptsize$\pm$0.06} &
  26.12{\scriptsize$\pm$0.15} &
  21.11{\scriptsize$\pm$0.12} &
  81.53{\scriptsize$\pm$0.08} &
  67.66{\scriptsize$\pm$0.12} &
  43.25{\scriptsize$\pm$0.31} &
  43.59 \\
\textbf{ICRLPlus} &
  48.57{\scriptsize$\pm$0.13} &
  23.36{\scriptsize$\pm$0.40} &
  55.23{\scriptsize$\pm$0.24} &
  48.75{\scriptsize$\pm$0.13} &
  42.48{\scriptsize$\pm$0.08} &
  31.69{\scriptsize$\pm$0.13} &
  60.45{\scriptsize$\pm$0.02} &
  27.97{\scriptsize$\pm$0.05} &
  28.55{\scriptsize$\pm$0.26} &
  80.98{\scriptsize$\pm$0.05} &
  68.19{\scriptsize$\pm$0.09} &
  44.86{\scriptsize$\pm$0.18} &
  46.76 \\
\textbf{NP-CBR} &
  65.44{\scriptsize$\pm$0.16} &
  52.85{\scriptsize$\pm$0.48} &
  62.73{\scriptsize$\pm$0.17} &
  49.91{\scriptsize$\pm$0.25} &
  45.45{\scriptsize$\pm$0.12} &
  46.86{\scriptsize$\pm$0.15} &
  78.92{\scriptsize$\pm$0.22} &
  38.04{\scriptsize$\pm$0.06} &
  60.93{\scriptsize$\pm$0.06} &
  85.30{\scriptsize$\pm$0.08} &
  66.74{\scriptsize$\pm$0.37} &
  48.11{\scriptsize$\pm$0.21} &
  58.44 \\
\textbf{REINFORCE+LoRA} &
  65.55{\scriptsize$\pm$0.48} &
  4.94{\scriptsize$\pm$0.17} &
  \textbf{64.16{\scriptsize$\pm$0.16}} &
  \textbf{52.04{\scriptsize$\pm$2.76}} &
  \textbf{48.85{\scriptsize$\pm$0.56}} &
  \textbf{51.67{\scriptsize$\pm$1.78}} &
  68.84{\scriptsize$\pm$0.45} &
  \textbf{43.19{\scriptsize$\pm$0.88}} &
  54.17{\scriptsize$\pm$0.25} &
  86.37{\scriptsize$\pm$0.07} &
  69.08{\scriptsize$\pm$0.43} &
  44.77{\scriptsize$\pm$0.33} &
  54.47 \\
\textbf{CASCADE (Ours)} &
  \textbf{69.13{\scriptsize$\pm$0.37}} &
  \textbf{54.88{\scriptsize$\pm$0.46}} &
  63.97{\scriptsize$\pm$0.48} &
  51.20{\scriptsize$\pm$0.24} &
  47.90{\scriptsize$\pm$0.48} &
  47.75{\scriptsize$\pm$1.54} &
  \textbf{80.78{\scriptsize$\pm$0.38}} &
  39.77{\scriptsize$\pm$0.67} &
  \textbf{66.91{\scriptsize$\pm$0.35}} &
  \textbf{86.40{\scriptsize$\pm$0.43}} &
  \textbf{72.70{\scriptsize$\pm$0.69}} &
  \textbf{48.66{\scriptsize$\pm$0.46}} &
  \textbf{60.84} \\ \midrule
\multicolumn{14}{l}{\cellcolor[HTML]{EFEFEF}\texttt{Qwen3-14B}} \\ \midrule
\textbf{Zero-shot} &
  55.75{\scriptsize$\pm$0.10} &
  11.91{\scriptsize$\pm$0.26} &
  52.92{\scriptsize$\pm$0.09} &
  40.92{\scriptsize$\pm$0.13} &
  50.84{\scriptsize$\pm$0.11} &
  38.65{\scriptsize$\pm$0.15} &
  65.86{\scriptsize$\pm$0.04} &
  33.03{\scriptsize$\pm$0.08} &
  36.66{\scriptsize$\pm$0.13} &
  78.41{\scriptsize$\pm$0.06} &
  66.29{\scriptsize$\pm$0.15} &
  48.75{\scriptsize$\pm$0.10} &
  48.33 \\
\textbf{ICRL} &
  56.50{\scriptsize$\pm$0.07} &
  1.18{\scriptsize$\pm$0.90} &
  54.65{\scriptsize$\pm$0.15} &
  48.47{\scriptsize$\pm$0.09} &
  49.39{\scriptsize$\pm$0.16} &
  40.52{\scriptsize$\pm$0.12} &
  66.99{\scriptsize$\pm$0.05} &
  32.44{\scriptsize$\pm$0.14} &
  32.03{\scriptsize$\pm$0.11} &
  82.12{\scriptsize$\pm$0.03} &
  69.29{\scriptsize$\pm$0.11} &
  48.59{\scriptsize$\pm$0.19} &
  48.51 \\
\textbf{ICRLPlus} &
  56.16{\scriptsize$\pm$0.15} &
  35.39{\scriptsize$\pm$0.41} &
  55.39{\scriptsize$\pm$0.11} &
  49.70{\scriptsize$\pm$0.05} &
  50.09{\scriptsize$\pm$0.09} &
  41.91{\scriptsize$\pm$0.06} &
  67.14{\scriptsize$\pm$0.03} &
  34.71{\scriptsize$\pm$0.12} &
  38.47{\scriptsize$\pm$0.07} &
  82.52{\scriptsize$\pm$0.03} &
  69.39{\scriptsize$\pm$0.06} &
  48.33{\scriptsize$\pm$0.18} &
  52.43 \\
\textbf{NP-CBR} &
  70.69{\scriptsize$\pm$0.08} &
  61.28{\scriptsize$\pm$0.42} &
  56.77{\scriptsize$\pm$0.06} &
  53.15{\scriptsize$\pm$0.11} &
  56.33{\scriptsize$\pm$0.51} &
  56.11{\scriptsize$\pm$0.03} &
  81.02{\scriptsize$\pm$0.08} &
  42.98{\scriptsize$\pm$0.10} &
  66.41{\scriptsize$\pm$0.13} &
  85.31{\scriptsize$\pm$0.02} &
  71.57{\scriptsize$\pm$0.45} &
  52.14{\scriptsize$\pm$0.29} &
  62.81 \\
\textbf{REINFORCE+LoRA} &
  71.30{\scriptsize$\pm$0.80} &
  11.07{\scriptsize$\pm$0.45} &
  \textbf{58.21{\scriptsize$\pm$0.73}} &
  52.39{\scriptsize$\pm$2.41} &
  53.72{\scriptsize$\pm$0.30} &
  41.38{\scriptsize$\pm$0.14} &
  72.30{\scriptsize$\pm$0.33} &
  44.42{\scriptsize$\pm$1.02} &
  54.11{\scriptsize$\pm$0.82} &
  \textbf{86.47{\scriptsize$\pm$0.06}} &
  72.07{\scriptsize$\pm$0.62} &
  49.17{\scriptsize$\pm$0.46} &
  55.55 \\
\textbf{CASCADE (Ours)} &
  \textbf{75.66{\scriptsize$\pm$0.55}} &
  \textbf{62.35{\scriptsize$\pm$0.40}} &
  57.76{\scriptsize$\pm$0.24} &
  \textbf{54.55{\scriptsize$\pm$0.17}} &
  \textbf{56.95{\scriptsize$\pm$0.27}} &
  \textbf{59.06{\scriptsize$\pm$0.27}} &
  \textbf{82.16{\scriptsize$\pm$0.08}} &
  \textbf{45.10{\scriptsize$\pm$0.27}} &
  \textbf{73.68{\scriptsize$\pm$0.29}} &
  85.61{\scriptsize$\pm$0.24} &
  \textbf{74.66{\scriptsize$\pm$0.61}} &
  \textbf{53.68{\scriptsize$\pm$0.32}} &
  \textbf{65.10} \\ \midrule
\multicolumn{14}{l}{\cellcolor[HTML]{EFEFEF}{\texttt{Qwen3-32B}}} \\ \midrule
\textbf{Zero-shot} &
  63.77{\scriptsize$\pm$0.11} &
  12.79{\scriptsize$\pm$0.33} &
  51.36{\scriptsize$\pm$0.08} &
  48.56{\scriptsize$\pm$0.14} &
  25.53{\scriptsize$\pm$0.15} &
  39.12{\scriptsize$\pm$0.16} &
  66.66{\scriptsize$\pm$0.03} &
  34.30{\scriptsize$\pm$0.09} &
  45.36{\scriptsize$\pm$0.05} &
  84.05{\scriptsize$\pm$0.05} &
  63.84{\scriptsize$\pm$0.39} &
  44.65{\scriptsize$\pm$0.42} &
  48.33 \\
\textbf{ICRL} &
  66.11{\scriptsize$\pm$0.10} &
  12.95{\scriptsize$\pm$2.10} &
  50.82{\scriptsize$\pm$0.13} &
  50.23{\scriptsize$\pm$0.11} &
  57.03{\scriptsize$\pm$0.24} &
  44.06{\scriptsize$\pm$0.26} &
  67.59{\scriptsize$\pm$0.03} &
  33.77{\scriptsize$\pm$0.02} &
  41.83{\scriptsize$\pm$0.05} &
  83.55{\scriptsize$\pm$0.14} &
  70.95{\scriptsize$\pm$0.13} &
  50.86{\scriptsize$\pm$0.23} &
  52.48 \\
\textbf{ICRLPlus} &
  64.91{\scriptsize$\pm$0.13} &
  30.29{\scriptsize$\pm$0.83} &
  52.24{\scriptsize$\pm$0.09} &
  51.05{\scriptsize$\pm$0.22} &
  58.33{\scriptsize$\pm$0.22} &
  43.22{\scriptsize$\pm$0.21} &
  67.61{\scriptsize$\pm$0.06} &
  35.39{\scriptsize$\pm$0.28} &
  43.34{\scriptsize$\pm$0.09} &
  83.91{\scriptsize$\pm$0.07} &
  71.11{\scriptsize$\pm$0.17} &
  50.52{\scriptsize$\pm$0.21} &
  54.33 \\
\textbf{NP-CBR} &
  80.84{\scriptsize$\pm$0.50} &
  60.58{\scriptsize$\pm$0.20} &
  56.29{\scriptsize$\pm$0.07} &
  51.83{\scriptsize$\pm$0.20} &
  60.93{\scriptsize$\pm$0.28} &
  59.56{\scriptsize$\pm$0.07} &
  82.77{\scriptsize$\pm$0.06} &
  45.35{\scriptsize$\pm$0.11} &
  65.08{\scriptsize$\pm$0.16} &
  85.39{\scriptsize$\pm$0.03} &
  66.87{\scriptsize$\pm$0.25} &
  49.60{\scriptsize$\pm$0.36} &
  63.76 \\
\textbf{REINFORCE+LoRA} &
  76.90{\scriptsize$\pm$0.91} &
  13.88{\scriptsize$\pm$0.10} &
  \textbf{57.47{\scriptsize$\pm$0.55}} &
  \textbf{53.97{\scriptsize$\pm$1.21}} &
  60.32{\scriptsize$\pm$0.40} &
  52.63{\scriptsize$\pm$1.11} &
  73.87{\scriptsize$\pm$0.35} &
  47.26{\scriptsize$\pm$0.38} &
  58.20{\scriptsize$\pm$1.86} &
  \textbf{86.80{\scriptsize$\pm$0.25}} &
  73.54{\scriptsize$\pm$0.25} &
  52.35{\scriptsize$\pm$0.09} &
  58.93 \\
\textbf{CASCADE (Ours)} &
  \textbf{84.54{\scriptsize$\pm$0.65}} &
  \textbf{63.98{\scriptsize$\pm$0.32}} &
  56.79{\scriptsize$\pm$0.14} &
  52.25{\scriptsize$\pm$0.48} &
  \textbf{62.44{\scriptsize$\pm$0.26}} &
  \textbf{60.26{\scriptsize$\pm$0.13}} &
  \textbf{84.66{\scriptsize$\pm$0.18}} &
  \textbf{47.72{\scriptsize$\pm$0.35}} &
  \textbf{70.76{\scriptsize$\pm$0.13}} &
  86.15{\scriptsize$\pm$0.21} &
  \textbf{75.41{\scriptsize$\pm$0.46}} &
  \textbf{55.18{\scriptsize$\pm$0.61}} &
  \textbf{66.68} \\ \midrule
\multicolumn{14}{l}{\cellcolor[HTML]{EFEFEF}\texttt{gemini-2.0-flash}} \\ \midrule
\textbf{Zero-shot} &
  69.67{\scriptsize$\pm$0.39} &
  N/A &
  N/A &
  N/A &
  59.61{\scriptsize$\pm$0.07} &
  39.32{\scriptsize$\pm$0.06} &
  73.61{\scriptsize$\pm$0.24} &
  12.92{\scriptsize$\pm$0.30} &
  28.06{\scriptsize$\pm$0.11} &
  83.16{\scriptsize$\pm$0.16} &
  80.66{\scriptsize$\pm$0.14} &
  62.19{\scriptsize$\pm$0.53} &
  56.58 \\
\textbf{ICRL} &
  73.79{\scriptsize$\pm$0.07} &
  \multicolumn{1}{c}{N/A} &
  \multicolumn{1}{c}{N/A} &
  \multicolumn{1}{c}{N/A} &
  64.98{\scriptsize$\pm$0.24} &
  55.77{\scriptsize$\pm$0.21} &
  74.10{\scriptsize$\pm$0.08} &
  30.07{\scriptsize$\pm$0.35} &
  28.50{\scriptsize$\pm$0.16} &
  84.70{\scriptsize$\pm$0.04} &
  80.62{\scriptsize$\pm$0.10} &
  62.32{\scriptsize$\pm$0.29} &
  61.65 \\
\textbf{ICRLPlus} &
  73.52{\scriptsize$\pm$0.04} &
  \multicolumn{1}{c}{N/A} &
  \multicolumn{1}{c}{N/A} &
  \multicolumn{1}{c}{N/A} &
  66.00{\scriptsize$\pm$0.13} &
  56.55{\scriptsize$\pm$0.11} &
  73.92{\scriptsize$\pm$0.04} &
  31.82{\scriptsize$\pm$0.18} &
  30.09{\scriptsize$\pm$0.17} &
  84.60{\scriptsize$\pm$0.04} &
  80.61{\scriptsize$\pm$0.20} &
  61.99{\scriptsize$\pm$0.16} &
  62.12 \\
\textbf{NP-CBR} &
  86.91{\scriptsize$\pm$0.50} &
  N/A &
  N/A &
  N/A &
  66.33{\scriptsize$\pm$0.20} &
  \textbf{62.05{\scriptsize$\pm$0.10}} &
  81.94{\scriptsize$\pm$0.29} &
  37.64{\scriptsize$\pm$0.42} &
  69.02{\scriptsize$\pm$0.18} &
  85.11{\scriptsize$\pm$0.13} &
  82.85{\scriptsize$\pm$0.46} &
  64.30{\scriptsize$\pm$0.28} &
  70.68 \\
\textbf{CASCADE (Ours)} &
  \textbf{90.29{\scriptsize$\pm$0.33}} &
  N/A &
  N/A &
  N/A &
  \textbf{67.13{\scriptsize$\pm$0.39}} &
  61.99{\scriptsize$\pm$0.24} &
  \textbf{83.33{\scriptsize$\pm$0.13}} &
  \textbf{40.79{\scriptsize$\pm$0.62}} &
  \textbf{75.96{\scriptsize$\pm$0.56}} &
  \textbf{86.29{\scriptsize$\pm$0.17}} &
  \textbf{83.05{\scriptsize$\pm$0.16}} &
  \textbf{64.42{\scriptsize$\pm$0.59}} &
  \textbf{72.58} \\ \bottomrule
\end{tabular}
}
\label{tab:overall}
\end{sidewaystable}

\begin{table}[th]
\caption{\textbf{Ablation studies of CASCADE in 12 single-turn tasks in DTLBench.} All the results are based on \texttt{Qwen3-32B}. We report the mean and standard deviation over five random seeds. The best performance for each task is highlighted in bold.}
\resizebox{\linewidth}{!}{
\begin{tabular}{@{}l
>{\columncolor[HTML]{EFEFEF}}c cccccc@{}}
\toprule
\multicolumn{1}{c}{} &
  \textbf{\begin{tabular}[c]{@{}c@{}}CASCADE\\ (Ours)\end{tabular}} &
  \textbf{Zero-shot} &
  \textbf{\begin{tabular}[c]{@{}c@{}}CASCADE w/o\\ \scriptsize{Neural-LinLogUCB}\end{tabular}} &
  \textbf{\begin{tabular}[c]{@{}c@{}}CASCADE w/o \\ Exploration\end{tabular}} &
  \textbf{\begin{tabular}[c]{@{}c@{}}CASCADE w/\\ \small{LinLogUCB}\end{tabular}} &
  \textbf{\begin{tabular}[c]{@{}c@{}}CASCADE w/\\ \footnotesize{NeuralLogUCB}\end{tabular}} &
  \textbf{\begin{tabular}[c]{@{}c@{}}CASCADE w/\\ \footnotesize{NeuralLinUCB}\end{tabular}} \\ \midrule
\textbf{DDXPlus} &
  \textbf{84.54{\scriptsize$\pm$0.65}} &
  63.77{\scriptsize$\pm$0.11} &
  80.84{\scriptsize$\pm$0.50} &
  84.09{\scriptsize$\pm$0.46} &
  80.48{\scriptsize$\pm$0.81} &
  81.73{\scriptsize$\pm$0.28} &
  83.96{\scriptsize$\pm$0.36} \\
\textbf{MIMIC-IV-MR} &
  \textbf{63.98{\scriptsize$\pm$0.32}} &
  12.79{\scriptsize$\pm$0.33} &
  60.58{\scriptsize$\pm$0.20} &
  63.60{\scriptsize$\pm$0.33} &
  58.63{\scriptsize$\pm$0.59} &
  61.53{\scriptsize$\pm$0.48} &
  61.96{\scriptsize$\pm$1.11} \\
\textbf{MIMIC-IV-MSR} &
  52.25{\scriptsize$\pm$0.48} &
  48.56{\scriptsize$\pm$0.14} &
  51.83{\scriptsize$\pm$0.20} &
  50.79{\scriptsize$\pm$0.44} &
  51.80{\scriptsize$\pm$0.40} &
  \textbf{53.00{\scriptsize$\pm$0.21}} &
  51.95{\scriptsize$\pm$0.55} \\
\textbf{MIMIC-IV-TLP} &
  56.79{\scriptsize$\pm$0.14} &
  51.36{\scriptsize$\pm$0.08} &
  56.29{\scriptsize$\pm$0.07} &
  57.09{\scriptsize$\pm$0.18} &
  55.54{\scriptsize$\pm$0.51} &
  \textbf{57.28{\scriptsize$\pm$0.57}} &
  56.01{\scriptsize$\pm$0.72} \\
\textbf{MUD} &
  \textbf{62.44{\scriptsize$\pm$0.26}} &
  25.53{\scriptsize$\pm$0.15} &
  60.93{\scriptsize$\pm$0.28} &
  62.09{\scriptsize$\pm$0.49} &
  61.41{\scriptsize$\pm$0.48} &
  61.76{\scriptsize$\pm$0.44} &
  60.83{\scriptsize$\pm$0.95} \\
\textbf{CMDL} &
  60.26{\scriptsize$\pm$0.13} &
  39.12{\scriptsize$\pm$0.16} &
  59.56{\scriptsize$\pm$0.07} &
  60.18{\scriptsize$\pm$0.40} &
  58.30{\scriptsize$\pm$0.46} &
  \textbf{60.93{\scriptsize$\pm$0.34}} &
  59.13{\scriptsize$\pm$0.28} \\
\textbf{Banking77} &
  \textbf{84.66{\scriptsize$\pm$0.18}} &
  66.66{\scriptsize$\pm$0.03} &
  82.77{\scriptsize$\pm$0.06} &
  84.26{\scriptsize$\pm$0.19} &
  79.76{\scriptsize$\pm$0.28} &
  82.40{\scriptsize$\pm$0.41} &
  82.44{\scriptsize$\pm$0.30} \\
\textbf{SEntFiN} &
  47.72{\scriptsize$\pm$0.35} &
  34.30{\scriptsize$\pm$0.09} &
  45.35{\scriptsize$\pm$0.11} &
  \textbf{48.37{\scriptsize$\pm$0.30}} &
  41.21{\scriptsize$\pm$0.27} &
  43.92{\scriptsize$\pm$0.86} &
  45.49{\scriptsize$\pm$0.86} \\
\textbf{RCA} &
  \textbf{70.76{\scriptsize$\pm$0.13}} &
  45.36{\scriptsize$\pm$0.05} &
  65.08{\scriptsize$\pm$0.16} &
  70.08{\scriptsize$\pm$0.03} &
  65.97{\scriptsize$\pm$0.67} &
  65.19{\scriptsize$\pm$0.24} &
  68.31{\scriptsize$\pm$0.81} \\
\textbf{LFD} &
  \textbf{86.15{\scriptsize$\pm$0.21}} &
  84.05{\scriptsize$\pm$0.05} &
  85.39{\scriptsize$\pm$0.03} &
  85.93{\scriptsize$\pm$0.22} &
  85.67{\scriptsize$\pm$0.08} &
  85.47{\scriptsize$\pm$0.27} &
  85.63{\scriptsize$\pm$0.32} \\
\textbf{SPIDER} &
  75.41{\scriptsize$\pm$0.46} &
  63.84{\scriptsize$\pm$0.39} &
  66.87{\scriptsize$\pm$0.25} &
  74.74{\scriptsize$\pm$0.81} &
  74.71{\scriptsize$\pm$0.29} &
  75.28{\scriptsize$\pm$0.31} &
  \textbf{75.62{\scriptsize$\pm$0.57}} \\
\textbf{BIRD} &
  55.18{\scriptsize$\pm$0.61} &
  44.65{\scriptsize$\pm$0.42} &
  49.60{\scriptsize$\pm$0.36} &
  55.25{\scriptsize$\pm$0.68} &
  53.95{\scriptsize$\pm$0.39} &
  \textbf{56.21{\scriptsize$\pm$0.20}} &
  54.55{\scriptsize$\pm$0.55} \\ \midrule
\textbf{Average} &
  \textbf{66.68} &
  48.33 &
  63.76 &
  66.37 &
  63.95 &
  65.39 &
  65.49 \\ \bottomrule
\end{tabular}
}
\label{tab:ablation}
\end{table}
\subsection{Ablation Studies}
Here, we perform the ablation studies of CASCADE in 12 single-turn tasks in DTLBench and aggregate the results in Table~\ref{tab:ablation}. Specifically, we consider the following ablation variants:
\begin{itemize}
    \item \textbf{Zero-shot}. We include this baseline as the ablation variant of CASCADE that does not perform deployment-time learning.
    \item \textbf{CASCADE w/o Neural-LinLogUCB}. This is exactly the non-parametric baseline NP-CBR, which does not utilise the reward feedback to adapt the retriever policy.
    \item \textbf{CASCADE w/o Exploration}. This ablation variant only performs exploitation during retrieval by setting the exploration coefficient $\alpha$ to 0.
    \item \textbf{CASCADE w/ LinLogUCB}. This ablation variant replaces the proposed Neural-LinLogUCB by LinLogUCB \cite{linlogucb}, which merely learns a linear weight based on the representation derived from the pretrained reranker model. We tune the exploration coefficient for each task.
    \item \textbf{CASCADE w/ NeuralLogUCB}. This ablation variant replaces the proposed Neural-LinLogUCB by NeuralLogUCB \cite{neural-logucb}, which learns a neural network based on the representation derived from the pretrained reranker model. We tune the exploration coefficient for each task.
    \item \textbf{CASCADE w/ Neural-LinUCB}. This ablation variant replaces the proposed Neural-LinLogUCB by Neural-LinUCB \cite{neural-linucb}, which similarly decouples representation learning and uncertainty estimation, but learns the reward model with the regression model instead of the logistic model. We tune the exploration coefficient for each task.
\end{itemize}
We can derive the following findings from Table~\ref{tab:ablation}. First, LLM agents can benefit from deployment-time learning by incorporating the reward feedback to adaptively adjust the behaviours. As an evidence, Zero-shot consistently shows the poorest performance across all tasks, demonstrating the importance of environmental feedback. Second, CBR agents can further benefit from the reward feedback by adapting the retriever policy via contextual bandit, which brings an additional 2.92 average absolute improvement. Third, the retriever policy in CBR agents must carefully balance exploration and exploitation. Over-reliance on the learned reward model without exploring uncertain cases results in an average absolute performance drop of 0.31.

Finally, selecting appropriate assumptions for the reward function in contextual bandits is crucial for CBR agents. On the one hand, a purely linear assumption, combined with a pretrained representation model, struggles to capture the complex interactions between query pairs. As a result, LinLogUCB consistently yields the lowest average performance among the contextual bandits. While incorporating neural networks into the reward model provides an average improvement of 1.44 over LinLogUCB, NeuralLogUCB remains heavily dependent on pretrained representations and still fail to generalise well across all tasks. In contrast, Neural-LinLogUCB decouples representation learning and uncertainty estimation, enabling end-to-end optimisation of the reward function. This leads to the best average performance of 66.68 across all tasks. On the other hand, established findings from the information retrieval community show that classification-based ranking objectives are typically more appropriate than regression-based ones. Motivated by this insight, CASCADE formulates CBR as a contextual bandit with binary feedback, achieving an additional average improvement of 1.19 over NeuralLinUCB.

\subsection{Extension of CASCADE to Multiple Cases}
Previous studies have shown that increasing the number of retrieved cases may bring further performance improvement to the CBR framework \cite{memento}. We now extend our analysis of CASCADE to the multiple-case setting. Although the proposed Neural-LinLogUCB algorithm in CASCADE is originally designed for the single-case setting, it can be naturally generalised to multiple cases by following the combinatorial neural bandit framework \cite{cnb}, which selects the top-$k$ cases with the highest UCB scores. Fig.~\ref{fig:multicase}a illustrates the performance of NP-CBR, REINFORCE+LoRA, and CASCADE, as the number of retrieved cases varies. We can find that CASCADE consistently outperforms NP-CBR across most settings, further demonstrating its effectiveness. Moreover, the results indicate that CASCADE’s performance exhibits a steady improvement as the number of retrieved cases increases. Notably, REINFORCE+LoRA surpasses CASCADE in the single-case setting for 3 out of 12 tasks (MIMIC-IV-MSR, MIMIC-IV-TLP, and LFD); however, this gap is closed once the number of retrieved cases increases to 4. This finding underscores the potential of memory-based learning mechanisms to outperform gradient-based ones through appropriate context engineering.

Moreover, we evaluate CASCADE on a representative subset of single-turn tasks to investigate how its performance scales with model size and the number of retrieved cases. As illustrated in Fig.~\ref{fig:multicase}b, CASCADE generally benefits from retrieving more cases and operating with larger models. However, the upper bound of such performance gain is inherently constrained by the foundational capabilities of the LLMs. Therefore, scaling up the number of retrieved cases alone does not ensure consistent performance gains. Furthermore, we present the results of CASCADE's absolute improvement over the best-performing non-parametric baseline NP-CBR in Fig.~\ref{fig:multicase}c. We observe that CASCADE outperforms NP-CBR in almost all settings, further demonstrating its empirical superiority.

\begin{figure}[thpb]
    \centering
    \includegraphics[width=\linewidth]{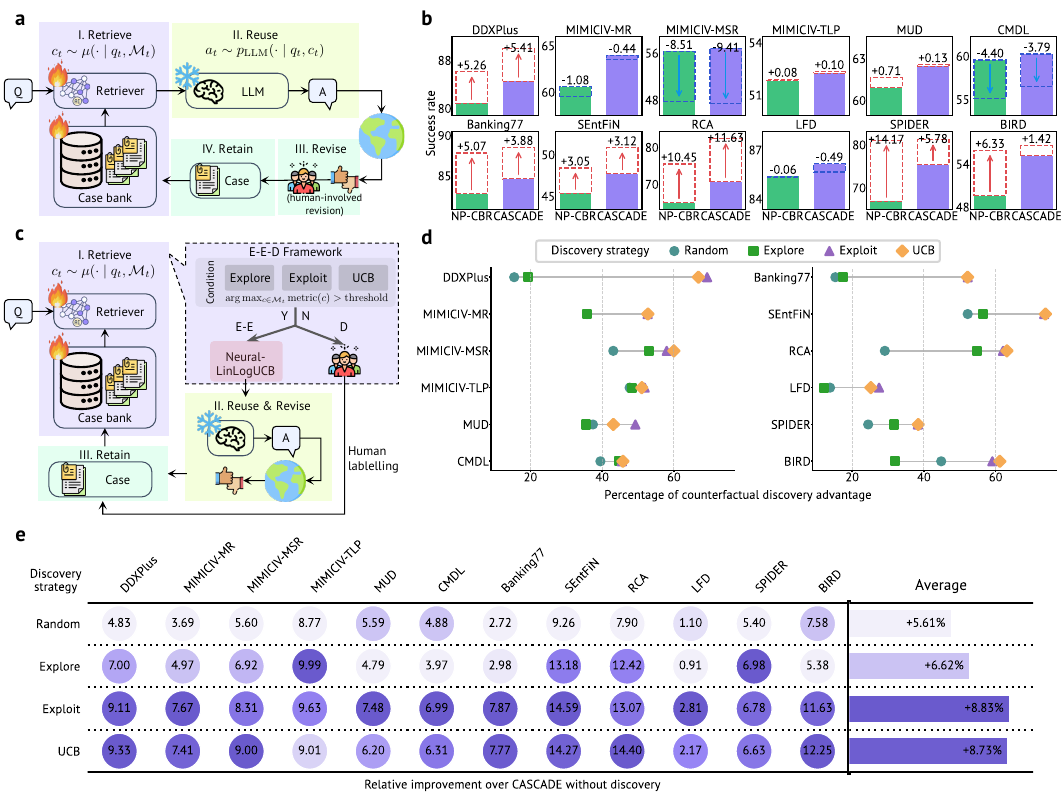}
    \caption{\textbf{Results for human-in-the-loop settings.} All the results are obtained using \texttt{Qwen3-32B} and reported based on five different random seeds. \textbf{a}, The CBR framework with human-involved revision. \textbf{b}, Success rates of CASCADE and NP-CBR on 12 single-turn tasks, comparing performance with and without human-involved revision. \textbf{c}, The CASCADE method under the exploration-exploitation-discovery (E-E-D) retrieval framework. If the maximum metric value in the case bank exceeds a predefined threshold, CASCADE performs Neural-LinLogUCB to balance exploration and exploitation; otherwise, it queries human experts for labelling. \textbf{d}, Percentage of counterfactual discovery advantage, i.e., the proportion of discovery steps yielding higher counterfactual rewards than exploration-exploitation actions, across four strategies. \textbf{e}, Relative performance gains over standard CASCADE when the discovery is driven by each of the four strategies.}
    \label{fig:human}
\end{figure}

\subsection{Human-in-the-loop Results}
We have shown that CASCADE can serve as a good deployment-time learner merely based on the binary feedback provided by the environment. In this subsection, we introduce two human-in-the-loop settings to demonstrate how CASCADE can benefit from human interactions to derive better performance.

\noindent\textbf{Human-involved revision.} In many real-world application scenarios, human experts participate in the Revise step to ensure the correctness of generated solutions (Fig.~\ref{fig:human}a). For example, in our preliminary applied work on automated software testing \cite{re4}, human test engineers are responsible for revising the generated test scripts by LLMs. Under this setting, a new case can always be safely retained at each timestep; thus, the coverage gap would be closed faster due to $p_{\min}=1$. We simulate this setting by retaining the case with the ground-truth label in each timestep, and make comparison between CASCADE and the best-performing non-parametric baseline NP-CBR in Fig.~\ref{fig:human}b. The results show that both CASCADE and NP-CBR benefit from human-involved revision in most settings. Meanwhile, performance deterioration can be observed in MIMIC-MR, MIMIC-MSR, CMDL, and LFD, where the retriever policies of NP-CBR and CASCADE struggle to manage the growing and diverse case bank within limited online timesteps. Importantly, CASCADE consistently outperforms NP-CBR across all settings, highlighting its superior adaptability and efficient learning of the retriever policy.

\noindent\textbf{CASCADE under E-E-D retrieval framework.} A notable advantage of CASCADE lies in its contextual bandit modelling of the retriever policy, which supports a human-in-the-loop setting where agents selectively seek human expert assistance within a limited budget. This is similar to the exploration-exploitation-discovery (E-E-D) framework in infinitely many-armed bandit literature \cite{inf-mab, bandit-book}, where exploitation means selecting an apparently relevant case, exploration means selecting an uncertain case, and discovery means seeking for a novel latent case when no sufficiently relevant cases are believed to exist in the current case bank. To simulate this setting, we provide the agent with a fixed budget of 10\% discovery steps across all timesteps, in which it can directly access the ground-truth label (Fig.~\ref{fig:human}c). For CASCADE, we introduce three discovery metrics: \textbf{(1) Explore} refers to the exploration term in Eq.~(\ref{eq:ucb}), i.e., $\lVert f(x_{t,c};\bomega_{t-1}) \rVert_{\bA_{t-1}^{-1}}$, which reflects the uncertainty of the case. \textbf{(2) Exploit} refers to the exploitation term in Eq.~(\ref{eq:ucb}), i.e., $\langle \btheta_{t-1}, f(x_{t,c};\bomega_{t-1}) \rangle$, which reflects the learned utility of the case. \textbf{(3) UCB} refers to the UCB score defined in Eq.~(\ref{eq:ucb}), which reflects the upper confidence bound of the expected utility of the case. A discovery step is triggered when the selected metric’s value falls below a predefined threshold. Since the ranges of these metrics evolve over time, we maintain a first-in-first-out queue of length 16, and dynamically set the threshold to the 10th percentile of the queued values. As a baseline, we introduce the \textbf{Random} strategy, which randomly determines whether to perform discovery with a probability of 10\%.

To assess these discovery strategies, we first report the percentage of counterfactual discovery advantage, i.e., the proportion of discovery steps yielding higher counterfactual rewards than exploration-exploitation actions, in Fig.~\ref{fig:human}d. As expected, the results indicate that Exploit and UCB are the best-performing discovery metrics, which consistently outperform the others in all the settings. Specifically, the Exploit and UCB strategies trigger discovery whenever no relevant cases exist in the case bank based on the learned utility function and the most optimistic estimate of expected utility, respectively. To further evaluate the benefits of the discovery mechanism, we report the relative improvement of each discovery strategy over standard CASCADE in Fig.~\ref{fig:human}e. Notably, Exploit and UCB achieve average relative performance gains of 8.83\% and 8.73\%, respectively, whereas Random provides only a 5.61\% improvement. This demonstrates that the contextual bandit modelling in CASCADE provides an effective discovery mechanism.
\section{Coverage Gap Analysis}
In this section, we provide the complete theoretical analyses on the coverage gap of CASCADE. To simplify the analyses, we make the following necessary assumptions:

\begin{assumption}
\label{assumption:lipschitz}
    There exists a metric space $(\mathcal{Q},d_\mathcal{Q})$ and a constant $L_{\mathcal{Q}}>0$, such that for any query $q \in \mathcal{Q}$ and any case $c=(q',a',r'=1)$, we have
    \[
    1-\bar{\mathcal{R}}(q,c) \le L_{\mathcal{Q}}\cdot d_\mathcal{Q}(q,q').
    \]
\end{assumption}
Assumption~\ref{assumption:lipschitz} formalises the principle that similar problems share similar solutions, ensuring that small perturbations in the query would not lead to large variations in the expected utility for LLMs.

\begin{assumption}
\label{assumption:non-zero}
For query $q_t$, and any case $c \in \{(q,a,r=1)|q\in\mathcal{Q},a\in\mathcal{A}\} \cup \emptyset$, there exists a positive constant $p_{\min} > 0$ such that 
\[
\bar{\mathcal{R}}(q_t,c) = \mathbb{E}_{a_t\sim p_{\text{LLM}}(\cdot|q_t,c)}[\mathcal{R}(q_t,a_t)] \ge p_{\min}.
\]
\end{assumption}
Assumption~\ref{assumption:non-zero} requires that LLMs maintain a strictly positive probability of generating a correct solution for any query, and that conditioning on any case, even an irrelevant one, does not reduce this probability to zero. We emphasise that this assumption is mainly introduced as a technical simplification to facilitate a clean theoretical analysis.

\begin{assumption}
\label{assumption:iid}
The queries $\{q_t\}_{t=1}^T$ are independently and identically distributed (i.i.d.) draws from a distribution $\mathcal{P}$ over $\mathcal{Q}$. There exists constants $d_0 > 0$, $C_0>0$, and a diameter $D\in(0,\infty)$ such that $\forall q \in\mathcal{Q}$ and $0< \varepsilon \le D$,
\[
\mathcal{P}(B(q,\varepsilon)) \ge C_0 \varepsilon^{d_0},
\]
where $B(q,\varepsilon)=\{q'\in\mathcal{Q}\mid d_\mathcal{Q}(q,q')\le\varepsilon\}$.
\end{assumption}

Assumption \ref{assumption:iid} models the query stream as i.i.d. draws from a distribution with a mild density condition. This is standard in non-parametric and bandit analyses and prevents degeneracy by ensuring that every metric ball has non-negligible probability mass. This assumption guarantees that similar queries recur with sufficient frequency, allowing the case bank to gradually cover the query space and making sub-linear coverage gap achievable.

Now, we can present the coverage gap bound as below.

\begin{theorem}
\label{theorem:coverage-gap-bound}
Suppose Assumptions~\ref{assumption:lipschitz},~\ref{assumption:non-zero},~\ref{assumption:iid} hold. For any $\delta\in(0,1)$, define
$t_0:=
\left\lceil
\frac{8}{p_{\min}}
\log\Big(\frac{32}{\delta\,p_{\min}}\Big)+1
\right\rceil.
$
Then, with probability at least $1-\delta$, the accumulated coverage gap satisfies
\[
R_T^\Delta
\le t_0 + L_{\mathcal{Q}}\Big(\frac{2\log\big(\frac{4T}{\delta}\big)}{C_0 p_{\min}}\Big)^{1/d_0} \sum_{t=t_0+1}^T (t-1)^{-1/d_0}.
\]
\end{theorem}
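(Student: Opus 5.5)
Our plan bounds each coverage gap $\Delta_t$ by the distance from $q_t$ to the nearest retained query, and then controls that distance through a covering argument driven by Assumption~\ref{assumption:iid}. Since rewards lie in $\{0,1\}$, we have $\mathcal{R}(q_t,a_t^\star)\le 1$. The optimal case $c_t^\star$ maximises $\bar{\mathcal{R}}(q_t,\cdot)$ over $\mathcal{M}_t$, and every case in $\mathcal{M}_t$ has $r=1$. Hence Assumption~\ref{assumption:lipschitz} gives
\[
\Delta_t \le 1-\bar{\mathcal{R}}(q_t,c_t^\star) \le L_{\mathcal{Q}}\min_{(q',a',1)\in\mathcal{M}_t} d_{\mathcal{Q}}(q_t,q').
\]
For the first $t_0$ rounds we use the trivial bound $\Delta_t\le 1$. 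It therefore suffices to show that, with high probability, for every $t>t_0$ the nearest retained query lies within $\varepsilon_t:=\big(2\log(4T/\delta)/(C_0p_{\min}(t-1))\big)^{1/d_0}$ of $q_t$.

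\textbf{Step 1 (retention rate).} Write $X_s=r_s=\mathbf{1}\{\text{case retained at } s\}$. By Assumption~\ref{assumption:non-zero}, $\mathbb{P}(X_s=1\mid\mathcal{F}_{s-1},q_s)\ge p_{\min}$ whatever case is retrieved, including the empty one. The key structural point is that $X_s$ is a success event with conditional probability at least $p_{\min}$ given the past and given $q_s$. Consequently, the retained queries are "thinned" i.i.d. draws, and each query point lands in any fixed ball with probability at least $p_{\min}$ times the ball's mass.

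\textbf{Step 2 (covering).} Fix $t$ and a query point $q$, and consider the ball $B(q,\varepsilon_t)$. Define
\[
Y_s=\mathbf{1}\{q_s\in B(q,\varepsilon_t),\,X_s=1\}.
\]
Then $\mathbb{P}(Y_s=1\mid\mathcal{F}_{s-1})\ge p_{\min}C_0\varepsilon_t^{d_0}$. The probability that no $Y_s=1$ occurs for $s<t$ is therefore at most
\[
(1-p_{\min}C_0\varepsilon_t^{d_0})^{t-1}\le \exp\!\big(-p_{\min}C_0\varepsilon_t^{d_0}(t-1)\big)=(\delta/4T)^2 .
\]
Since $q_t$ is independent of the past, we apply this with $q=q_t$, conditioning on $q_t$. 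A union bound over $t\le T$ then leaves failure probability at most $\delta/4$. This step needs $\varepsilon_t\le D$, so that Assumption~\ref{assumption:iid} applies. If instead $\varepsilon_t>D$, the bound holds anyway: either it is vacuous because the diameter is at most $D$, or we need $\mathcal{M}_t\neq\emptyset$. That is exactly where $t_0$ enters. We need at least one retention before $t_0$, which follows from a multiplicative Chernoff/Freedman bound on $\sum_s X_s\ge p_{\min}(t-1)/2$ for all $t>t_0$. The specific form $\frac{8}{p_{\min}}\log\frac{32}{\delta p_{\min}}$ is suggestive of a union bound over $t$ of $\exp(-p_{\min}(t-1)/8)$ terms, summed geometrically, which consumes another $\delta/4$.

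\textbf{Step 3 (summation and main obstacle).} On the good event we get $\Delta_t\le L_{\mathcal{Q}}\varepsilon_t$, and summing over $t_0<t\le T$ gives exactly the stated bound. Combined with the $t_0$ term, the total failure probability is at most $\delta$. The main obstacle is the dependence created by adaptive retrieval: the retention indicators depend on the retriever and on the history. Steps 1 and 2 must therefore be phrased with martingale and conditional-probability bounds rather than independence. The route I would try first is the per-round conditional lower bound from Step 1, multiplied through the tower property. The same care is needed to bound the expectation $R_T^\Delta$ in terms of the high-probability realisation: I would interpret the statement as a bound on $\sum_t\Delta_t$ on the good event.
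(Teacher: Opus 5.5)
Your proof is correct. It reaches the stated bound by a route that differs from the paper's in the key covering step.

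\textbf{The paper's route.} The paper works in two stages. First it couples the retention indicators with a $\mathrm{Binomial}(t-1,p_{\min})$ variable and applies a Chernoff bound with a geometric union bound, which is where $t_0$ comes from. This gives $N_t=|\mathcal{M}_t|\ge p_{\min}(t-1)/2$ for all $t\ge t_0$. Second, it bounds $\mathbb{P}(d_t>\varepsilon)\le(1-\mathcal{P}(B(q_t,\varepsilon)))^{N_t}$, treating the $N_t$ retained queries as i.i.d.\ draws from $\mathcal{P}$, and substitutes the lower bound on $N_t$.

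\textbf{Your route.} You fold retention into the hitting event via $Y_s=\mathbf{1}\{q_s\in B(q,\varepsilon_t),\,r_s=1\}$, whose conditional probability is at least $p_{\min}C_0\varepsilon_t^{d_0}$. You then multiply through the tower property, conditioning on $q_t$ by independence.

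\textbf{What each buys.} The paper's decomposition is modular: it yields a nearest-neighbour bound in terms of $N_t$, which shows directly how faster retention (e.g.\ $p_{\min}=1$ under human revision) shrinks the gap. Its second step, however, glosses over the fact that retention depends on the query. Conditioned on $r_s=1$, a query can land in $B(q_t,\varepsilon)$ with probability close to $p_{\min}\mathcal{P}(B(q_t,\varepsilon))$ rather than $\mathcal{P}(B(q_t,\varepsilon))$, and $N_t$ is correlated with where the retained queries lie. Your joint indicator handles both issues rigorously. It also produces exactly the radius $\big(2\log(4T/\delta)/(C_0p_{\min}(t-1))\big)^{1/d_0}$ in the statement, with a factor $2$ to spare; the paper's own derivation switches between $\log(2T/\delta)$ and $\log(4T/\delta)$.

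\textbf{The regime $\varepsilon_t>D$.} You also treat this regime, which the paper ignores. There only $\mathcal{M}_t\neq\emptyset$ is needed, reading $D$ as the diameter of $\mathcal{Q}$, so in your route the Chernoff bound on $N_t$ becomes a side remark. One small bookkeeping slip: with the given $t_0$, the geometric-sum union bound costs $\delta/2$, not $\delta/4$. This is harmless, since the total is $3\delta/4\le\delta$. Moreover, because the case bank only grows, it suffices that $\mathcal{M}_{t_0+1}\neq\emptyset$, which fails with probability at most $(1-p_{\min})^{t_0}$.

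\textbf{Expectation versus realised sum.} Like the paper, you bound the realised sum $\sum_t\Delta_t$ on the good event rather than the expectation $R_T^\Delta$. That is the only coherent reading of the high-probability statement.
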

\begin{proof}
We first relate $\Delta_t$ to a nearest-neighbour distance in the case bank $\mathcal{M}_t$. Denote the distance between query $q_t$ and the nearest query in the case bank $\mathcal{M}_t$ as $d_t=\min_{(q,a,1)\in\mathcal{M}_t} d_{\mathcal{Q}}(q_t,q)$. According to Assumption 1, we have
\begin{equation}
\label{eq:lipschitz}
\Delta_t=\mathcal{R}(q_t,a_t^\star)-\bar{\mathcal{R}}(q_t,c_t^\star) \le L_{\mathcal{Q}}d_t.
\end{equation}

Next, we derive the high-probability lower bound on the number of the cases in the case bank, i.e., $N_t=|\mathcal{M}_t|$. By Assumption 2, at each timestep $t$, we have $\mathbb{P}(r_t=1\mid\mathcal{F}_{t-1}) = \bar{\mathcal{R}}(q_t,c_t) \ge p_{\min}$. Let $U_t \sim \mathrm{Unif}(0,1)$, realise $r_t=\mathbb{I}\{U_t \le p_t\}$ with $p_t \ge p_{\min}$, and $z_t=\mathbb{I}\{U_t \le p_{\min}\}$; thus, we have $z_t\le r_t$. Define $Z_t=\sum_{s=1}^{t-1} z_s\sim\mathrm{Binomial}(t-1,p_{\min})$; thus, we have $N_t=\sum_{s=1}^{t-1} r_s \ge Z_t$.
By applying Chernoff bound to $Z_t$, we have
\[
\mathbb{P}\left(N_t \le \frac{p_{\min}(t-1)}{2}\right)
\le
\mathbb{P}\left(Z_t \le \frac{p_{\min}(t-1)}{2}\right)
\le
\exp\left(-\frac{p_{\min}(t-1)}{8}\right).
\]
Then, we can derive the following union bound over $t \ge t_0$:
\[
\mathbb{P}\left(\exists t \ge t_0: N_t \le \frac{p_{\min}(t-1)}{2}\right) \le \sum_{s=t_0}^{\infty} e^{-\frac{p_{\min}(s-1)}{8}} = \frac{e^{-p_{\min}(t_0-1)/8}}{1-e^{-p_{\min}/8}} \le \frac{16}{p_{\min}} e^{-p_{\min}(t_0-1)/8} \le \frac{\delta}{2},
\]
where the second inequality is because $\forall x\in(0,1),1-e^{-x}\ge x/2$. Thus, with probability at least $1-\delta/2$, we have 
\begin{equation}
\label{eq:nt-bound}
N_t\ge \frac{p_{\min}(t-1)}{2}, \quad \forall t\ge t_0.
\end{equation}

Next, we bound the minimum distance $d_t$ in each timestep. Without loss of generality, we consider timestep $t$ with $N_t\ge1$. Then, according to Assumption 3, for any $\varepsilon \in (0,D]$, we have
\[
\mathbb{P}(d_t>\varepsilon) \le \left(1-\mathcal{P}\left(B\left(q_t,\varepsilon\right)\right)\right)^{N_t} \le (1-C_0\varepsilon^{d_0})^{N_t} \le \exp(-C_0N_t\varepsilon^{d_0}),
\]
where the last inequality is because $\forall x\in(0,1),1-x\le e^{-x}$. After choosing $\varepsilon_t=(\log(2T/\delta)/C_0N_{t})^{1/d_0}$, we have
\[
\mathbb{P}(d_t>\varepsilon_t) \le \frac{\delta}{2T}.
\]

A union bound over $t\in[T]$ implies that with probability at least $1-\delta/2$,
\begin{equation}
\label{eq:dt-bound}
d_t \le \left(\frac{\log\big(\frac{4T}{\delta}\big)}{C_0N_{t}}\right)^{1/d_0},
\qquad
\forall t\in[T]\text{ such that }N_t\ge 1.
\end{equation}

Now, we can derive the upper bound for accumulated coverage gap. By taking the intersection of events in Eq.~(\ref{eq:nt-bound}) and Eq.~(\ref{eq:dt-bound}), with probability at least $1-\delta$, we have
\begin{align}
    R_T^\Delta & = \sum_{t=1}^T [\mathcal{R}(q_t,a_t^\star)-\bar{\mathcal{R}}(q_t,c_t^\star)] \nonumber \\
    & = \sum_{t=1}^{t_0} [\mathcal{R}(q_t,a_t^\star)-\bar{\mathcal{R}}(q_t,c_t^\star)] + \sum_{t=t_0}^T [\mathcal{R}(q_t,a_t^\star)-\bar{\mathcal{R}}(q_t,c_t^\star)] \nonumber \\
    & \le t_0 + \sum_{t=t_0+1}^T L_{\mathcal{Q}}d_t \nonumber \\
    & \le t_0 + \sum_{t=t_0+1}^T L_{\mathcal{Q}}\left(\frac{\log\big(\frac{2T}{\delta}\big)}{C_0N_t}\right)^{1/d_0} \nonumber \\
    & \le t_0 + \sum_{t=t_0+1}^T L_{\mathcal{Q}}\left(\frac{2\log\big(\frac{2T}{\delta}\big)}{C_0p_{\min}(t-1)}\right)^{1/d_0} \nonumber \\
    & = t_0 + L_{\mathcal{Q}}\Big(\frac{2\log\big(\frac{2T}{\delta}\big)}{C_0 p_{\min}}\Big)^{1/d_0} \sum_{t=t_0+1}^T (t-1)^{-1/d_0}, \nonumber
\end{align}
where the first inequality is due to $\Delta_t\le1$ and Eq.~(\ref{eq:lipschitz}), the second inequality is due to Eq.~(\ref{eq:dt-bound}), and the third inequality is due to Eq.~(\ref{eq:nt-bound}).
\end{proof}

\begin{remark}
From Theorem~\ref{theorem:coverage-gap-bound}, the coverage gap exhibits different scaling behaviours depending on the intrinsic dimension parameter $d_0$. If $d_0>1$, we can derive $\tilde{\mathcal{O}}(T^{\frac{d_0-1}{d_0}})$ coverage gap bound; if $0<d_0\le 1$, we can derive $\tilde{\mathcal{O}}(1)$ coverage gap bound.
\end{remark}
\section{Neural-Linear Logistic Bandit: Neural-LinLogUCB}
In this section, we formally introduce the proposed neural logistic bandit algorithm in CASCADE and then analyse its regret. Without loss of generality, we present theoretical analyses based on the standard neural contextual bandit with binary feedback (NCBF) setting \cite{neural-logucb, nlb}. For clarity, we make a slight abuse of notation during the derivation of the regret bound for the Neural-LinLogUCB algorithm. This section can be regarded as self-contained and independent from the rest of the paper.
\subsection{Preliminary}

\noindent\textbf{Contextual Bandit with Binary Feedback.} We consider the stochastic $K$-armed contextual bandit problem with binary feedback \cite{glm,po-glm,io-glm,neural-logucb,nlb}, where the total number of timesteps $T$ is known. At each timestep $t \in [T]$, the agent observes $K$ feature vectors $\{\bx_{t,k} \in \mathbb{R}^d \mid k \in [K]\}$ from the corresponding contextual information $\{x_{t,k}\mid k\in[K]\}$. Then, the agent selects an action $c_t$ and receives a reward $r_{t,c_t}\in\{0,1\}$. We assume that the binary reward follows a Bernoulli distribution, where the probability of $r_{t,c_t}=1$ is given by
\begin{equation}
    \mathbb{P}\{r_{t,c_t}=1\mid \bx_{t,c_t}\} = \sigma(h(\bx_{t,c_t})),
\end{equation}
where $h:\mathbb{R}^d\rightarrow\mathbb{R}$ is an unknown latent reward function, and $\sigma:\mathbb{R}\rightarrow [0,1]$ is the sigmoid function, i.e., $\sigma(x)=1/(1+e^{-x})$. We denote $L_\sigma$ and $\kappa_\sigma$ as the Lipschitz constant and the strong monotonicity constant of the sigmoid function, i,e., $\kappa_\sigma \le \dot\sigma(\cdot) \le L_\sigma$. Note that we have $L_\sigma\le 1/4$ for the sigmoid function. 

The objective is to maximise the accumulated rewards, or equivalently minimising the following \textit{pseudo regret} (or \textit{regret} for short):
\begin{equation}
    R_T = \mathbb{E} \left[\sum_{t=1}^T (r_{t,c_t^\star} - r_{t,c_t})\right]=\sum_{t=1}^T\left[ \sigma(h(\bx_t^\star)) - \sigma(h(\bx_{t,c_t}))\right], 
\end{equation}
where $c_t^\star=\arg\max_{c\in [K]} \mathbb{E}[r_{t,c}]$ denotes the optimal action at timestep $t$ that maximises the expected reward, and $\bx_t^\star=\bx_{t,c_t^\star}$ denotes the feature vector of the optimal action at timestep $t$.

\noindent\textbf{Deep Neural Networks.} We follow previous works on neural contextual bandits \cite{neural-ucb, neural-linucb, neural-logucb} to utilise the fully connected network with the ReLU activation function as the network architecture. Specifically, the neural network $f(\bx;\bomega)$ with depth $L\ge2$ is defined as:
\begin{equation}
  f(\bx;\bomega) = \sqrt{m}\phi(\bW_L\phi(\bW_{L-1}\cdots\phi(\bW_1\bx))), 
\end{equation}
where $\phi(x)=\max\{0,x\}$ is the ReLU activation function, $\bW_1\in\mathbb{R}^{m\times d}$, $\bW_i\in\mathbb{R}^{m\times m},2\le i\le L-1$, $\bW_L\in\mathbb{R}^{d\times m}$, and $\bomega=[\text{vec}(\bW_1)^\top, \cdots, \text{vec}(\bW_L)^\top]^\top \in \mathbb{R}^p$ with $p=2md+m^2(L-2)$. We denote the gradient of the neural network by $\bg(\bx;\bomega)=\nabla_{\bomega} f(\bx;\bomega) \in \mathbb{R}^{d\times p}$.

\subsection{Neural-Linear Logistic Bandit Algorithm}
\noindent\textbf{Modelling Reward Function with Deep Representation Model and Shallow Linear Head.}
Although neural logistic bandit algorithms have shown powerful capabilities in approximating the complex reward function \cite{neural-ucb, neural-logucb, nlb}, they are inefficient at estimating uncertainty due to the large number of parameters involved in the entire network. To this end, we follow previous works \cite{neural-lin1, neural-lin2, neural-linucb} to decouple representation learning and uncertainty estimation, and propose the neural linear logistic upper confidence bound (Neural-LinLogUCB) algorithm. In particular, we model the reward function with a deep neural network model $f(\cdot;\bomega)$ for representation learning and a shallow linear head $\btheta \in \mathbb{R}^d$ for uncertainty estimation, i.e., $h(\bx) = \langle \btheta^\star, f(\bx;\bomega^\star) \rangle$, where $\btheta^\star$ and $\bomega^\star$ denote the unknown optimal parameters for linear head and neural network respectively. This decoupling enables the algorithm to benefit from both strong approximation capabilities of deep neural networks for representation learning and efficient exploration from the shallow linear head.

\noindent\textbf{Initialisation.} We follow the standard initialisation scheme from neural contextual bandits \cite{neural-ucb, neural-linucb, neural-logucb, nlb}, where each entry of $\bomega$ follows an appropriate Gaussian distribution. Specifically, for any layer $1\le l \le L-1$, we set $\bW_l=\begin{bmatrix} \bW & \bo \\ \bo & \bW \end{bmatrix}$, where the parameters in $\bW$ is independently sampled from the Gaussian distribution $\mathcal{N}(0,4/m)$; for the last layer $L$, we set $\bW_L=\begin{bmatrix} \bW^\top & -\bW^\top \end{bmatrix}$, where the parameters in $\bW$ is independently sampled from the Gaussian distribution $\mathcal{N}(0,2/m)$. For the linear head $\btheta$, we initialise its each entry by independently sampling from the Gaussian distribution $\mathcal{N}(0,1/d)$.

\noindent\textbf{Learning Algorithm.} In timestep $t$, the agent observes a set of feature vectors $\mathcal{X}_t=\{\bx_{t,1},...,\bx_{t,K}\}$. Then, it selects the action that maximises the following upper confidence bound:
\begin{equation}
    c_t=\arg\max_{k \in [K]} \left\{
    \langle \btheta_{t-1}, f(\bx_{t,k};\bomega_{t-1}) \rangle
    +
    \alpha_t \lVert f(\bx_{t,k};\bomega_{t-1}) \rVert_{\bA_{t-1}^{-1}}
    \right\},
\end{equation}
where $\{\alpha_t>0\}_{t=1}^T$ are exploration coefficients, and the matrix $\bA_t$ is defined as
\begin{equation}
\label{eq:theory:A}
    \bA_t = \lambda \bI + \sum_{s=1}^t f(\bx_{s,c_s};\bomega_{s-1})f(\bx_{s,c_s};\bomega_{s-1})^\top,
\end{equation}
where $\lambda >0$ denotes the hyper-parameter of the regularization strength in linear head estimation. After selecting the action $c_t$, the agent will receive a stochastic binary reward $r_t$. 

Then, we utilise the current history data $\{(\bx_{s,c_s}, r_s)\}_{s=1}^{t}$ to learn the neural network and linear head. First, we estimate the linear head by solving the the following logistic regression problem with L2 regularisation:
\begin{equation}
\label{eq:theory:logistic-regression}
    \min_{\btheta\in\mathbb{R}^d}\sum_{s=1}^{t} \left[
    -r_s \log\sigma\left(\btheta^\top f(\bx_{s,c_s};\bomega_{s-1})\right) - (1-r_s) \log\left(1-\sigma\left(\btheta^\top f(\bx_{s,c_s};\bomega_{s-1})\right)\right)
    \right] + \frac{\lambda}{2} \lVert \btheta \rVert^2_2.
\end{equation}

Then, we train the neural network by minimising the following cross-entropy loss function:
\begin{equation}
\label{eq:theory:loss}
    \mathcal{L}_t(\bomega) = \frac{1}{m}\sum_{s=1}^t \left[
    -r_s \log\sigma\left(\btheta_{s-1}^\top f(\bx_{s,c_s};\bomega)\right) - (1-r_s) \log\left(1-\sigma\left(\btheta_{s-1}^\top f(\bx_{s,c_s};\bomega)\right)\right)
    \right] + \frac{\lambda}{2} \lVert \bomega - \bomega_0 \rVert_2^2.
\end{equation}

We summarise the theoretical version of the algorithm in Algorithm \ref{alg:bandit}.

\begin{algorithm*}[tbhp]
\caption{Neural-LinLogUCB algorithm (theoretical version).}
\begin{algorithmic}[1]
\label{alg:bandit}
\STATE \textbf{Input:} Number of rounds $T$, exploration parameters $\{\alpha_t>0\}_{t\in[T]}$, regularisation parameter $\lambda$, number of total timesteps $T$.
\STATE \textbf{Initialisation:} Initialise the parameters of neural network $\bomega_0$ and the linear head $\btheta_0$. Initialise the matrix $\bA_0=\lambda\bI$.
\FOR{$t = 1,\dots,T$}
\STATE Observe the feature vectors $\mathcal{X}_t=\{\bx_{t,1},\cdots,\bx_{t,K}\}$
\STATE Choose arm $c_t = \arg\max_{k\in[K]} \btheta^\top_{t-1}f(\bx_{t,k};\bomega_{t-1}) + \alpha_t \lVert f(\bx_{t,k};\bomega_{t-1})\rVert_{\bA_{t-1}^{-1}}$
\STATE Observe the reward $r_t$
\STATE Update $\btheta_t$ by solving the logistic regression problem in Eq.~(\ref{eq:theory:logistic-regression})
\STATE Train neural networks via gradient descent by minimising the loss function in Eq.~(\ref{eq:theory:loss})
\STATE Update the matrix $\bA_t$ as defined in Eq.~(\ref{eq:theory:A})
\ENDFOR
\end{algorithmic}
\end{algorithm*}

\subsection{Regret Analysis}
Now, we analyse the upper regret bound of the proposed algorithm Neural-LogUCB. Our regret analysis follows existing neural contextual bandit works \cite{neural-ucb,neural-linucb, neural-logucb} built upon the neural tangent kernel (NTK) matrix \cite{ntk}.

\begin{definition}
    Let $\{\bx_i\}_{i=1}^{TK}$ be a set of feature vectors. Define
    \[
    \tilde{\bSigma}_{i,j}^{(1)}=\bSigma_{i,j}^{(1)}=\langle\bx_i,\bx_j\rangle,
    \]
    \[
    \bLambda_{i,j}^{(l)}=\begin{bmatrix} \bSigma_{i,i}^{(l)} & \bSigma_{i,j}^{(l)} \\ \bSigma_{j,i}^{(l)} & \bSigma_{j,j}^{(l)} \end{bmatrix},
    \]
    \[
    \bSigma_{i,j}^{(l+1)}=2\mathbb{E}_{u,v\sim\mathcal{N}(0,\bLambda_{i,j}^{(l)})}[\phi(u)\phi(v)],
    \]  
    \[
    \tilde{\bSigma}_{i,j}^{(l+1)}=2\tilde{\bSigma}_{i,j}^{(l)}\mathbb{E}_{u,v\sim\mathcal{N}(0,\bLambda_{i,j}^{(l)})}[\dot\phi(u)\dot\phi(v)] + \bSigma_{i,j}^{(l)}.
    \]
    Then, the neural tangent kernel matrix on this set of feature vectors is defined as 
    \[
    \bH=(\tilde{\bSigma}^{(L)}+\bSigma^{(L)})/2.
    \]
\end{definition}

Now, we introduce several necessary standard assumptions in neural contextual bandits \cite{neural-ucb, neural-linucb, neural-logucb, nlb}.

\begin{assumption}
\label{assumption:feature-vector}
    $\forall t\in[T]$, $\forall k\in[K]$, we assume that $\lVert \bx_{t,k}\rVert_2=1$ and each entry satisfies that $[\bx_{t,k}]_j = [\bx_{t,k}]_{j+d/2}$.
\end{assumption}
The first assumption is quite mild: the first part is only introduced for convenience, while the second part can be easily satisfied if we construct a new feature vector $\bx^\prime=(\bx^\top,\bx^\top)^\top/\sqrt{2}$. With this assumption, we can derive that the initialisation scheme above leads to $f(\bx_{t,k};\bomega_0)=\bo$ for all $t\in[T], k\in[K]$.

\begin{assumption}
\label{assumption:gradient-lipschitz}
    There exists a constant $\ell_{\mathrm{Lip}} > 0$ such that for all $\bx, \bx^\prime \in \{\bx_i\}_{i=1}^{TK}$,
    \[
        \lVert \nabla_{\bomega}f(\bx;\bomega_0)-\nabla_{\bomega}f(\bx^{\prime};\bomega_0)\rVert \le \ell_{\text{Lip}}\lVert \bx-\bx^\prime\rVert_2.
    \]
\end{assumption}
This assumption ensures the stability of the parameter gradients with respect to input variations, which can be satisfied when $TK$ feature vectors lie in a certain subspace of $\mathbb{R}^d$.

\begin{assumption}
\label{assumption:kappa}
The strong monotonicity constant is positive, i.e., $\kappa_\sigma>0$.
\end{assumption}

Finally, we introduce the assumption on the NTK matrix as below.
\begin{assumption}
\label{assumption:NTK}
    The NTK matrix $\bH$ is positive definite, i.e., there exists a constant $\lambda_0>0$, such that $\bH\succeq\lambda_0\bI$.
\end{assumption}
This assumption is also widely introduced by existing works, which requires the NTK matrix to be non-singular.

Now, we can present the regret bound of the proposed algorithm Neural-LinLogUCB.
\begin{theorem}
\label{theorem:neural-log-ucb-bound}
Suppose Assumptions~\ref{assumption:feature-vector},~\ref{assumption:gradient-lipschitz},~\ref{assumption:kappa},~\ref{assumption:NTK} hold. Assume that $\|\btheta^\star\|_2\le M$ for some positive constant $M>0$. For any $\delta\in(0,1)$, choose $\alpha_t$ in Neural-LinLogUCB as
\[
\alpha_t = \frac{1}{\kappa_\sigma} \Big(\nu\sqrt{2(d\log(1+Lt^2/\lambda) +\log(1/\delta))} + \sqrt{\lambda}M\Big).
\]
For $m\ge poly(L, d, 1/\kappa_\sigma,L_\sigma, \log(1/\delta), \log(TK))$, with probability at least $1-\delta$, we have
\begin{align*}
    R_T & \le \frac{C_1}{\kappa_\sigma}\sqrt{Td\log(1+LT^2/\lambda^2)}  \Big(\nu\sqrt{d\log(1+LT^2/\lambda) +\log(1/\delta)} + \sqrt{\lambda}M\Big) \nonumber \\
    & \quad + \frac{C_2d}{\kappa_\sigma}\sqrt{\log(1/\delta)L} \|\bh-\tilde{\bh}\|_{\bH^{-1}} \nonumber \\
    & \quad \cdot \Big(m^{-1/6}T^{5/3}\sqrt{d\log m}\lambda^{-2/3}L^3 + \ell_{\text{Lip}}m^{-1/2}T^{3/2}\lambda^{-1/2}\Big),
\end{align*}
where $C_1, C_2>0$ are absolute constants, $\bh=\big(h(\bx_1), h(\bx_2), \cdots, h(\bx_{TK})\big)^\top \in \mathbb{R}^{TK}$ and $\tilde{\bh}=\big(\btheta_0^\top f(\bx_1;\bomega_0), \cdots, \btheta_{T-1}^\top f(\bx_{TK};\bomega_{T-1})\big)^\top \in \mathbb{R}^{TK}$.
\end{theorem}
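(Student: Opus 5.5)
The plan is the standard neural-linear UCB template (Neural-LinUCB combined with the logistic/GLM analysis of NeuralLogUCB). Two steps come first.

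First, for $m$ large enough, all iterates $\bomega_t$ stay in a ball of radius $\tau=\mathcal{O}(\sqrt{t/(m\lambda)})$ around $\bomega_0$. By the NTK lemmas of Allen-Zhu et al./Cao--Gu, on this ball we have:
\begin{itemize}
\item[(i)] $f(\bx;\bomega_t)$ is close to its linearisation $f(\bx;\bomega_0)+\bg(\bx;\bomega_0)(\bomega_t-\bomega_0)$, with error $\mathcal{O}(\tau^{4/3}L^3\sqrt{m\log m})$;
\item[(ii)] $\|f(\bx;\bomega_t)\|_2$ and $\|\bg(\bx;\bomega_t)\|$ are bounded.
\end{itemize}
Since $f(\cdot;\bomega_0)=\bo$ by Assumption~\ref{assumption:feature-vector}, the features are effectively $\bg(\bx;\bomega_0)(\bomega_t-\bomega_0)$.

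Second, using Assumption~\ref{assumption:NTK} and the usual kernel argument, I would show that there is a $\bomega^\star$ with $\|\bomega^\star-\bomega_0\|\lesssim\|\bh\|_{\bH^{-1}}/\sqrt m$. Together with a linear head, it reproduces the true latent values $h(\bx_i)$ up to the mismatch $\bh-\tilde\bh$. This is the source of the $\|\bh-\tilde\bh\|_{\bH^{-1}}$ factor.

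Next comes the confidence bound for the linear head. Write $\bz_s=f(\bx_{s,c_s};\bomega_{s-1})$. The head $\btheta_t$ is the regularised logistic MLE on these features. The standard GLM argument of Li et al.\ applies: the mean-value theorem gives
\[
\nabla\ell(\btheta_t)-\nabla\ell(\btheta^\star)=\bG_t(\btheta_t-\btheta^\star),\qquad \bG_t\succeq\kappa_\sigma\bA_t,
\]
so
\[
\|\btheta_t-\btheta^\star\|_{\bA_t}\le\kappa_\sigma^{-1}\Big(\big\|\textstyle\sum_s\eta_s\bz_s\big\|_{\bA_t^{-1}}+\sqrt\lambda M+\text{model-misspecification term}\Big).
\]
Here $\eta_s=r_s-\sigma(h(\bx_{s,c_s}))$ is $\nu$-sub-Gaussian. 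The self-normalised bound of Abbasi-Yadkori et al., with $\log\det\bA_t\le d\log(1+Lt^2/\lambda)$ (from the feature-norm bound in (ii)), gives exactly $\alpha_t$. The misspecification term comes from $\btheta^{\star\top}\bz_s\neq h(\bx_{s,c_s})$ because $\bomega_{s-1}\ne\bomega^\star$. Its accumulated effect has two parts:
\begin{itemize}
\item the linearisation error from (i), of order $m^{-1/6}T^{5/3}\lambda^{-2/3}L^3\sqrt{d\log m}$ after summing;
\item the representation drift across contexts, controlled by Assumption~\ref{assumption:gradient-lipschitz} as $\ell_{\mathrm{Lip}}m^{-1/2}T^{3/2}\lambda^{-1/2}$.
\end{itemize}
Both are scaled by $\|\bh-\tilde\bh\|_{\bH^{-1}}$ and a $d\sqrt{\log(1/\delta)L}$ factor.

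Then comes the per-step regret. On the good event, UCB optimism gives
\[
\sigma(h(\bx_t^\star))-\sigma(h(\bx_{t,c_t}))\le L_\sigma\big(2\alpha_t\|\bz_{t,c_t}\|_{\bA_{t-1}^{-1}}+2\epsilon_t\big),
\]
where $\epsilon_t$ is the misspecification at step $t$. Summing over $t$, Cauchy--Schwarz and the elliptical potential lemma give
\[
\sum_t\min\{1,\|\bz_t\|_{\bA_{t-1}^{-1}}\}\le\sqrt{2Td\log(1+LT^2/\lambda^2)}.
\]
This yields the first line of the bound; the $\epsilon_t$ sums give the second. A union bound over the NTK events and the concentration event gives probability $1-\delta$, with the width condition on $m$ absorbing all NTK requirements.

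The main obstacle is the misspecification term. The features $\bz_s$ are computed with changing $\bomega_{s-1}$, so $\bA_t$ mixes representations from different times. The true reward is also only approximately linear in any of them, and this enters the logistic MLE analysis nonlinearly. I would handle it as Neural-LinUCB does: bound $|\btheta^{\star\top}\bz_s-h(\bx_{s,c_s})|$ uniformly via (i) and the gradient-Lipschitz assumption, then push it through the $\kappa_\sigma$-strong-convexity step as an additive bias $\sum_s\epsilon_s\bz_s$ measured in $\bA_t^{-1}$-norm. The remaining work is keeping track of the resulting powers of $m$, $T$ and $\lambda$ so they match the stated bound.
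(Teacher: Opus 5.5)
\textbf{There is a genuine gap: your handling of the representation mismatch does not yield the stated bound.} The linearisation lemma gives the mismatch exactly:
$h(\bx)-\btheta^{\star\top}f(\bx;\bomega_{t-1})=\btheta_0^\top\bg(\bx;\bomega_0)(\bomega^\star-\bomega_0)=:b(\bx)$, with $\sqrt m\|\bomega^\star-\bomega_0\|_2\le\|\bh-\tilde{\bh}\|_{\bH^{-1}}$ (not $\|\bh\|_{\bH^{-1}}$).

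Because $\|\bg(\bx;\bomega_0)\|_F=O(\sqrt{mLd})$, the factors of $\sqrt m$ cancel. So $|b(\bx)|$ is bounded only by an $m$-independent multiple of $\|\bh-\tilde{\bh}\|_{\bH^{-1}}$, and nothing makes it decay with the width. The linearisation bound (i) does not touch it. The gradient-Lipschitz assumption controls only differences $b(\bx)-b(\bx')$, not $|b(\bx)|$.

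Two consequences follow:
\begin{itemize}
\item Your per-step term $2\epsilon_t$ sums to something linear in $T$ that does not vanish as $m$ grows.
\item The same happens if you inflate the radius by $\kappa_\sigma^{-1}\|\sum_s\epsilon_s\bz_s\|_{\bA_t^{-1}}$, which is only bounded by $\sqrt t\max_s|\epsilon_s|/\kappa_\sigma$.
\end{itemize}
Your argument is also inconsistent with the theorem. The theorem fixes $\alpha_t$ with no misspecification term, so once your radius exceeds $\alpha_t$ you lose optimism. The excess radius then multiplies $\|f(\bx_{t,c_t^\star};\bomega_{t-1})\|_{\bA_{t-1}^{-1}}$. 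That term neither cancels against the UCB rule nor is summable by the elliptical potential lemma. The $T^{5/3}$ and $T^{3/2}$ terms you list have the right shape, but your mechanism does not produce them.

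\textbf{The missing idea is the paper's bias-corrected confidence set.} Write $r_s=\sigma(\btheta^{\star\top}\bz_s)+\delta_s+\eta_s$, with $\delta_s=\sigma(\btheta^{\star\top}\bz_s+b_s)-\sigma(\btheta^{\star\top}\bz_s)$ and $\eta_s$ your noise. The mean-value step gives $(\lambda\bI+\widehat{\bA}_t)(\btheta_t-\btheta^\star)=\sum_s\eta_s\bz_s+\sum_s\delta_s\bz_s-\lambda\btheta^\star$. Set $\bb_t=(\lambda\bI+\widehat{\bA}_t)^{-1}\sum_s\delta_s\bz_s$. Your noise argument then gives $\|\btheta_t-\btheta^\star-\bb_t\|_{\bA_t}\le\alpha_t$ with exactly the prescribed $\alpha_t$.

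Write $f_t^\star=f(\bx_{t,c_t^\star};\bomega_{t-1})$ and $f_t=f(\bx_{t,c_t};\bomega_{t-1})$. The paper splits $h(\bx_{t,c_t^\star})-h(\bx_{t,c_t})$ into four pieces:
\begin{itemize}
\item The bias \emph{difference} $\btheta_0^\top[\bg(\bx_{t,c_t^\star};\bomega_0)-\bg(\bx_{t,c_t};\bomega_0)](\bomega^\star-\bomega_0)=O(\ell_{\text{Lip}}m^{-1/2}\|\bh-\tilde{\bh}\|_{\bH^{-1}})$, which is small by gradient-Lipschitz.
\item $\btheta_{t-1}^\top(f_t^\star-f_t)$, handled by the UCB rule.
\item $-(\btheta_{t-1}-\btheta^\star-\bb_{t-1})^\top(f_t^\star-f_t)$, handled by the corrected set. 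The $\|f_t^\star\|_{\bA_{t-1}^{-1}}$ terms cancel and only $2\alpha_t\|f_t\|_{\bA_{t-1}^{-1}}$ survives.
\item $-\bb_{t-1}^\top(f_t^\star-f_t)\le\|\bb_{t-1}\|_2\|f_t^\star-f_t\|_2$, taken in the Euclidean norm.
\end{itemize}
The paper bounds $\|\bb_{t-1}\|_2$ by $O(\kappa_\sigma^{-1}L_\sigma d\sqrt{L\log(1/\delta)}\,\|\bh-\tilde{\bh}\|_{\bH^{-1}})$, which is bounded but not small. By contrast, $\|f_t^\star-f_t\|_2=O(m^{-1/6}\sqrt{d\log m}(t/\lambda)^{2/3}L^3+\ell_{\text{Lip}}\sqrt{t/(m\lambda)})$ is small. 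This follows from the linearisation lemma plus gradient-Lipschitz applied to the cross-arm gradient difference.

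Summing this product over $t$ gives exactly the second line of the theorem. What has to be small is the feature gap between the optimal and chosen arm, not the per-sample misspecification of $h$.
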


\begin{remark}
    With the same conditions of Theorem~\ref{theorem:neural-log-ucb-bound}, let $B$ be a constant such that $\|\bh-\tilde{\bh}\|_{\bH^{-1}}\le B$; if we choose a sufficiently wide neural network such that $m\ge T^7$, the regret of Neural-LinLogUCB is $R_T=\tilde{\mathcal{O}}(\frac{B}{\kappa_\sigma}\sqrt{T})$.
\end{remark}

\noindent\textbf{Comparison with Other Neural Contextual Bandits.} The regret bound of the proposed Neural-LinLogUCB is worse than that of Neural-LinUCB \cite{neural-linucb} $\mathcal{O}(B\sqrt{T})$ with the additional dependency of $\kappa_\sigma$. The reason for this gap is analogous to why NCBF-UCB \cite{neural-logucb} achieves a worse regret bound than Neural-UCB \cite{neural-ucb}: both arise from the fundamental difference between binary feedback and continuous feedback settings.

\noindent\textbf{Practical Implementation of Neural-LinLogUCB in CASCADE.} Due to the complexity of the natural-language context space, we adapt the theoretical Neural-LinLogUCB framework in CASCADE and introduce several modifications for practical implementation.

Firstly, to derive the feature vectors $\bx$ from the context information $x$ (i.e., the concatenation of the query and the case), one may apply a pretrained embedding model to convert the discrete natural language into continuous vector representation. However, prior work in the information retrieval community \cite{cedr} has shown that learning complex interactions via MLPs on top of such frozen embeddings can be challenging, particularly in online learning settings. Thus, we replace the MLPs in Neural-LinLogUCB by a Transformer-based encoder model that is initialised from \texttt{gte-reranker-modernbert-base} \cite{modern-bert} to benefit from its complex interaction modelling capabilities and good pretrained representations. Meanwhile, we initialise the linear head $\btheta$ with the last layer of \texttt{gte-reranker-modernbert-base}, ensuring that the initial behaviour of the reward model aligns with that of the pretrained reranker model.

Secondly, Neural-LinLogUCB requires updating both the linear head and the deep neural network at every timestep, which can be computationally expensive. To mitigate this overhead, we apply stochastic gradient descent to update the linear head at each step, serving as a computationally efficient approximation to solving the full logistic regression problem. For the encoder, we further reduce the cost by performing only a single-step mini-batch gradient descent update every $H$ timesteps, which maintains efficiency while still allowing the model to adapt over time.

These modifications enable Neural-LinLogUCB a practical contextual bandit solution for learning the retriever model in the case-based reasoning framework.

\subsection{Proof of the Main Results}
In this subsection, we provide the proof of the regret bound for Neural-LinLogUCB, which basically leverages the techniques from previous neural contextual bandits \cite{neural-ucb, neural-linucb, neural-logucb}. Firstly, we present several useful lemmas from them.
\begin{lemma}[Lemma C.1 from Xu et al. \cite{neural-linucb}]
\label{lemma:linearlisation}
    There exists $\bomega^\star \in \mathbb{R}^p$ such that $\sqrt{m}\lVert\bomega^\star-\bomega^{(0)}\rVert_2 \le \sqrt{(\bh-\tilde{\bh})^\top\bH^{-1}(\bh-\tilde{\bh})} \le B$ and it holds that 
    \[
        h(\bx_{t,k})=\btheta^{\star\top}f(\bx_{t,k};\bomega_{t-1}) + \btheta_0^\top\bg(\bx_{t,k};\bomega_0)(\bomega^\star-\bomega_0),
    \]
    for all $k\in[K]$ and $t\in[T]$.
\end{lemma}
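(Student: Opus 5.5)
The plan is a standard neural-tangent-kernel interpolation argument: we construct $\bomega^\star$ explicitly as a minimum-norm solution of a linear system. The system is built from the first-order features $\btheta_0^\top\bg(\bx_i;\bomega_0)$ at initialisation, indexed over all $TK$ feature vectors $\{\bx_i\}_{i=1}^{TK}$.

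\textbf{Step 1: the target vector.} For the index $i$ corresponding to $(t,k)$, define the residual
\[
  u_i := h(\bx_{t,k}) - \btheta^{\star\top} f(\bx_{t,k};\bomega_{t-1}).
\]
The identity in the lemma then becomes the linear system $\btheta_0^\top\bg(\bx_i;\bomega_0)(\bomega^\star-\bomega_0) = u_i$ for every $i\in[TK]$. We identify the vector $(u_i)_i$ with the vector $\bh-\tilde{\bh}$ of the statement. Since the paper's $\tilde{\bh}$ is written with $\btheta_{t-1}$ rather than $\btheta^\star$, this identification has to be fixed by convention, namely by reading $\tilde{\bh}$ as the learned-part prediction subtracted from $\bh$. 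Assumption~\ref{assumption:feature-vector} gives $f(\cdot;\bomega_0)=\bo$, so no zeroth-order term appears.

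\textbf{Step 2: the feature matrix.} Let
\[
  \mathbf{G} = \frac{1}{\sqrt m}\big[\bg(\bx_1;\bomega_0)^\top\btheta_0,\ \ldots,\ \bg(\bx_{TK};\bomega_0)^\top\btheta_0\big] \in \mathbb{R}^{p\times TK}.
\]
The key claim is that, with probability at least $1-\delta$ over the Gaussian initialisation, $\mathbf{G}^\top\mathbf{G}$ approximates the NTK matrix $\bH$ entrywise within $\epsilon = \tilde{\mathcal{O}}(m^{-1/6})$, or a similar rate. This uses the symmetric initialisation, the $\mathcal{N}(0,1/d)$ draw of $\btheta_0$, which makes $\btheta_0^\top f$ a scalar network with the standard NTK, and the usual concentration results of Jacot et al.\ and Arora et al.\ as packaged in the Neural-UCB analyses. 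Summing the entrywise error over $TK$ entries gives $\|\mathbf{G}^\top\mathbf{G}-\bH\|_F \le TK\epsilon$. For $m \ge \mathrm{poly}(T,K,L,1/\lambda_0,\log(1/\delta))$, Assumption~\ref{assumption:NTK} then yields $\mathbf{G}^\top\mathbf{G} \succeq \bH - (\lambda_0/2)\bI \succeq \bH/2$, and in particular $\mathbf{G}^\top\mathbf{G}$ is invertible.

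\textbf{Step 3: explicit solution and norm bound.} Set
\[
  \bomega^\star = \bomega_0 + \frac{1}{\sqrt m}\,\mathbf{G}\,(\mathbf{G}^\top\mathbf{G})^{-1}(\bh-\tilde{\bh}).
\]
Then $\sqrt m\,\mathbf{G}^\top(\bomega^\star-\bomega_0) = \bh-\tilde{\bh}$ holds exactly, which is the required identity row by row. Its norm satisfies
\[
  m\|\bomega^\star-\bomega_0\|_2^2 = (\bh-\tilde{\bh})^\top(\mathbf{G}^\top\mathbf{G})^{-1}(\bh-\tilde{\bh}) \le 2\,(\bh-\tilde{\bh})^\top\bH^{-1}(\bh-\tilde{\bh}).
\]
The factor $2$ can be absorbed into $B$, or removed by sharpening Step~2 to $\mathbf{G}^\top\mathbf{G}\succeq(1-\epsilon')\bH$ and rescaling. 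This gives $\sqrt m\|\bomega^\star-\bomega_0\|_2 \le \|\bh-\tilde{\bh}\|_{\bH^{-1}} \le B$ up to that constant.

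\textbf{Main obstacle.} I expect Step~2 to be the hard part: the NTK concentration for the randomly projected gradient $\btheta_0^\top\bg$, together with control of the spectral error uniformly over all $TK$ points. I would first try to reduce it directly to the scalar-output concentration lemma used in Neural-UCB, conditioning on $\btheta_0$ and using rotational invariance of the last-layer Gaussian weights so that $\btheta_0^\top f$ behaves like a standard scalar network. The bookkeeping around whether $\tilde{\bh}$ uses $\btheta^\star$ or $\btheta_{t-1}$ is a definitional issue to settle, not a technical one.
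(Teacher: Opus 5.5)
The paper gives no argument of its own here. It cites Lemma C.1 of Xu et al., whose proof (modelled on the Neural-UCB interpolation lemma) follows your route: concentrate the Gram matrix of the initial gradients onto the NTK, then take the minimum-norm solution of the $TK$ equations. Your reading of $\tilde{\bh}$ with $\btheta^\star$ in place of $\btheta_{t-1}$ is the consistent one. Interpolating the printed $\tilde{\bh}$ yields the identity with $\btheta_{t-1}$, whereas the regret proof uses it with $\btheta^\star$.

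\textbf{Step~2 is false as written.} Condition on $\btheta_0$. The effective last layer $\bW_L^\top\btheta_0=[\bW\btheta_0;-\bW\btheta_0]$ has i.i.d.\ $\mathcal{N}(0,2\lVert\btheta_0\rVert_2^2/m)$ entries. The $\bW_L$-block of $\nabla_{\bomega}(\btheta_0^\top f)$ is $\sqrt{m}\,\btheta_0\mathbf{v}(\bx)^\top$, where $\mathbf{v}(\bx)$ is the last hidden layer. So your rotational-invariance reduction turns $\btheta_0^\top f$ into $\lVert\btheta_0\rVert_2$ times a Neural-UCB scalar network. Every block of the Gram matrix therefore picks up a factor $\lVert\btheta_0\rVert_2^2$, and $\mathbf{G}^\top\mathbf{G}$ concentrates around $\lVert\btheta_0\rVert_2^2\bH$, not $\bH$. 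The $\mathcal{N}(0,1/d)$ draw only gives $\mathbb{E}\lVert\btheta_0\rVert_2^2=1$; $\lVert\btheta_0\rVert_2^2\sim\chi^2_d/d$ does not concentrate as $m\to\infty$. What Steps~2--3 actually prove is
\[
\sqrt{m}\,\lVert\btheta_0\rVert_2\,\lVert\bomega^\star-\bomega_0\rVert_2\le\sqrt{2}\,\lVert\bh-\tilde{\bh}\rVert_{\bH^{-1}}.
\]
To reach the stated form you need a lower-tail bound on $\lVert\btheta_0\rVert_2$. Laurent--Massart gives $\lVert\btheta_0\rVert_2^2\ge 1/2$ with probability $1-\delta$ once $d\ge 16\log(1/\delta)$. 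For small $d$ you only get $\lVert\btheta_0\rVert_2^2\gtrsim\delta^{2/d}$, a polynomial rather than logarithmic dependence on $1/\delta$. Conveniently, the regret proof uses the norm bound only through the product $\lVert\btheta_0\rVert_2\lVert\bomega^\star-\bomega_0\rVert_2$: in the first term of the gap decomposition and in the bound on $|\delta_s|$. So the displayed scale-invariant version is the one to state. (Your reduction also tacitly assumes a linear output layer, as in Neural-UCB. With the ReLU the paper writes after $\bW_L$, the symmetric initialisation puts every output pre-activation exactly at the kink, and $\bg(\cdot;\bomega_0)$ is degenerate.)

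\textbf{The constant cannot be brought to $1$.} Let $\mathbf{u}=\bh-\tilde{\bh}$. Every $\bomega^\star$ satisfying the $TK$ equations has $m\lVert\bomega^\star-\bomega_0\rVert_2^2\ge\mathbf{u}^\top(\mathbf{G}^\top\mathbf{G})^{-1}\mathbf{u}$, because the minimum-norm solution is optimal. Nothing can be rescaled without breaking the equations, so sharpening Step~2 does not remove the factor $2$. Constant $1$ for arbitrary $\mathbf{u}$ would require $\mathbf{G}^\top\mathbf{G}\succeq\bH$, which concentration around $\lVert\btheta_0\rVert_2^2\bH$ does not guarantee. From $\mathbf{G}^\top\mathbf{G}\succeq(1-\epsilon')\lVert\btheta_0\rVert_2^2\bH$ you get the constant $(1-\epsilon')^{-1/2}\lVert\btheta_0\rVert_2^{-1}$ and no better. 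The Neural-UCB lemma this is modelled on carries $\sqrt{2}$, so the constant $1$ in the statement is a slip. Absorbing a constant into $B$, as you suggest, together with the $\lVert\btheta_0\rVert_2$ correction above, is the right fix.
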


Lemma~\ref{lemma:linearlisation} implies that the reward function at points $\bx$ can be approximated by a linear function around the initial parameter $\bomega_0$.

The next lemma show the upper bound on the distance between $\bomega_t$ and $\bomega_0$.

\begin{lemma}[Lemma 3 from Verma et al. \cite{neural-logucb}]
\label{lemma:neural-parameter-bound}
We have that $\lVert \bomega_t-\bomega_0 \rVert_2 \le \sqrt{t/(m\lambda)}, \forall t \in [T]$.
\end{lemma}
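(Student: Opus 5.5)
The plan is the standard ``compare with the initial point'' argument. The training objective $\mathcal{L}_t$ in Eq.~(\ref{eq:theory:loss}) contains the proximal term $\frac{\lambda}{2}\lVert\bomega-\bomega_0\rVert_2^2$, and every cross-entropy summand is non-negative. So any $\bomega_t$ whose loss is no larger than the loss at initialisation is automatically confined to a ball around $\bomega_0$. Concretely, I would first show
\[
\frac{\lambda}{2}\lVert\bomega_t-\bomega_0\rVert_2^2 \le \mathcal{L}_t(\bomega_t) \le \mathcal{L}_t(\bomega_0),
\]
and then evaluate the right-hand side explicitly.

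\textbf{Step 1: the loss at initialisation.} Under Assumption~\ref{assumption:feature-vector} and the symmetric initialisation scheme, $f(\bx;\bomega_0)=\bo$ for all $\bx\in\{\bx_i\}_{i=1}^{TK}$, as already noted after that assumption. Hence $\btheta_{s-1}^\top f(\bx_{s,c_s};\bomega_0)=0$ for every $s$, whatever the heads $\btheta_{s-1}$ are. Each summand then equals $-r_s\log\frac12-(1-r_s)\log\frac12=\log 2$, so $\mathcal{L}_t(\bomega_0)=\frac{t\log 2}{m}$. The proximal term vanishes at $\bomega_0$.

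\textbf{Step 2: the loss at $\bomega_t$.} If $\bomega_t$ is taken to be the exact minimiser of $\mathcal{L}_t$, then $\mathcal{L}_t(\bomega_t)\le\mathcal{L}_t(\bomega_0)$ is immediate. If $\bomega_t$ is produced by gradient descent started at $\bomega_0$, I would instead argue that the loss is non-increasing along the iterates. This holds provided the step size is at most the inverse local smoothness constant of $\mathcal{L}_t$. For $m$ large, that smoothness can be controlled inside the NTK ball around $\bomega_0$ by the usual width-dependent gradient and Hessian bounds used in \cite{neural-ucb,neural-linucb}; this is also why the theorem requires $m\ge poly(\cdot)$.

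\textbf{Step 3: combine.} Dropping the non-negative cross-entropy part of $\mathcal{L}_t(\bomega_t)$ gives $\frac{\lambda}{2}\lVert\bomega_t-\bomega_0\rVert_2^2\le\frac{t\log 2}{m}$, that is,
\[
\lVert\bomega_t-\bomega_0\rVert_2\le\sqrt{2\log 2\cdot t/(m\lambda)}.
\]

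\textbf{Main obstacle.} I expect two difficulties. The first is the gradient-descent monotonicity in Step 2, which needs the smoothness constant to stay controlled along the whole trajectory. I would handle this by induction: assume the iterates stay in the ball of radius $\sqrt{2\log 2\cdot t/(m\lambda)}$, apply the local smoothness bound there, and close the induction. The second is the numerical constant. The direct computation yields the factor $2\log 2\approx1.39$ rather than $1$. I would either absorb it into the absolute constants $C_1,C_2$ of Theorem~\ref{theorem:neural-log-ucb-bound}, or rescale $\lambda$ in the proximal term by this factor, as in the convention of \cite{neural-logucb}, so that the bound reads exactly $\sqrt{t/(m\lambda)}$.
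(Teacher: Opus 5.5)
\textbf{Verdict: genuine gap (a constant-factor one).} Your argument proves a slightly weaker bound than the lemma states, and neither of your proposed fixes closes the difference.

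Steps~1 and~3 are correct, and comparing $\mathcal{L}_t(\bomega_t)$ with $\mathcal{L}_t(\bomega_0)$ is the natural route; the paper itself gives no argument beyond the citation to Verma et al. With the normalisation of Eq.~(\ref{eq:theory:loss}), however, this comparison only gives $\lVert\bomega_t-\bomega_0\rVert_2\le\sqrt{2\log 2\cdot t/(m\lambda)}$. Neither remedy recovers the stated constant:
\begin{itemize}
\item Absorbing $\sqrt{2\log 2}$ into $C_1,C_2$ proves a weaker lemma. That weaker lemma is enough downstream: Lemmas~\ref{lemma:neural-gradient-bound} and~\ref{lemma:neural-linearilisation-bound} are stated for radius $\tau=2\sqrt{t/(m\lambda)}$, and the other uses carry absolute constants. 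But it is not the lemma as stated.
\item Rescaling $\lambda$ changes the objective that defines $\bomega_t$, so it proves the bound for a different algorithm.
\end{itemize}

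\textbf{The missing idea.} The value comparison throws away the cross-entropy part entirely. Use stationarity of $\mathcal{L}_t$ at $\bomega_t$ instead. Write $z_s(\bomega)=\btheta_{s-1}^\top f(\bx_{s,c_s};\bomega)$ and $\Delta=\bomega_t-\bomega_0$. Taking the inner product of $\nabla\mathcal{L}_t(\bomega_t)=\bo$ with $\Delta$ gives
\[
\lambda\lVert\Delta\rVert_2^2=\frac{1}{m}\sum_{s=1}^{t}\big(r_s-\sigma(z_s(\bomega_t))\big)\big(z_s(\bomega_t)+e_s\big),
\qquad
e_s:=\big\langle\nabla_{\bomega}z_s(\bomega_t),\Delta\big\rangle-z_s(\bomega_t).
\]
Now bound the two pieces.
\begin{itemize}
\item For $r\in\{0,1\}$ and every real $z$, $(r-\sigma(z))z\le\max_{v\ge0}v/(1+e^{v})<0.28$. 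Also $|r_s-\sigma(z_s)|\le1$. Hence $\lambda\lVert\Delta\rVert_2^2\le\frac{t}{m}\big(0.28+\max_s|e_s|\big)$.
\item Since $z_s(\bomega_0)=0$, $e_s$ is a linearisation error. Write $\bg(\cdot;\bomega_t)=\bg(\cdot;\bomega_0)+\big(\bg(\cdot;\bomega_t)-\bg(\cdot;\bomega_0)\big)$. Then $|e_s|$ is at most $\lVert\btheta_{s-1}\rVert_2$ times the $O(m^{-1/6})$ bounds of Lemmas~\ref{lemma:neural-gradient-bound} and~\ref{lemma:neural-linearilisation-bound}. These lemmas apply because your crude bound already places $\bomega_t$ within radius $\tau$.
\item $\lVert\btheta_{s-1}\rVert_2$ is bounded independently of $m$. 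This follows from the optimality condition of the logistic regression together with Lemma~\ref{lemma:representation-bound}, which only needs your crude bound.
\end{itemize}
Therefore $\max_s|e_s|\le 0.72$ once $m\ge poly(\cdot)$. This gives $\lambda\lVert\Delta\rVert_2^2\le t/m$, which is exactly the stated bound; in the wide limit the constant is about $0.53$.

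\textbf{The gradient-descent branch of Step~2 also does not go through as written.} ReLU networks are not twice differentiable. The works you invoke supply only linearisation (semi-smoothness) bounds with an additive first-order error, not Hessian bounds. So a step-size condition does not by itself make $\mathcal{L}_t$ monotone along the iterates, and your induction has nothing to close on. The standard gradient-descent analysis (Lemma~B.2 of Zhou et al.) yields only the radius $2\sqrt{t/(m\lambda)}$, which matches the factor $2$ in $\tau$. You should therefore prove the stated constant for $\bomega_t$ taken as an exact stationary minimiser of Eq.~(\ref{eq:theory:loss}). For the gradient-descent version, settle for radius $\tau$, which is all the later lemmas require.
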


With Lemma~\ref{lemma:neural-parameter-bound}, we can derive the following lemmas regarding the gradients of the neural networks.
\begin{lemma}[Lemma 4 from Verma et al. \cite{neural-logucb}, Lemma B.5 and Lemma B.6 from Zhou et al. \cite{neural-ucb}]
\label{lemma:neural-gradient-bound}
Let $\tau\triangleq 2\sqrt{t/(m\lambda)}$. There exists constants $C_1, C_2>0$ such that with probability of at least $1-\delta$,
\[
\lVert\bg(\bx;\bomega_t)\rVert_F \le C_1\sqrt{mLd},
\]
\[
\lVert\bg(\bx;\bomega_0) - \bg(\bx;\bomega_t)\rVert_F \le C_2\sqrt{md\log m}\tau^{1/3}L^{7/2} = C_2m^{1/3}\sqrt{d\log m} \Big(\frac{t}{\lambda}\Big)^{1/6}L^{7/2},
\]
for any $\bx \in \{\bx_{i}\}_{i=1}^{TK}$.
\end{lemma}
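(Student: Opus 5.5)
The plan is to reduce both inequalities to the standard local perturbation bounds for wide ReLU networks near their Gaussian initialisation, as used by Zhou et al.\ \cite{neural-ucb} (Lemmas B.5 and B.6) and built on the NTK analysis of over-parameterised networks. The reduction starts from Lemma~\ref{lemma:neural-parameter-bound}, which gives $\lVert\bomega_t-\bomega_0\rVert_2\le\sqrt{t/(m\lambda)}=\tau/2$. Hence every iterate $\bomega_t$ lies in the Euclidean ball $\mathcal{B}(\bomega_0,\tau)$. It therefore suffices to prove both bounds uniformly for all $\bomega\in\mathcal{B}(\bomega_0,\tau)$ and all $\bx\in\{\bx_i\}_{i=1}^{TK}$, with $\bomega_t$ as a special case. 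The factor $2$ in the definition of $\tau$ is only slack, so that the ball strictly contains the iterate.

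Second, I would handle the $d$-dimensional output coordinate by coordinate. Write $f(\bx;\bomega)=(f_1,\dots,f_d)^\top$. Each $f_j$ is $\sqrt{m}$ times a scalar-output depth-$L$ ReLU network: the last-layer row $j$ of $\bW_L$ acts on the width-$m$ hidden representation. Under Assumption~\ref{assumption:feature-vector}, the symmetric initialisation puts each such scalar network in exactly the setting of Zhou et al. Their Lemma B.6 gives, with high probability, $\lVert\nabla_{\bomega}f_j(\bx;\bomega)\rVert_2\le C\sqrt{mL}$ on the ball. Their Lemma B.5 gives $\lVert\nabla_{\bomega}f_j(\bx;\bomega_0)-\nabla_{\bomega}f_j(\bx;\bomega)\rVert_2\le C\sqrt{m\log m}\,\tau^{1/3}L^{7/2}$. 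Both hold provided $\tau$ lies in the admissible regime, roughly $C' m^{-3/2}L^{-3/2}(\log(TKL^2/\delta))^{3/2}\le\tau\le C''L^{-6}(\log m)^{-3/2}$.

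Since $\bg$ is the $d\times p$ Jacobian, the Frobenius norm squared is the sum of the $d$ row norms squared. This yields the factor $\sqrt{d}$, and hence $\lVert\bg(\bx;\bomega_t)\rVert_F\le C_1\sqrt{mLd}$ and the difference bound $C_2\sqrt{md\log m}\,\tau^{1/3}L^{7/2}$. A union bound over the $TK$ inputs and $d$ coordinates costs only a $\log(TKd/\delta)$ factor. That factor is absorbed into the width requirement $m\ge\mathrm{poly}(\cdot)$ and into the constants.

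Finally, the closed form follows by substitution. We have $\tau^{1/3}=2^{1/3}(t/(m\lambda))^{1/6}$, so $\sqrt{m}\,\tau^{1/3}\propto m^{1/2-1/6}(t/\lambda)^{1/6}=m^{1/3}(t/\lambda)^{1/6}$, which gives the stated right-hand side after renaming $C_2$.

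The main obstacle I expect is verifying that $\tau=2\sqrt{t/(m\lambda)}$ actually lies in the admissible window for all $t\le T$. The upper constraint requires $m\gtrsim TL^{12}(\log m)^3/\lambda$. This should be folded into the theorem's width condition $m\ge\mathrm{poly}(\cdot)$, since $m\ge T^7$ in the remark suffices. The lower constraint can be sidestepped by noting that the bounds of Zhou et al.\ are monotone in $\tau$: if $\tau$ is below the threshold, the bound can be applied at the threshold value instead. I would also need to check that the $\sqrt{m}$ output scaling and the $d$-dimensional last layer $\bW_L\in\mathbb{R}^{d\times m}$ do not alter the per-coordinate constants. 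I would verify this by rereading the forward/backward norm-stability arguments for a single row of $\bW_L$.
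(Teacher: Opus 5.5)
Your plan follows the paper's route. The paper gives no separate proof: it imports Lemmas B.5--B.6 of Zhou et al.\ (via Lemma 4 of Verma et al.) on the ball of radius $\tau$ supplied by Lemma~\ref{lemma:neural-parameter-bound}. Your coordinate-wise reduction, with the $\sqrt{d}$ coming from the Frobenius norm, just makes that citation explicit. Both your argument and the paper's citation rely on reading the last layer as linear, as in Zhou et al.; with the outer $\phi$ in the displayed $f$, the output pre-activations sit exactly at the kink at $\bomega_0$, so only the norm bound transfers verbatim.

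One correction: your monotonicity ``sidestep'' for small $\tau$ runs the wrong way, since invoking B.5 at the threshold radius gives $\tau_{\min}^{1/3}$ rather than $\tau^{1/3}$. It is also unnecessary. For $t\ge 1$ you have $\tau\ge 2/\sqrt{m\lambda}$, which already exceeds the lower threshold once $m$ is polynomially large, and $t=0$ is trivial.
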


Furthermore, we have the following lemma to bound the distance between the output of neural network and its linearisation.
\begin{lemma}[Lemma C.3 from Xu et al.\cite{neural-linucb}, Lemma 5 from Verma et al. \cite{neural-logucb}, Lemma B.4 from Zhou et al. \cite{neural-ucb}]
\label{lemma:neural-linearilisation-bound}
Let $\tau\triangleq 2\sqrt{t/(m\lambda)}$. There exists constants $C_3>0$ such that with probability of at least $1-\delta$,
\[
\lVert f(\bx;\bomega_t) - \bg(\bx;\bomega_0)(\bomega_t-\bomega_0) \rVert_2 \le C_3\tau^{4/3}L^3\sqrt{dm\log m}=C_3m^{-1/6}\sqrt{d\log m}\Big(\frac{t}{\lambda}\Big)^{2/3}L^3,
\]
for any $\bx \in \{\bx_{i}\}_{i=1}^{TK}$.
\end{lemma}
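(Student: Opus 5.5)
The plan is to reduce the statement to a perturbation bound for the network around its initialisation, using the symmetric initialisation together with Lemma~\ref{lemma:neural-parameter-bound}.

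\textbf{Step 1: remove the initial output.} Under Assumption~\ref{assumption:feature-vector}, the symmetric initialisation gives $f(\bx;\bomega_0)=\bo$ for every $\bx\in\{\bx_i\}_{i=1}^{TK}$. The quantity to bound is therefore
\[
\lVert f(\bx;\bomega_t)-f(\bx;\bomega_0)-\bg(\bx;\bomega_0)(\bomega_t-\bomega_0)\rVert_2 ,
\]
which is the remainder of a first-order Taylor expansion of $f$ around $\bomega_0$.

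\textbf{Step 2: localise the iterate.} By Lemma~\ref{lemma:neural-parameter-bound}, $\lVert\bomega_t-\bomega_0\rVert_2\le\sqrt{t/(m\lambda)}=\tau/2\le\tau$. Hence $\bomega_t$ and the whole segment $\bomega_0+s(\bomega_t-\bomega_0)$, $s\in[0,1]$, lie in the ball $B(\bomega_0,\tau)$. The width condition $m\ge poly(\cdot)$ keeps $\tau$ in the small-perturbation regime where the NTK estimates hold (typically $\tau\lesssim L^{-6}(\log m)^{-3/2}$).

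\textbf{Step 3: bound the Taylor remainder.} The ReLU network is piecewise linear in $\bomega$, so along the segment $f$ is absolutely continuous. Writing $\Delta=\bomega_t-\bomega_0$, we get
\[
f(\bx;\bomega_t)-f(\bx;\bomega_0)-\bg(\bx;\bomega_0)\Delta=\int_0^1\big(\bg(\bx;\bomega_0+s\Delta)-\bg(\bx;\bomega_0)\big)\Delta\,ds .
\]
I would apply the gradient-stability estimate of Lemma~\ref{lemma:neural-gradient-bound}, whose proof (via Zhou et al.) holds uniformly over all $\bomega\in B(\bomega_0,\tau)$, not only at $\bomega_t$. This yields the bound $\sup_s\lVert\bg(\bx;\bomega_0+s\Delta)-\bg(\bx;\bomega_0)\rVert_F\cdot\lVert\Delta\rVert_2\lesssim\sqrt{md\log m}\,\tau^{1/3}L^{7/2}\cdot\tau$. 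This already gives the $\tau^{4/3}\sqrt{dm\log m}$ scaling, with a polynomial factor in $L$.

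\textbf{Main obstacle: the exponent $L^3$.} The crude route above gives $L^{7/2}$ rather than the stated $L^3$. To sharpen it, I would instead invoke the coordinate-wise semi-smoothness result of Cao \& Gu / Allen-Zhu et al. (as used in Lemma B.4 of Zhou et al.). That result states that for every scalar output coordinate $j\in[d]$ and all $\bomega\in B(\bomega_0,\tau)$,
\[
|f_j(\bx;\bomega)-f_j(\bx;\bomega_0)-\langle\nabla f_j(\bx;\bomega_0),\bomega-\bomega_0\rangle|\le C\tau^{4/3}L^3\sqrt{m\log m}.
\]
This holds with high probability over the Gaussian initialisation, after a union bound over the $TK$ inputs and the $d$ coordinates; the resulting $\log(dTK)$ factor is absorbed into the width condition. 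Taking the $\ell_2$ norm over the $d$ coordinates contributes the factor $\sqrt d$, which gives $C_3\tau^{4/3}L^3\sqrt{dm\log m}$. Substituting $\tau=2\sqrt{t/(m\lambda)}$ gives $\tau^{4/3}\sqrt m\propto m^{-2/3}\cdot m^{1/2}(t/\lambda)^{2/3}=m^{-1/6}(t/\lambda)^{2/3}$, i.e. the second form in the statement.
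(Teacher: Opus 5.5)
Your route is the one behind the cited results; the paper itself gives no argument beyond the references. The steps are:
\begin{itemize}
\item $f(\bx;\bomega_0)=\bo$ from the symmetric initialisation;
\item $\bomega_t\in B(\bomega_0,\tau)$ via Lemma~\ref{lemma:neural-parameter-bound};
\item the Allen-Zhu/Cao--Gu semi-smoothness bound (Lemma B.4 of Zhou et al.) applied per output coordinate, with a union bound over the $d$ coordinates and $TK$ inputs;
\item a factor $\sqrt d$ from the $\ell_2$ norm.
\end{itemize}
Your arithmetic $\tau^{4/3}\sqrt m\propto m^{-1/6}(t/\lambda)^{2/3}$ is correct, and so is your explanation of why the integral route through Lemma~\ref{lemma:neural-gradient-bound} only yields $L^{7/2}$.

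The step that fails as written is applying that bound to ``every scalar output coordinate'' of the paper's network. The semi-smoothness estimate is proved for a linear read-out $\tilde f(\bx;\bomega)=\sqrt m\,\bW_L v(\bx;\bomega)$, with $v=\phi(\bW_{L-1}\cdots\phi(\bW_1\bx))$. The paper's network is instead $f=\sqrt m\,\phi(\bW_L v)$, so $f_j=\phi(\tilde f_j)$. Under the symmetric initialisation, $\bW_L v(\bx;\bomega_0)=\bo$ for every input; this is exactly how $f(\bx;\bomega_0)=\bo$ arises. Hence every output ReLU sits at its kink at $\bomega_0$.

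To see the failure concretely, perturb only the $j$-th row of $\bW_L$ by $\pm\tau v/\lVert v\rVert_2$, with $v=v(\bx;\bomega_0)$. Then $v$ is unchanged and $\tilde f_j$ becomes exactly $\pm\sqrt m\,\tau\lVert v\rVert_2$. Whatever value is assigned to $\dot\phi(0)$, for one of the two signs the Taylor remainder of $f_j$ is at least $\tfrac12\sqrt m\,\tau\lVert v\rVert_2\asymp\sqrt{t/\lambda}$, since $\lVert v\rVert_2=\Theta(1)$ at initialisation. This does not decay with $m$. It exceeds $C\tau^{4/3}L^3\sqrt{m\log m}$, which is of order $m^{-1/6}$ up to logarithmic, $L$ and $t/\lambda$ factors. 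So the uniform semi-smoothness you rely on is false for the paper's $f_j$. With the usual convention $\dot\phi(0)=0$ one even has $\bg(\bx;\bomega_0)=\bo$, and the claimed bound would force $\lVert f(\bx;\bomega)\rVert_2\to0$ throughout the ball.

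The paper's bare citation of results proved for a linear read-out has the same mismatch. The remedy is to drop the outer ReLU. With $f=\sqrt m\,\bW_L v$, the symmetric initialisation still gives $f(\bx;\bomega_0)=\bo$, and each coordinate is exactly a network of the Cao--Gu form. Your argument then establishes the stated bound.
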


In addition, we can derive the following lemma regarding the bound of the learned representation vectors.
\begin{lemma}
\label{lemma:representation-bound}
There exists constants $C_3>0$ such that with probability of at least $1-\delta$,
\[
\lVert f(\bx;\bomega_t)\rVert_2 \le C_3\sqrt{Ldt/\lambda},
\]
for any $\bx \in \{\bx_{i}\}_{i=1}^{TK}$.
\end{lemma}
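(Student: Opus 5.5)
The bound follows by a triangle inequality around the linearisation of the network, with each term controlled by lemmas already stated. Fix any $\bx$ in the finite set $\{\bx_i\}_{i=1}^{TK}$ and work on the event (probability at least $1-\delta$) on which Lemma~\ref{lemma:neural-gradient-bound} and Lemma~\ref{lemma:neural-linearilisation-bound} hold simultaneously. Under Assumption~\ref{assumption:feature-vector}, the symmetric initialisation gives $f(\bx;\bomega_0)=\bo$. We then split
\[
\lVert f(\bx;\bomega_t)\rVert_2 \le \lVert f(\bx;\bomega_t)-\bg(\bx;\bomega_0)(\bomega_t-\bomega_0)\rVert_2 + \lVert \bg(\bx;\bomega_0)\rVert_F\,\lVert\bomega_t-\bomega_0\rVert_2 .
\]

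For the second (linear) term, we use the first bound of Lemma~\ref{lemma:neural-gradient-bound} at $t=0$, namely $\lVert\bg(\bx;\bomega_0)\rVert_F\le C_1\sqrt{mLd}$. We combine it with Lemma~\ref{lemma:neural-parameter-bound}, which gives $\lVert\bomega_t-\bomega_0\rVert_2\le\sqrt{t/(m\lambda)}$. The factors of $m$ cancel exactly, so the product is bounded by $C_1\sqrt{Ldt/\lambda}$. This is precisely the claimed order.

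For the first (remainder) term, Lemma~\ref{lemma:neural-linearilisation-bound} gives the bound $C_3m^{-1/6}\sqrt{d\log m}\,(t/\lambda)^{2/3}L^3$. The key step, and the main obstacle, is to show this remainder is dominated by $\sqrt{Ldt/\lambda}$. Dividing one by the other, it suffices that
\[
m^{-1/6}\sqrt{\log m}\,(t/\lambda)^{1/6}L^{5/2}\lesssim 1 \quad \text{for all } t\le T.
\]
This holds once $m\ge \mathrm{poly}(T,L,1/\lambda)$ up to logarithmic factors, e.g.\ $m \gtrsim T L^{15}\lambda^{-1}\log^3 m$. That requirement is consistent with the width condition $m\ge poly(\cdot)$ already imposed in Theorem~\ref{theorem:neural-log-ucb-bound} (and with the choice $m\ge T^7$ in the subsequent remark).

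Adding the two terms and absorbing constants into a single $C_3$ gives $\lVert f(\bx;\bomega_t)\rVert_2\le C_3\sqrt{Ldt/\lambda}$ uniformly over the $TK$ points. The only additional care needed is that the high-probability events of the two cited lemmas are uniform over the finite set of $TK$ points, which they already are as stated. One can then rescale $\delta$ (e.g.\ to $\delta/2$ each) so that a union bound yields overall probability at least $1-\delta$.
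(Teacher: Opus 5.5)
Your proposal is correct and matches the paper's proof step for step. The paper also applies the triangle inequality around the linearisation $\bg(\bx;\bomega_0)(\bomega_t-\bomega_0)$, bounds the linear term by $C_1\sqrt{mLd}\cdot\sqrt{t/(m\lambda)}$ via Lemmas~\ref{lemma:neural-gradient-bound} and~\ref{lemma:neural-parameter-bound}, and absorbs the Lemma~\ref{lemma:neural-linearilisation-bound} remainder by taking $m$ large. Your explicit requirement $m\gtrsim TL^{15}\lambda^{-1}\log^3 m$ makes the paper's ``large enough $m$'' precise. Note that it needs $m$ polynomial in $T$: the remark's choice $m\ge T^7$ supplies this, but the theorem's stated $\log(TK)$-only width condition does not.
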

\begin{proof}
\begin{align}
\nonumber
    \lVert f(\bx;\bomega_t)\rVert_2 &= \lVert f(\bx;\bomega_t)- \bg(\bx;\bomega_0)(\bomega_t-\bomega_0) + \bg(\bx;\bomega_0)(\bomega_t-\bomega_0)\rVert_2 \\
\nonumber
    & \le \lVert f(\bx;\bomega_t)- \bg(\bx;\bomega_0)(\bomega_t-\bomega_0)\rVert_2 + \lVert \bg(\bx;\bomega_0)(\bomega_t-\bomega_0)\rVert_2 \\
\nonumber
    & \le \lVert f(\bx;\bomega_t)- \bg(\bx;\bomega_0)(\bomega_t-\bomega_0)\rVert_2 + \lVert\bg(\bx;\bomega_0)\rVert_F \cdot \lVert\bomega_t-\bomega_0\rVert_2 \\
\nonumber
    & \le C_3m^{-1/6}\sqrt{d\log m}\Big(\frac{t}{\lambda}\Big)^{2/3}L^3 + C_1\sqrt{mLd}\sqrt{t/(m\lambda)} \\
\nonumber
    & \le C_3\sqrt{Ldt/\lambda},
\end{align}
where the first inequality is due to triangle inequality, the second inequality is due to Cauchy-Schwarz inequality, the third inequality comes from Lemma~\ref{lemma:neural-linearilisation-bound}, Lemma~\ref{lemma:neural-gradient-bound} and Lemma~\ref{lemma:neural-parameter-bound}, and the last inequality can be achieved by a large enough $m$.
\end{proof}

Next, we introduce several lemmas related to the matrix $\bA_t$.

\begin{lemma}[Lemma C.6 in Xu et al. \cite{neural-linucb}, Lemma 10 and Lemma 11 from Abbasi-Yadkori et al. \cite{linucb}]
\label{lemma:feature-matrix-bound}
Let $\{\bx_t\}_{t=1}^{\infty}$ be a sequence in $\mathbb{R}^d$ and $\lambda>0$. Suppose $\|\bx_t\|_2\le G$ and $\lambda \ge \max\{1,G^2\}$ for some $G>0$, Let $\bA_t=\lambda\bI+\sum_{s=1}^t\bx_t\bx_t^\top$. Then, we have
\[
\det(\bA_t)\le(\lambda+tG^2/d)^d,
\]
\[
\sum_{t=1}^T \|\bx_t\|_{\bA_{t-1}^{-1}}^2 \le 2\log\frac{\det{\bA_T}}{\det \lambda\bI} \le 2d\log(1+TG^2/(\lambda d)).
\]
\end{lemma}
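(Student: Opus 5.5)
This is the standard elliptical-potential argument; since it is cited from Abbasi-Yadkori et al., I would reproduce its proof in three steps.

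\textbf{Step 1: determinant bound.} Since $\bA_t$ is positive definite, AM--GM on its eigenvalues gives $\det(\bA_t)\le(\operatorname{tr}(\bA_t)/d)^d$. The trace is $\operatorname{tr}(\bA_t)=d\lambda+\sum_{s=1}^t\|\bx_s\|_2^2\le d\lambda+tG^2$. Together these give $\det(\bA_t)\le(\lambda+tG^2/d)^d$.

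\textbf{Step 2: determinant recursion.} Write $\bA_t=\bA_{t-1}+\bx_t\bx_t^\top=\bA_{t-1}^{1/2}(\bI+\bA_{t-1}^{-1/2}\bx_t\bx_t^\top\bA_{t-1}^{-1/2})\bA_{t-1}^{1/2}$. The middle matrix is the identity plus a rank-one term, so its determinant is $1+\|\bx_t\|_{\bA_{t-1}^{-1}}^2$. Hence $\det\bA_t=\det\bA_{t-1}\,(1+\|\bx_t\|^2_{\bA_{t-1}^{-1}})$. Telescoping from $\bA_0=\lambda\bI$ gives
\[
\log\frac{\det\bA_T}{\det\lambda\bI}=\sum_{t=1}^T\log\bigl(1+\|\bx_t\|^2_{\bA_{t-1}^{-1}}\bigr).
\]

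\textbf{Step 3: pass from logarithms to the squared norms.} This is the only delicate point, and it is where $\lambda\ge\max\{1,G^2\}$ is needed. Since $\bA_{t-1}\succeq\lambda\bI$, we have $\|\bx_t\|^2_{\bA_{t-1}^{-1}}\le\|\bx_t\|_2^2/\lambda\le G^2/\lambda\le1$. For $u\in[0,1]$ the concavity bound $\log(1+u)\ge u\log 2\ge u/2$ holds. Applying it termwise gives
\[
\sum_{t=1}^T\|\bx_t\|^2_{\bA_{t-1}^{-1}}\le 2\sum_{t=1}^T\log\bigl(1+\|\bx_t\|^2_{\bA_{t-1}^{-1}}\bigr)=2\log\frac{\det\bA_T}{\det\lambda\bI},
\]
which is the first inequality.

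\textbf{Step 4: combine.} Using Step 1 with $\det(\lambda\bI)=\lambda^d$,
\[
2\log\frac{\det\bA_T}{\det\lambda\bI}\le 2d\log\frac{\lambda+TG^2/d}{\lambda}=2d\log\bigl(1+TG^2/(\lambda d)\bigr),
\]
which completes the chain.

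One notational point: the sum in the statement's definition of $\bA_t$ should read $\sum_{s=1}^t\bx_s\bx_s^\top$, and the proof uses it in that form.
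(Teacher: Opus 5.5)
Your proof is correct and is the standard argument of Abbasi-Yadkori et al. that the paper relies on by citation without reproving it: Lemma 10 is the AM--GM determinant--trace bound, and Lemma 11 is the matrix determinant lemma, telescoping, and $u\le 2\log(1+u)$ on $[0,1]$ once $\lambda\ge G^2$. You are also right that the sum defining $\bA_t$ should run over $\bx_s\bx_s^\top$.
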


\begin{lemma}[Theorem 1 in Abbasi-Yadkori et al. \cite{lin-ucb}]
\label{lemma:self-normalized-martingales}
Let $\{\xi_t\}_{t=1}^{\infty}$ be a real-valued stochastic process and $\{\bx_t\}_{t=1}^{\infty}$ be a stochastic process in $\mathbb{R}^d$. Let $\mathcal{F}_t=\sigma(\bx_1,\cdots,\bx_{t+1},\xi_1,\cdots,\xi_{t})$ be a $\sigma$-algebra, such that $\bx_t$ and $\xi_t$ are $\mathcal{F}_{t-1}$-measurable. Let $\bA_t=\lambda\bI +\sum_{s=1}^t\bx_t\bx_t^\top$ for some constant $\lambda>0$ and $\bs_t=\sum_{t=1}^T\xi_t\bx_t$. If $\xi_t$ is $\nu$-sub-Gaussian conditioned on $\mathcal{F}_{t-1}$, then for any $\eta\in(0,1)$, with probability at least $1-\delta$, we have
\[
\|\bs_t\|_{\bA_t^{-1}}^2 \le 2\nu^2\log\frac{\det(\bA_t)^{1/2}\det(\lambda\bI)^{-1/2}}{\delta}.
\]
\end{lemma}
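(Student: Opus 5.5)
The plan is the \emph{method of mixtures} (a Laplace-transform argument), in the standard self-normalised form. I read the statement with the obvious typos corrected: $\bs_t=\sum_{s=1}^t\xi_s\bx_s$ and $\bA_t=\lambda\bI+\sum_{s=1}^t\bx_s\bx_s^\top$. Since $\xi_t/\nu$ is $1$-sub-Gaussian, I first reduce to $\nu=1$ and restore the factor $\nu^2$ at the end. Write $\bar{\bA}_t=\sum_{s=1}^t\bx_s\bx_s^\top$, so that $\bA_t=\lambda\bI+\bar{\bA}_t$.

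\textbf{Step 1: exponential supermartingales.} For each fixed $\bz\in\mathbb{R}^d$ define
\[
M_t(\bz)=\exp\Big(\langle\bz,\bs_t\rangle-\tfrac12\|\bz\|_{\bar{\bA}_t}^2\Big),\qquad M_0(\bz)=1.
\]
The factor from round $t$ is $\exp(\xi_t\langle\bz,\bx_t\rangle-\tfrac12\langle\bz,\bx_t\rangle^2)$. Because $\bx_t$ is $\mathcal{F}_{t-1}$-measurable and $\xi_t$ is conditionally $1$-sub-Gaussian, this factor has conditional expectation at most $1$. Hence $\{M_t(\bz)\}$ is a nonnegative supermartingale with $\mathbb{E}[M_t(\bz)]\le1$.

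\textbf{Step 2: mixing over $\bz$.} Let $\bz\sim\mathcal{N}(0,\lambda^{-1}\bI)$, independent of everything else, and set $\bar M_t=\mathbb{E}_{\bz}[M_t(\bz)]$. By Tonelli, $\bar M_t$ is again a nonnegative supermartingale with $\mathbb{E}[\bar M_t]\le1$. Completing the square in the Gaussian integral gives the closed form
\[
\bar M_t=\Big(\frac{\det(\lambda\bI)}{\det(\bA_t)}\Big)^{1/2}\exp\Big(\tfrac12\|\bs_t\|^2_{\bA_t^{-1}}\Big).
\]
This is the key identity: the regulariser $\lambda\bI$ enters exactly through the prior covariance of the mixture.

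\textbf{Step 3: concentration.} For a fixed $t$, Markov's inequality gives $\mathbb{P}(\bar M_t\ge1/\delta)\le\delta$. Taking logarithms of the event $\bar M_t<1/\delta$ yields
\[
\|\bs_t\|^2_{\bA_t^{-1}}\le2\log\frac{\det(\bA_t)^{1/2}\det(\lambda\bI)^{-1/2}}{\delta}.
\]
Rescaling by $\nu$ then gives the claim.

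The main obstacle is the form in which the lemma is later applied, namely simultaneously for all $t$. For that I would either apply Ville's maximal inequality to the supermartingale $\bar M_t$, or use the stopping-time trick. In the latter, define $\tau=\min\{t:\text{bound fails}\}$ and note $\mathbb{E}[\bar M_\tau]\le1$ by Fatou's lemma applied to $\bar M_{\tau\wedge n}$. Markov's inequality at $\tau$ then bounds the probability that the bound ever fails by $\delta$. Two points need checking here: the measurability conventions on $\mathcal{F}_t$, and the integrability needed to justify swapping the $\bz$-expectation with the conditional expectation.
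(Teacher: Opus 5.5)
Your proposal is correct and matches the argument behind this lemma. The paper gives no proof of its own; it imports Theorem 1 of Abbasi-Yadkori et al., which is proved the same way: a Gaussian mixture with prior $\mathcal{N}(0,\lambda^{-1}\bI)$, the same closed form for $\bar M_t$, and a stopping-time argument that makes the bound hold simultaneously for all $t$ (the form the original theorem states and the later regret proof tacitly uses).

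On your two checkpoints: the intended convention is that $\xi_t$ is $\mathcal{F}_t$-measurable, which the displayed definition of $\mathcal{F}_t$ already guarantees; the statement's ``$\mathcal{F}_{t-1}$'' for $\xi_t$ is a typo, like $\eta$ for $\delta$. The interchange of the $\bz$-average with the conditional expectation is harmless because the integrand is nonnegative. You can also bypass it, as the original does, by showing $\mathbb{E}[M_\tau(\bz)]\le 1$ for each fixed $\bz$ and only then averaging over $\bz$ with ordinary Tonelli.
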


\begin{lemma}[Lemma C.5 from Verma et al. \cite{neural-linucb}]
    \label{lemma:bounded-vector-inverse-A-bound}
    Assume that $\bA_t=\lambda\bI + \sum_{s=1}^t \bphi_s\bphi_s^\top$, where $\bphi \in \mathbb{R}^d$ and for $\lambda,G>0$, $\|\bphi_s\|_2 \le G, \forall s \in [t]$. Let $\{\zeta_t\}_{t=1,\cdots}$ be a real-value sequence such that $|\zeta_t|\le U$ for some constant $U>0$. Then, we have
    \[
    \Big\|\bA_{t}^{-1} \sum_{s=1}^t \bphi_s\zeta_s\Big\|_2 
    \le 2Ud, \forall t=1,2,\cdots
    \]
\end{lemma}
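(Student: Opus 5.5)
The plan is to diagonalise $\bA_t$ and bound the vector $\bA_t^{-1}\sum_{s=1}^t\bphi_s\zeta_s$ one eigendirection at a time. Let $\Phi_t\in\mathbb{R}^{d\times t}$ have columns $\bphi_1,\dots,\bphi_t$, and let $\bz=(\zeta_1,\dots,\zeta_t)^\top$, so that $\|\bz\|_\infty\le U$. Then $\bA_t=\lambda\bI+\Phi_t\Phi_t^\top$ and the quantity to bound is $\bA_t^{-1}\Phi_t\bz$. Write $\bA_t=\sum_{i=1}^d(\lambda+\mu_i)\mathbf{u}_i\mathbf{u}_i^\top$, where $\mu_i\ge0$ are the eigenvalues of $\Phi_t\Phi_t^\top$. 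This gives
\[
\bA_t^{-1}\Phi_t\bz=\sum_{i=1}^d \frac{\langle \mathbf{u}_i,\Phi_t\bz\rangle}{\lambda+\mu_i}\,\mathbf{u}_i,
\qquad
\Big\|\bA_t^{-1}\Phi_t\bz\Big\|_2\le\sum_{i=1}^d \frac{|\mathbf{u}_i^\top\Phi_t\bz|}{\lambda+\mu_i}.
\]
It therefore suffices to show that each of the $d$ summands is at most $2U$; summing over $i$ then gives the claimed $2Ud$. For a fixed direction $\mathbf{u}_i$, set $a_s=\mathbf{u}_i^\top\bphi_s$. Then $\mu_i=\sum_s a_s^2$ and $|\mathbf{u}_i^\top\Phi_t\bz|\le U\sum_s|a_s|$, so the per-direction target is
\[
\frac{U\sum_{s}|a_s|}{\lambda+\sum_s a_s^2}\le 2U .
\]

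This scalar inequality is the crux, and it is where I expect the argument to break. By Cauchy--Schwarz, $\sum_s|a_s|\le\sqrt{t\sum_s a_s^2}$, and $x/(\lambda+x^2)\le 1/(2\sqrt\lambda)$ for $x\ge0$. Together these give only $U\sqrt{t}/(2\sqrt{\lambda})$ per direction. That bound is sharp: take $d=1$, $\bphi_s\equiv\varepsilon=\sqrt{\lambda/t}$ and $\zeta_s\equiv U$. Then
\[
\Big\|\bA_t^{-1}\sum_{s=1}^t\bphi_s\zeta_s\Big\|_2=\frac{t\varepsilon U}{\lambda+t\varepsilon^2}=\frac{U}{2}\sqrt{\frac{t}{\lambda}},
\]
which grows without bound in $t$. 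So the lemma, as transcribed with only $\|\bphi_s\|_2\le G$ and $|\zeta_s|\le U$, cannot hold uniformly in $t$; some hypothesis of the original source has been dropped.

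My first step would be to recover that missing hypothesis from the cited source. The scalar inequality does close under conditions such as $\lambda\ge tG^2$, or a lower bound $|a_s|\ge\lambda$ on every nonzero projection; under the latter, $\sum_s|a_s|\le\lambda^{-1}\sum_s a_s^2$, and each summand is at most $U$, hence at most $2U$. A weaker fix, also possible, would be to keep the $t$-dependent form $\tfrac{U}{2}\sqrt{t/\lambda}$ (or $Ud\sqrt{t/\lambda}$ after the diagonalisation) and propagate it through the later proof of Theorem~\ref{theorem:neural-log-ucb-bound}. There it would be absorbed by the width condition $m\ge T^7$, since the lemma is only used to multiply terms that already decay polynomially in $m$.

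In either case the structure of the proof is the same: diagonalise $\bA_t$, reduce to the scalar bound on $U\sum_s|a_s|/(\lambda+\sum_s a_s^2)$ in each eigendirection, and sum over the $d$ directions.
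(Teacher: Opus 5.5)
Your conclusion is correct: the statement as written is false, so no proof of it can exist. The paper gives no argument to compare with; it only cites the lemma, under the key it elsewhere attributes to Xu et al. rather than Verma et al., so that is the source to consult.

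Your counterexample checks out. With $d=1$, $\bphi_s\equiv\sqrt{\lambda/t}$ and $\zeta_s\equiv U$, the left side equals $\frac{U}{2}\sqrt{t/\lambda}$, which exceeds $2U$ once $t>16\lambda$. Embedding the same example in one coordinate of $\mathbb{R}^d$ defeats $2Ud$ once $t>16d^2\lambda$.

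This example is exactly the worst case. You also do not need the eigendirection-wise triangle inequality, which costs a factor $d$. With $\Phi_t=[\bphi_1,\dots,\bphi_t]$, $\bm{\zeta}=(\zeta_1,\dots,\zeta_t)^\top$ and $\mu_i$ as in your notation,
\[
\big\|\bA_t^{-1}\Phi_t\bm{\zeta}\big\|_2\le\big\|\bA_t^{-1}\Phi_t\big\|_{\mathrm{op}}\,\|\bm{\zeta}\|_2\le\Big(\max_i\frac{\sqrt{\mu_i}}{\lambda+\mu_i}\Big)U\sqrt{t}\le\frac{U}{2}\sqrt{\frac{t}{\lambda}}.
\]
So, whenever $G\ge\sqrt{\lambda/t}$, the claimed bound holds for all admissible data if and only if $t\le 16d^2\lambda$. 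In particular, the $t$-independent bound $\|\bb_{t-1}\|_2=\mathcal{O}(d)$ used in the proof of Theorem~\ref{theorem:neural-log-ucb-bound} is unsupported.

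Some of your repair remarks need correcting.

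\textbf{The condition $\lambda\ge tG^2$.} This cannot be the missing hypothesis. It is invariant under $\bphi_s\mapsto c\bphi_s$, $G\mapsto cG$, $\lambda\mapsto c^2\lambda$, whereas the left side scales by $1/c$ and $2Ud$ does not. Concretely, $\bphi_s\equiv G$ with $\lambda=tG^2$ gives $U/(2G)$, which exceeds $2U$ for $G<1/4$. The hypothesis has to tie $t$ to $\lambda$, as $t\le 16d^2\lambda$ does. Your other condition, $|a_s|\ge\lambda$, only gives $U/\lambda$ per direction, so it also needs $\lambda\ge 1$.

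\textbf{The width condition $m\ge T^7$.} The $t$-dependent version is not absorbed by $m\ge T^7$. Replacing $2d$ by $\frac{1}{2}\sqrt{t/\lambda}$ makes the per-round term $\|\bb_{t-1}\|_2\cdot m^{-1/6}(t/\lambda)^{2/3}$ of order $m^{-1/6}t^{7/6}$, up to $d$, $L$, $\lambda$, $\kappa_\sigma$ and logarithmic factors. Summed over rounds this gives $m^{-1/6}T^{13/6}$, which is linear in $T$ at $m=T^7$. You need roughly $m\ge T^{10}$ to keep it at $\tilde{\mathcal{O}}(\sqrt{T})$. The $\ell_{\mathrm{Lip}}$ term becomes $m^{-1/2}T^2$ and stays harmless.

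\textbf{The Loewner step in the paper.} When you redo this step, do not reuse the paper's passage from $(\lambda\bI+\widehat{\bA}_{t-1})^{-1}$ to $\kappa_\sigma^{-1}\bA_{t-1}^{-1}$ inside a Euclidean norm. A Loewner inequality $P\preceq Q$ does not by itself give $\|Pv\|_2\le\|Qv\|_2$. Instead, apply the operator-norm argument directly to $\lambda\bI+\widehat{\bA}_{t-1}$, using vectors $\sqrt{\dot\sigma(\xi_{t-1,s})}\,\bz_s$ and coefficients $\delta_s/\sqrt{\dot\sigma(\xi_{t-1,s})}$. This yields $\|\bb_{t-1}\|_2\le\frac{U}{2}\sqrt{(t-1)/(\kappa_\sigma\lambda)}$.
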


Now, we start the proof by deriving the confidence bound of the estimated $\btheta_t$. Different from LinUCB \cite{lin-ucb} to directly bound $\|\btheta_t-\btheta^\star\|_{\bA_t}$, we need to consider the bias caused by representation learning via the deep neural networks. Otherwise, we would derive the linear regret bound.

Specifically, let $\bz_s= f(\bx_{s,c_s};\bomega_{s-1})\in\mathbb{R}^d$ be the representation vectors at timestep $s$, and we have $\bA_t=\lambda\bI+\sum_{s=1}^t \bz_s\bz_s^\top$. By Lemma~\ref{lemma:linearlisation}, the underlying reward function can be rewritten as
\[
    h(\bx_{t,k})=\btheta^{\star\top}f(\bx_{t,k};\bomega_{t-1}) + \btheta_0^\top\bg(\bx_{t,k};\bomega_0)(\bomega^\star-\bomega_0).
\]
For brevity, let $b_s=b(\bx_{s,c_s})=\btheta_0^\top\bg(\bx_{s,c_s};\bomega_0)(\bomega^\star-\bomega_0)$ denote the bias induced by the mismatch between the learned and true representation parameters, and $\delta_s=\sigma\big(\btheta^{\star\top}\bz_s+b_s\big)-\sigma\big(\btheta^{\star\top}\bz_s\big)$ as the corresponding bias after applying the sigmoid function.

According to Algorithm~\ref{alg:bandit}, we solve the regularised logistic regression loss at each timestep $t$ as:
\[
\ell_t(\btheta) = \sum_{s=1}^{t} \left[
    -r_s \log\sigma\left(\btheta^\top \bz_s\right) - (1-r_s) \log\left(1-\sigma\left(\btheta^\top \bz_s\right)\right)
    \right] + \frac{\lambda}{2} \lVert \btheta \rVert^2_2.
\]
We derive the estimator of $\btheta_t$ as the minimiser of the loss function above, i.e., $\btheta_t=\arg\min_{\btheta \in \mathbb{R}^d} \ell_t(\btheta)$, where the optimality condition is $\nabla_{\btheta} \ell_t(\btheta)=\bo$, i.e.,
\begin{equation*}
\label{eq:theta-t}
\lambda \btheta_t + \sum_{s=1}^t \Big( \sigma(\btheta_t^\top\bz_s)-r_s\Big)\bz_s = \bo.
\end{equation*}

Let $\epsilon_s$ be the observation noise in timestep $s$, i.e., 
\[
r_s=\sigma(\btheta^{\star\top}\bz_s+b_s)+\epsilon_s=\sigma(\btheta^{\star\top}\bz_s)+\delta_s+\epsilon_s.
\]
Let $\lambda'\in(0,1)$, and let $\xi_{t,s}=\lambda'\btheta_t^\top\bz_s + (1-\lambda')\btheta^{\star\top}\bz_s$. Then, according to the mean-value theorem, we have for some $\lambda'$ that
\[
\sigma(\btheta_t^\top\bz_s)-\sigma(\btheta^{\star\top}\bz_s)
=\dot\sigma(\xi_{t,s})\bz_s^\top(\btheta_t-\btheta^\star),
\]
where $\kappa_\sigma\le\dot\sigma(\xi_{t,s})\le L_\sigma$. This allows us to show that
\begin{equation}
\label{eq:theta-bias-relationship}
  \big(\lambda\bI+\widehat{\bA}_t\big)(\btheta_t-\btheta^\star) = \sum_{s=1}^t \epsilon_s \bz_s + \sum_{s=1}^t \delta_s \bz_s - \lambda\btheta^\star,  
\end{equation}

where the matrix $\widehat{\bA}_t$ is defined as:
\[
\widehat{\bA}_t = \sum_{s=1}^t \dot\sigma(\xi_{t,s})\bz_s\bz_s^\top.
\]
Then, we define the bias term as $\bb_t=(\lambda\bI+\widehat{\bA}_t)^{-1}\sum_{s=1}^t \delta_s\bz_s$, and introduce the following bias-corrected confidence bound.

\begin{lemma}
\label{lemma:bias-corrected-confidence} 
For any $\delta\in(0,1)$, with probability at least $1-\delta$, the distance between the estimated $\btheta_t$ and $\btheta^\star$ can be bounded as follows:
\begin{equation}
\label{eq:main-confidence}
\big\|\btheta_t-\btheta^\star-\bb_t\big\|_{\bA_t}
\le
\frac{1}{\kappa_\sigma} \Big(\nu\sqrt{2(d\log(1+Lt^2/\lambda) +\log(1/\delta))} + \sqrt{\lambda}M\Big).
\end{equation}
\end{lemma}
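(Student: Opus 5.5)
Starting from Eq.~(\ref{eq:theta-bias-relationship}) and the definition of $\bb_t$, we first isolate the bias-corrected error. Write $\bG_t:=\lambda\bI+\widehat{\bA}_t$. Subtracting $\bb_t$ from both sides gives
\[
\btheta_t-\btheta^\star-\bb_t=\bG_t^{-1}\Big(\sum_{s=1}^t\epsilon_s\bz_s-\lambda\btheta^\star\Big).
\]
Next we compare $\bG_t$ with $\bA_t$. Since $\dot\sigma(\xi_{t,s})\ge\kappa_\sigma$, we have $\widehat{\bA}_t\succeq\kappa_\sigma\sum_s\bz_s\bz_s^\top$. We use $\kappa_\sigma\le L_\sigma\le 1/4<1$, so $\lambda\bI\succeq\kappa_\sigma\lambda\bI$, and therefore $\bG_t\succeq\kappa_\sigma\bA_t$. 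This yields $\bG_t^{-1}\bA_t\bG_t^{-1}\preceq\kappa_\sigma^{-2}\bA_t^{-1}$, and hence for any vector $\bm{v}$,
\[
\|\bG_t^{-1}\bm{v}\|_{\bA_t}\le\kappa_\sigma^{-1}\|\bm{v}\|_{\bA_t^{-1}}.
\]
By the triangle inequality it then suffices to bound two terms:
\[
\|\btheta_t-\btheta^\star-\bb_t\|_{\bA_t}\le\frac{1}{\kappa_\sigma}\Big(\Big\|\sum_{s=1}^t\epsilon_s\bz_s\Big\|_{\bA_t^{-1}}+\lambda\|\btheta^\star\|_{\bA_t^{-1}}\Big).
\]

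The regularisation term is easy. Since $\bA_t\succeq\lambda\bI$, we get $\lambda\|\btheta^\star\|_{\bA_t^{-1}}\le\sqrt{\lambda}\|\btheta^\star\|_2\le\sqrt{\lambda}M$.

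For the noise term we apply Lemma~\ref{lemma:self-normalized-martingales} with $\xi_s=\epsilon_s$ and the vectors $\bz_s$. Each $\bz_s=f(\bx_{s,c_s};\bomega_{s-1})$ is $\mathcal{F}_{s-1}$-measurable, because $\bomega_{s-1}$ and $c_s$ depend only on past data and the current contexts. We take $\epsilon_s$ to be the centred Bernoulli noise, so it is bounded in $[-1,1]$ and hence $\nu$-sub-Gaussian with $\nu\le1/2$. Care is needed here: $\epsilon_s$ is centred around $\sigma(h(\bx_{s,c_s}))$, which is the true conditional mean, so it is indeed a martingale difference. The lemma gives, with probability $1-\delta$,
\[
\Big\|\sum_{s}\epsilon_s\bz_s\Big\|^2_{\bA_t^{-1}}\le 2\nu^2\Big(\tfrac12\log\tfrac{\det\bA_t}{\det\lambda\bI}+\log\tfrac1\delta\Big).
\]
We then bound the determinant using Lemma~\ref{lemma:feature-matrix-bound}, with $G^2$ taken from Lemma~\ref{lemma:representation-bound}, namely $\|\bz_s\|_2^2\le C_3^2Ld\,t/\lambda$. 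This gives
\[
\log\frac{\det\bA_t}{\det\lambda\bI}\le d\log\big(1+tG^2/(\lambda d)\big)\lesssim d\log(1+Lt^2/\lambda),
\]
after absorbing constants (and the extra $1/\lambda$) into the stated form, using $\lambda\ge1$ where needed. Combining, the noise term is at most $\nu\sqrt{2(d\log(1+Lt^2/\lambda)+\log(1/\delta))}$, and the claimed bound follows.

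The main obstacle is the matrix comparison $\bG_t\succeq\kappa_\sigma\bA_t$ together with the fact that $\bG_t$ depends on $\btheta_t$ through $\xi_{t,s}$. This dependence is harmless, because the noise bound is taken in the data-independent norm $\bA_t^{-1}$ rather than in $\bG_t^{-1}$. The secondary concern is the bookkeeping of constants in the log-determinant; I would absorb these into the $Lt^2/\lambda$ argument, and if necessary adjust via a large-$m$ argument as in Lemma~\ref{lemma:representation-bound}.
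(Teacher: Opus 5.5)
Your proposal is correct and follows essentially the same route as the paper. Both arguments use the bias-corrected rewriting $\btheta_t-\btheta^\star-\bb_t=(\lambda\bI+\widehat{\bA}_t)^{-1}\big(\sum_s\epsilon_s\bz_s-\lambda\btheta^\star\big)$, the comparison $\kappa_\sigma\bA_t\preceq\lambda\bI+\widehat{\bA}_t$, and a triangle-inequality split into a noise term (self-normalized martingale bound plus the log-determinant via Lemmas~\ref{lemma:representation-bound} and~\ref{lemma:feature-matrix-bound}, with the same loose absorption of constants) and a regularisation term (via $\bA_t\succeq\lambda\bI$). The only difference is cosmetic: you compress the paper's two-step passage through the $(\lambda\bI+\widehat{\bA}_t)^{-1}$-norm into the single inequality $\mathbf{G}_t^{-1}\bA_t\mathbf{G}_t^{-1}\preceq\kappa_\sigma^{-2}\bA_t^{-1}$, with $\mathbf{G}_t=\lambda\bI+\widehat{\bA}_t$, which gives the factor $1/\kappa_\sigma$ on both terms at once (the paper obtains $\sqrt{\lambda/\kappa_\sigma}\,M$ on the regulariser and then relaxes it) and makes explicit that only the lower half of the sandwich is needed.
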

\begin{proof}
We first show that $\lambda\bI+\widehat{\bA}_t$ is spectrally sandwiched between $\kappa_\sigma\bA_t$ and $\bA_t$.
Specifically, by the definition of strong monotonicity constant of sigmoid function, i.e., $\kappa_\sigma\le\dot\sigma(\cdot)\le1$, we have
\begin{equation*}
\lambda\bI+\widehat{\bA}_t \succeq\lambda\bI+\kappa_\sigma\sum_{s=1}^t \bz_s\bz_s^\top \succeq \kappa_\sigma\big(\lambda\bI+\sum_{s=1}^t \bz_s\bz_s^\top\big)=\kappa_\sigma \bA_t.
\end{equation*}
Also, since $\widehat{\bA}_t \preceq \sum_{s=1}^t\bz_s\bz_s^\top = \bA_t - \lambda\bI$, we have
\begin{equation*}
\lambda\bI+\widehat{\bA}_t \preceq \lambda\bI + (\bA_t - \lambda\bI) = \bA_t.
\end{equation*}
Combining both, we can derive that
\begin{equation}
\label{eq:A-hat-A}
\kappa_\sigma \bA_t \preceq \lambda\bI+\widehat{\bA}_t \preceq \bA_t.
\end{equation}

Then, we use the definition of $\bb_t$ to rewrite Eq.~(\ref{eq:theta-bias-relationship}) as:
\begin{equation}
\label{eq:delta-decomp}
\btheta_t - \btheta^\star - \bb_t = (\lambda\bI+\widehat{\bA}_t)^{-1}\Big(\sum_{s=1}^t \epsilon_s\bz_s-\lambda\btheta^\star\Big).
\end{equation}

Now, we can bound the norm as below:
\begin{align}
    & \|\btheta_t - \btheta^\star - \bb_t\|_{\bA_t} \nonumber \\ 
    &= \|(\lambda\bI+\widehat{\bA}_t)^{-1}\Big(\sum_{s=1}^t \epsilon_s\bz_s-\lambda\btheta^\star\Big)\|_{\bA_t} \nonumber \\
    & = \sqrt{\Big(\sum_{s=1}^t \epsilon_s\bz_s-\lambda\btheta^\star\Big)^\top (\lambda\bI+\widehat{\bA}_t)^{-1/2}(\lambda\bI+\widehat{\bA}_t)^{-1/2}\bA_t(\lambda\bI+\widehat{\bA}_t)^{-1/2}(\lambda\bI+\widehat{\bA}_t)^{-1/2}\Big(\sum_{s=1}^t \epsilon_s\bz_s-\lambda\btheta^\star\Big)} \nonumber \\
    & \le \sqrt{\frac{1}{\kappa_\sigma}\Big(\sum_{s=1}^t \epsilon_s\bz_s-\lambda\btheta^\star\Big)^\top (\lambda\bI+\widehat{\bA}_t)^{-1}\Big(\sum_{s=1}^t \epsilon_s\bz_s-\lambda\btheta^\star\Big)} \nonumber \\
    & = \frac{1}{\sqrt{\kappa_\sigma}} \Big\|\sum_{s=1}^t \epsilon_s\bz_s-\lambda\btheta^\star\Big\|_{(\lambda\bI+\widehat{\bA}_t)^{-1}} \nonumber \\
    & \le \frac{1}{\sqrt{\kappa_\sigma}} \Big( \Big\|\sum_{s=1}^t \epsilon_s\bz_s\Big\|_{(\lambda\bI+\widehat{\bA}_t)^{-1}} + \Big\|\lambda\btheta^\star\Big\|_{(\lambda\bI+\widehat{\bA}_t)^{-1}} \Big) \nonumber \\ 
    & \le \frac{1}{\sqrt{\kappa_\sigma}} \Big( \frac{1}{\sqrt{\kappa_\sigma}}\Big\|\sum_{s=1}^t \epsilon_s\bz_s\Big\|_{\bA_t^{-1}} + \Big\|\lambda\btheta^\star\Big\|_{(\lambda\bI+\widehat{\bA}_t)^{-1}} \Big) \nonumber \\
    & \le \frac{1}{\kappa_\sigma} \Big\|\sum_{s=1}^t \epsilon_s\bz_s\Big\|_{\bA_t^{-1}} +\sqrt{\frac{\lambda}{\kappa_\sigma}} \|\btheta^\star\|_2, \label{eq:confidence-bound-1}
\end{align}
where the first inequality is due to $\lambda\bI+\widehat{\bA}_t \preceq \bA_t$ in Eq.~(\ref{eq:A-hat-A}), the second inequality is due to the triangle inequality, the third inequality is due to $\kappa_\sigma \bA_t \preceq \lambda\bI+\widehat{\bA}_t$ in Eq.~(\ref{eq:A-hat-A}), the fourth inequality is due to $\lambda\bI+\widehat{\bA}_t \succeq \lambda\bI$. 

Now, we can apply Lemma~\ref{lemma:self-normalized-martingales} to bound the first term, and the fact that $\|\btheta^\star\|_2\le M$ to bound the second term in Eq.~(\ref{eq:confidence-bound-1}). As such, we further have
\begin{align*}
\|\btheta_t - \btheta^\star - \bb_t\|_{\bA_t} 
& \le \frac{1}{\kappa_\sigma} \Big(
    \nu\sqrt{2\log\frac{\det(\bA_t)^{1/2}\det(\lambda\bI)^{-1/2}}{\delta}} + \sqrt{\lambda}M
    \Big) \\
& \le \frac{1}{\kappa_\sigma} \Big(
    \nu\sqrt{2(d\log(1+Lt^2/\lambda) +\log(1/\delta))} + \sqrt{\lambda}M
    \Big),
\end{align*}
where we apply Lemma~\ref{lemma:representation-bound} and Lemma~\ref{lemma:feature-matrix-bound} for the last inequality.
\end{proof}

Now we are ready to prove the regret bound of Algorithm~\ref{alg:bandit}.

\begin{proof}[Proof of Theorem~\ref{theorem:neural-log-ucb-bound}]
We first apply Lemma~\ref{lemma:linearlisation} to decompose the instantaneous latent reward gap into different parts and bound them individually. Specifically, there exists $\bomega^\star \in \mathbb{R}^p$ such that
\begin{align}
    & h(\bx_{t,c_t^\star}) - h(\bx_{t,c_t}) \nonumber \\
    &= \btheta_0^\top[\bg(\bx_{t,c_t^\star};\bomega_0) - \bg(\bx_{t,c_t};\bomega_0)](\bomega^\star - \bomega_0) + \btheta^{\star\top}[f(\bx_{t,c_t^\star};\bomega_{t-1}) - f(\bx_{t,c_t};\bomega_{t-1})] \nonumber \\
    &= \btheta_0^\top[\bg(\bx_{t,c_t^\star};\bomega_0) - \bg(\bx_{t,c_t};\bomega_0)](\bomega^\star - \bomega_0) \nonumber \\
    & \quad + \btheta^{\top}_{t-1}[f(\bx_{t,c_t^\star};\bomega_{t-1}) - f(\bx_{t,c_t};\bomega_{t-1})] \nonumber \\
    & \quad - (\btheta_{t-1}-\btheta^\star)^\top[f(\bx_{t,c_t^\star};\bomega_{t-1}) - f(\bx_{t,c_t};\bomega_{t-1})]. \label{eq:utility-gap-decomposition}
\end{align}
Now, we upper bound these three terms. We bound the first term as below:
\begin{align}
    & \btheta_0^\top[\bg(\bx_{t,c_t^\star};\bomega_0) - \bg(\bx_{t,c_t};\bomega_0)](\bomega^\star - \bomega_0) \nonumber \\
    & \le \|\btheta_0\|_2 \cdot \|\bg(\bx_{t,c_t^\star};\bomega_0) - \bg(\bx_{t,c_t};\bomega_0)\|_F \cdot \|\bomega^\star - \bomega_0\|_2 \nonumber \\
    & \le \ell_{\text{Lip}} \|\btheta_0\|_2 \cdot \|\bx_{t,c_t^\star} - \bx_{t,c_t}\|_2 \cdot \|\bomega^\star - \bomega_0\|_2 \nonumber \\
    & \le \ell_{\text{Lip}} \cdot 2(2+\sqrt{d^{-1}\log(1/\delta)}) \cdot 2 \cdot \sqrt{(\bh-\tilde{\bh})^\top\bH^{-1}(\bh-\tilde{\bh})/m} \nonumber \\
    & \le C_4\ell_{\text{Lip}}m^{-1/2}\sqrt{d^{-1}\log(1/\delta)(\bh-\tilde{\bh})^\top\bH^{-1}(\bh-\tilde{\bh})}, \label{eq:ugd-1}
\end{align}
where $C_4>0$ is the absolute constant, the first inequality is due to Cauchy-Schwarz inequality, the second inequality is due to Assumption~\ref{assumption:gradient-lipschitz}. For the third inequality, we bound $\|\btheta_0\|$ based on its initialisation scheme, bound $\|\bx_{t,c_t^\star} - \bx_{t,c_t}\|_2$ due to Assumption~\ref{assumption:feature-vector}, and bound $\|\bomega^\star - \bomega_0\|_2$ due to Lemma~\ref{lemma:linearlisation}. 

For the second term, we use the definition of the upper confidence bound in Algorithm~\ref{alg:bandit}, i.e., $\btheta^{\top}_{t-1}f(\bx_{t,c_t^\star};\bomega_{t-1}) +\alpha_t \|f(\bx_{t,c_t^\star}; \bomega_{t-1})\|_{\bA_{t-1}^{-1}}  \le \btheta^{\top}_{t-1}f(\bx_{t,c_t};\bomega_{t-1}) +\alpha_t \|f(\bx_{t,c_t}; \bomega_{t-1})\|_{\bA_{t-1}^{-1}}$, to derive the following upper bound:
\begin{align}
    \btheta^{\top}_{t-1}[f(\bx_{t,c_t^\star};\bomega_{t-1}) - f(\bx_{t,c_t};\bomega_{t-1})] \le \alpha_t \|f(\bx_{t,c_t}; \bomega_{t-1})\|_{\bA_{t-1}^{-1}} - \alpha_t \|f(\bx_{t,c_t^\star}; \bomega_{t-1})\|_{\bA_{t-1}^{-1}}, \label{eq:ugd-2}
\end{align}
where $\alpha_t$ is the upper bound derived from Lemma~\ref{lemma:bias-corrected-confidence}.

The last term can be bounded as below.
\begin{align}
    & - (\btheta_{t-1}-\btheta^\star)^\top[f(\bx_{t,c_t^\star};\bomega_{t-1}) - f(\bx_{t,c_t};\bomega_{t-1})] \nonumber \\
    & = - (\btheta_{t-1}-\btheta^\star-\bb_{t-1})^\top[f(\bx_{t,c_t^\star};\bomega_{t-1}) - f(\bx_{t,c_t};\bomega_{t-1})] \nonumber \\ 
    & \quad - \bb_{t-1}^\top [f(\bx_{t,c_t^\star};\bomega_{t-1}) - f(\bx_{t,c_t};\bomega_{t-1})] \nonumber \\
    & \le \|\btheta_{t-1}-\btheta^\star-\bb_{t-1}\|_{\bA_{t-1}} \cdot \|f(\bx_{t,c_t^\star};\bomega_{t-1})\|_{\bA_{t-1}^{-1}} \nonumber \\
    & \quad + \|\btheta_{t-1}-\btheta^\star-\bb_{t-1}\|_{\bA_{t-1}} \cdot \|f(\bx_{t,c_t};\bomega_{t-1})\|_{\bA_{t-1}^{-1}} \nonumber \\
    & \quad + \|\bb_{t-1}\|_2 \cdot \|f(\bx_{t,c_t^\star};\bomega_{t-1}) - f(\bx_{t,c_t};\bomega_{t-1})\|_2 \nonumber \\
    & \le \alpha_t \|f(\bx_{t,c_t^\star};\bomega_{t-1})\|_{\bA_{t-1}^{-1}} + \alpha_t \|f(\bx_{t,c_t};\bomega_{t-1})\|_{\bA_{t-1}^{-1}} \nonumber \\
    & \quad + \|\bb_{t-1}\|_2 \cdot \|f(\bx_{t,c_t^\star};\bomega_{t-1}) - f(\bx_{t,c_t};\bomega_{t-1})\|_2, \label{eq:ugd-3}
\end{align}
where the first inequality is due to the Cauchy-Schwarz inequality, and the second inequality is due to Lemma~\ref{lemma:bias-corrected-confidence}.

Summarising Eq.~(\ref{eq:ugd-2}) and Eq.~(\ref{eq:ugd-3}), we can derive that
\begin{align}
    & \btheta^{\top}_{t-1}[f(\bx_{t,c_t^\star};\bomega_{t-1}) - f(\bx_{t,c_t};\bomega_{t-1})] - (\btheta_{t-1}-\btheta^\star)^\top[f(\bx_{t,c_t^\star};\bomega_{t-1}) - f(\bx_{t,c_t};\bomega_{t-1})] \nonumber \\
    & \le 2\alpha_t\|f(\bx_{t,c_t};\bomega_{t-1})\|_{\bA_{t-1}^{-1}} + \|\bb_{t-1}\|_2 \cdot \|f(\bx_{t,c_t^\star};\bomega_{t-1}) - f(\bx_{t,c_t};\bomega_{t-1})\|_2. \label{eq:ugd-2+3}
\end{align}

Now, we aim to further bound the second term in Eq.~(\ref{eq:ugd-2+3}). We first bound $\|\bb_{t-1}\|_2$. To apply Lemma~\ref{lemma:bounded-vector-inverse-A-bound}, we bound the $|\delta_s|$ as below:
\begin{align}
    |\delta_s|
    & = |\sigma(\btheta^{\star\top}\bz_s + b_s) - \sigma(\btheta^{\star\top}\bz_s)| \nonumber \\
    & \le L_\sigma|b_s| \nonumber \\
    & = L_\sigma|\btheta_0^\top \bg(\bx_{s,c_s};\bomega_0)(\bomega^\star-\bomega_0)| \nonumber \\
    & \le L_\sigma \|\btheta_0\|_2 \cdot \|\bg(\bx_{s,c_s};\bomega_0)\|_F \cdot \|\bomega^\star-\bomega_0\|_2 \nonumber \\
    & \le L_\sigma \cdot 2(2+\sqrt{d^{-1}\log(1/\delta)}) \cdot  C_1\sqrt{mLd} \cdot \sqrt{(\bh-\tilde{\bh})^\top\bH^{-1}(\bh-\tilde{\bh})/m} \nonumber \\
    & \le C_5 L_\sigma \sqrt{\log(1/\delta)L(\bh-\tilde{\bh})^\top\bH^{-1}(\bh-\tilde{\bh})},
\end{align}
where $C_5>0$ is an absolute constant, the first inequality is due to the definition of Lipschitz constant $L_\sigma$, the second inequality is due to triangle inequality, the third inequality is due to the initialisation scheme of $\btheta_0$, Lemma~\ref{lemma:linearlisation} and Lemma~\ref{lemma:neural-gradient-bound}. Then, we can derive that
\begin{align}
    \|\bb_{t-1}\|_2 
    & = \|(\lambda\bI+\widehat{\bA}_{t-1})^{-1} \big(\sum_{s=1}^{t-1} \delta_s\bz_s\big)\|_2 \nonumber \\
    & \le \frac{1}{\kappa_\sigma}\|\bA_{t-1}^{-1}\big(\sum_{s=1}^{t-1} \delta_s\bz_s\big)\|_2 \nonumber \\
    & \le \frac{2C_5L_\sigma d}{\kappa_\sigma}\sqrt{\log(1/\delta)L(\bh-\tilde{\bh})^\top\bH^{-1}(\bh-\tilde{\bh})}, \label{eq:ugd-2+3-2-1}
\end{align}
where the first inequality is due to the fact that $\lambda\bI+\widehat{\bA}_{t-1} \succeq \kappa_\sigma \bA_{t-1}$, the second inequality is due to Lemma~\ref{lemma:bounded-vector-inverse-A-bound}.

Next, we bound $\|f(\bx_{t,c_t^\star};\bomega_{t-1}) - f(\bx_{t,c_t};\bomega_{t-1})\|_2$ via Lemma~\ref{lemma:neural-linearilisation-bound}:
\begin{align}
    & \|f(\bx_{t,c_t^\star};\bomega_{t-1}) - f(\bx_{t,c_t};\bomega_{t-1})\|_2 \nonumber \\
    & = \Big\|f(\bx_{t,c_t^\star};\bomega_{t-1})-\bg(\bx_{t,c_t^\star};\bomega_0)(\bomega_{t-1}-\bomega_0) -f(\bx_{t,c_t};\bomega_{t-1})+\bg(\bx_{t,c_t};\bomega_0)(\bomega_{t-1}-\bomega_0) \nonumber \\
    & \quad + (\bg(\bx_{t,c_t};\bomega_0)-\bg(\bx_{t,c_t^\star};\bomega_0))(\bomega_{t-1}-\bomega_0)\Big\|_2 \nonumber \\
    & \le \|f(\bx_{t,c_t^\star};\bomega_{t-1})-\bg(\bx_{t,c_t^\star};\bomega_0)(\bomega_{t-1}-\bomega_0)\|_2 + \|f(\bx_{t,c_t};\bomega_{t-1})-\bg(\bx_{t,c_t};\bomega_0)(\bomega_{t-1}-\bomega_0)\|_2 \nonumber \\
    & \quad + \|\bg(\bx_{t,c_t};\bomega_0)-\bg(\bx_{t,c_t^\star};\bomega_0)\|_F \cdot \|\bomega_{t-1}-\bomega_0\|_2 \nonumber \\
    & \le 2C_3m^{-1/6}\sqrt{d\log m}\Big(\frac{t}{\lambda}\Big)^{2/3}L^3 + 2\ell_{\text{Lip}}\sqrt{t/(m\lambda)}, \label{eq:ugd-2+3-2-2}
\end{align}
where the first inequality is due to triangle inequality, the second inequality is due to Lemma~\ref{lemma:neural-linearilisation-bound}, Assumption~\ref{assumption:gradient-lipschitz}, and Lemma~\ref{lemma:neural-parameter-bound}.

Plugging Eq.~(\ref{eq:ugd-2+3-2-1}) and Eq.~(\ref{eq:ugd-2+3-2-2}) back into Eq.~(\ref{eq:ugd-2+3}) yields
\begin{align}
    & \btheta^{\top}_{t-1}[f(\bx_{t,c_t^\star};\bomega_{t-1}) - f(\bx_{t,c_t};\bomega_{t-1})] - (\btheta_{t-1}-\btheta^\star)^\top[f(\bx_{t,c_t^\star};\bomega_{t-1}) - f(\bx_{t,c_t};\bomega_{t-1})] \nonumber \\
    & \le 2\alpha_t\|f(\bx_{t,c_t};\bomega_{t-1})\|_{\bA_{t-1}^{-1}} + \frac{2C_5L_\sigma d}{\kappa_\sigma}\sqrt{\log(1/\delta)L(\bh-\tilde{\bh})^\top\bH^{-1}(\bh-\tilde{\bh})} \nonumber \\
    & \quad \cdot\Big(2C_3m^{-1/6}\sqrt{d\log m}\Big(\frac{t}{\lambda}\Big)^{2/3}L^3 + 2\ell_{\text{Lip}}\sqrt{t/(m\lambda)}\Big).
\end{align}

Now, we can derive the upper bound of the instantaneous latent reward gap as:
\begin{align}
    & h(\bx_{t,c_t^\star}) - h(\bx_{t,c_t}) \nonumber \\
    & \le 2\alpha_t\|f(\bx_{t,c_t};\bomega_{t-1})\|_{\bA_{t-1}^{-1}} + C_4\ell_{\text{Lip}}m^{-1/2}\sqrt{d^{-1}\log(1/\delta)(\bh-\tilde{\bh})^\top\bH^{-1}(\bh-\tilde{\bh})} \nonumber \\
    & \quad + \frac{2C_5L_\sigma d}{\kappa_\sigma}\sqrt{\log(1/\delta)L(\bh-\tilde{\bh})^\top\bH^{-1}(\bh-\tilde{\bh})} \nonumber \\
    & \quad \cdot\Big(2C_3m^{-1/6}\sqrt{d\log m}\Big(\frac{t}{\lambda}\Big)^{2/3}L^3 + 2\ell_{\text{Lip}}\sqrt{t/(m\lambda)}\Big).
\end{align}

Therefore, the regret of Neural-LinLogUCB can be bounded as:
\begin{align}
    R_T 
    & = \sum_{t=1}^T \big( \sigma(h(\bx_{t,c_t^\star})) - \sigma(h(\bx_{t,c_t})) \big) \nonumber \\
    & \le \sum_{t=1}^T L_\sigma|h(\bx_{t,c_t^\star}) - h(\bx_{t,c_t})| \nonumber \\
    & \le \sqrt{T\max_{t\in[T]}\alpha_t^2\sum_{t=1}^T\|f(\bx_{t,c_t};\bomega_{t-1})\|_{\bA_{t-1}^{-1}}^2} + C_4\ell_{\text{Lip}}m^{-1/2}T\sqrt{d^{-1}\log(1/\delta)} \|\bh-\tilde{\bh}\|_{\bH^{-1}} \nonumber \\
    & \quad + \frac{2C_5L_\sigma d}{\kappa_\sigma}\sqrt{\log(1/\delta)L} \|\bh-\tilde{\bh}\|_{\bH^{-1}} \nonumber \\
    & \quad \cdot \Big(2C_3m^{-1/6}T^{5/3}\sqrt{d\log m}\lambda^{-2/3}L^3 + 2\ell_{\text{Lip}}m^{-1/2}T^{3/2}\lambda^{-1/2}\Big) \nonumber \\
    & \le \frac{C_6}{\kappa_\sigma}\sqrt{Td\log(1+LT^2/\lambda^2)}  \Big(\nu\sqrt{d\log(1+LT^2/\lambda) +\log(1/\delta)} + \sqrt{\lambda}M\Big) \nonumber \\
    & \quad + \frac{C_7d}{\kappa_\sigma}\sqrt{\log(1/\delta)L} \|\bh-\tilde{\bh}\|_{\bH^{-1}} \nonumber \\
    & \quad \cdot \Big(m^{-1/6}T^{5/3}\sqrt{d\log m}\lambda^{-2/3}L^3 + \ell_{\text{Lip}}m^{-1/2}T^{3/2}\lambda^{-1/2}\Big), \nonumber \\
\end{align}
where the first inequality is due to the definition of Lipschitz constant, the second inequality is due to Cauchy's inequality, the third inequality comes from the fact that $\max_{t\in[T]}\alpha_t=\alpha_T$, Lemma~\ref{lemma:representation-bound} and Lemma~\ref{lemma:feature-matrix-bound}, $C_6,C_7>0$ are absolute constants independent of other parameters. 
\end{proof}

\clearpage
\begin{hyphenrules}{nohyphenation}
\setlength{\bibsep}{.5ex plus .8ex}
\addcontentsline{toc}{section}{References for Supplementary Notes}
\putbib
\end{hyphenrules}
\end{bibunit}

\end{document}